\newcommand{\BlackBox}{\rule{1.5ex}{1.5ex}}  
\newenvironment{proof}{\par\noindent{\bf Proof\ }}{\hfill\BlackBox\\[2mm]}
\newtheorem{example}{Example}
\newtheorem{theorem}{Theorem}
\newtheorem{lemma}[theorem]{Lemma}
\newtheorem{remark}[theorem]{Remark}
\newtheorem{definition}[theorem]{Definition}
\newif\ifdetails
\def\ba#1\ea{\begin{align*}#1\end{align*}} 
\def\banum#1\eanum{\begin{align}#1\end{align}} 
\newcommand{\Ber}{\mathop{\mathrm{Ber}}}
\newcommand{\cXstrat}{\varphi^{(\cX)}}
\newcommand{\Kstrat}{\psi^{(K+1)}}
\newcommand{\hit}{(H_t,I_t) \in \cC(h,i)}
\newcommand{\his}{(H_s,I_s) \in \cC(h,i)}
\renewcommand{\tilde}{\widetilde}
\renewcommand{\phi}{\varphi}
\renewcommand{\epsilon}{\varepsilon}
\newcommand{\eps}{\epsilon}
\let\P\undefined
\newcommand{\P}{\mathbb{P}}
\newcommand{\E}{\mathbb{E}}
\newcommand{\EE}[1]{\E\left[#1\right]}
\newcommand{\cA}{\mathcal{A}}
\newcommand{\cB}{\mathcal{B}}
\newcommand{\cN}{\mathcal{N}}
\newcommand{\cF}{\mathcal{F}}
\newcommand{\cG}{\mathcal{G}}
\newcommand{\cX}{\mathcal{X}}
\newcommand{\cI}{\mathcal{I}}
\newcommand{\cJ}{\mathcal{J}}
\newcommand{\cH}{\mathcal{H}}
\newcommand{\cP}{\mathcal{P}}
\newcommand{\cT}{\mathcal{T}}
\newcommand{\cC}{\mathcal{C}}
\newcommand{\R}{\mathbb{R}}
\newcommand{\real}{\mathbb{R}}
\newcommand{\ra}{\rightarrow}
\newcommand{\wh}{\widehat}
\renewcommand{\hat}{\widehat}
\newcommand{\eqdef}{\stackrel{\rm\scriptsize def}{=}}
\newcommand{\oneb}[1]{\mathbb{I}_{#1}}
\newcommand{\one}[1]{\oneb{\{#1\}}}
\newcommand{\onel}[1]{\oneb{\bigl\{ #1 \bigr\} }}
\newcommand{\fhi}{f_{h,i}^*}
\renewcommand{\leq}{\leqslant}
\renewcommand{\geq}{\geqslant}
\renewcommand{\le}{\leqslant}
\renewcommand{\ge}{\geqslant}
\newcommand{\RKernel}{M}
\newcommand{\Probs}{{\cal M}_1}
\newcommand{\Bandit}{{\cal B}}
\newcommand{\arm}{x}
\newcommand{\Arms}{\cX}
\newcommand{\Arm}{X}
\newcommand{\Reward}{Y}
\newcommand{\reward}{y}
\newcommand{\meanpayoff}{f}
\newcommand{\beq}{\begin{equation}}
\newcommand{\eeq}{\end{equation}}
\newcommand{\beqa}{\begin{eqnarray}}
\newcommand{\eeqa}{\end{eqnarray}}
\newcommand{\beqan}{\begin{eqnarray*}}
\newcommand{\eeqan}{\end{eqnarray*}}
\newenvironment{assumption}{\vspace{0.25cm} \noindent \textbf{Assumptions.} \em }{\vspace{0.25cm}}
\newcounter{assumption}
\newcommand{\theassumptionletter}{A}
\renewcommand{\theassumption}{\theassumptionletter\arabic{assumption}}
\newenvironment{ass*}[1][]{\begin{trivlist}\item[] %
 {\bf Assumption\  #1} }{
 \ifvmode\smallskip\fi\end{trivlist}}
\newenvironment{asss*}[1][]{\begin{trivlist}\item[] %
 {\bf Assumptions\  #1} }{
 \ifvmode\smallskip\fi\end{trivlist}}
 \newcommand{\aref}[1]{\ref{#1}}
\newcounter{assumptionV}
\renewcommand{\theassumptionV}{\theassumptionletter\arabic{assumptionV}'}
\newcommand{\cset}[2]{\left\{\,#1\,:\,#2\,\right\}}
\newcommand{\diam}{\mathop{\mathrm{diam}}}
\newcommand{\wrt}{w.r.t.}
\begin{document}

\title{$\cX$--Armed Bandits}

\author{
S{\'e}bastien Bubeck\\
Sequel Project, INRIA Lille \\
{\tt sebastien.bubeck@inria.fr}
\\ \\
R\'emi Munos \\
Sequel Project, INRIA Lille \\
{\tt remi.munos@inria.fr}
\\ \\
Gilles Stoltz \\
Ecole Normale Sup{\'e}rieure\footnote{This research was carried out
within the INRIA project CLASSIC hosted by
Ecole normale sup{\'e}rieure and CNRS.}, CNRS \\
\& \\
HEC Paris, CNRS, \\
{\tt gilles.stoltz@ens.fr}
\\ \\
Csaba Szepesv\'ari \\
University of Alberta, Department of Computing Science \\
{\tt szepesva@cs.ualberta.ca }}

\maketitle

\begin{abstract}
We consider a generalization of stochastic bandits where the set of arms, $\cX$,
is allowed to be a generic measurable space
and the mean-payoff function is ``locally Lipschitz'' with respect to a
dissimilarity function that is known to the decision maker.
Under this condition we construct an arm selection policy, called HOO (hierarchical optimistic optimization),
with improved regret bounds compared to previous results for a large class of problems.
In particular, our results imply that if $\cX$ is the unit hypercube in a Euclidean space
and the mean-payoff function has a finite number of global maxima
around which the behavior of the function is locally continuous with a known smoothness degree, then the expected regret of HOO is bounded up to a logarithmic factor by $\sqrt{n}$, i.e.,
the rate of growth of the regret is independent of the dimension of the space.
We also prove the minimax optimality of our algorithm when the dissimilarity is a metric.
Our basic strategy has quadratic computational complexity as a function of the number of time steps
and does not rely on the doubling trick.
We also introduce a modified strategy, which relies on the doubling trick but runs in linearithmic time.
Both results are improvements with respect to previous approaches.
\end{abstract}

\section{Introduction}

In the classical stochastic bandit problem a gambler tries to maximize his revenue
by sequentially playing one of a finite number of slot machines
that are associated with initially unknown
(and potentially different) payoff distributions \citep{Rob52}.
Assuming old-fashioned slot machines, the gambler pulls
the arms of the machines one by one in a sequential manner,
simultaneously learning about the machines' payoff-distributions and gaining actual monetary reward.
Thus, in order to maximize his gain, the gambler must choose the next arm by
taking into consideration both the urgency of gaining reward (``exploitation'')
and acquiring new information (``exploration'').

Maximizing the total cumulative payoff is equivalent to minimizing
the (total) {\em regret}, i.e., minimizing the difference between the total cumulative payoff of the
gambler and the one of another clairvoyant gambler who chooses the arm with the best mean-payoff in every
round. The quality of the gambler's strategy can be characterized as the rate of growth of his expected
regret with time. In particular, if this rate of growth is sublinear, the gambler in the long run plays
as well as the clairvoyant gambler. In this case the gambler's strategy is called Hannan consistent.

Bandit problems have been studied in the Bayesian framework \citep{gittins89}, as well as in the frequentist
parametric \citep{LR85,Agr95} and non-parametric settings \citep{ACF02}, and even
in non-stochastic scenarios \cite{ACFS02,CL06}. While in the Bayesian case the question is
whether the optimal actions can be computed efficiently, in the frequentist case the question is
how to achieve low rate of growth of the regret in the lack of prior information, i.e., it is a statistical question.
In this paper we consider the stochastic, frequentist, non-parametric setting.

Although the first papers studied bandits with a finite number of arms, researchers have soon realized
that bandits with infinitely many arms are also interesting,
as well as practically significant. One particularly important case is when the arms are identified
by a finite number of continuous-valued parameters,
resulting in {\em online optimization} problems over continuous finite-dimensional spaces.
Such problems are ubiquitous to operations research and control.
Examples are ``pricing a new product with uncertain demand in order to maximize revenue,
controlling the transmission power of a wireless communication system in a noisy
channel to maximize the number of bits transmitted per unit of power, and calibrating the temperature
or levels of other inputs to a reaction so as to maximize the yield of a chemical process'' \citep{Cop04}.
Other examples are optimizing parameters of schedules, rotational systems, traffic networks or online parameter tuning of numerical methods.
During the last decades numerous authors have investigated  such ``continuum-armed'' bandit
problems \citep{Agr95b,Kle04,AOS07,KSU08,Cop04}. A special case of interest, which forms a bridge between
the case of a finite number of arms and the continuum-armed setting, is formed by
bandit linear optimization, see \citep{AHA08} and the references therein.

In many of the above-mentioned problems, however, the natural domain of some of the optimization parameters
is a discrete set, while other parameters are still continuous-valued. For example, in the pricing problem
different product lines could also be tested while tuning the price, or in the case of transmission power
control different protocols could be tested while optimizing the power.
In other problems, such as in online sequential search, the parameter-vector to be optimized is an
infinite sequence over a finite alphabet \citep{CM07,BM10}.

The motivation for this paper is to handle all these various cases in a unified framework.
More precisely, we consider a general setting that allows us to study bandits with almost no restriction on the set of arms.
In particular, we allow the set of arms to be an arbitrary measurable space.
Since we allow non-denumerable sets, we shall assume that the gambler has some knowledge
about the behavior of the mean-payoff function (in terms of its local regularity around its maxima, roughly speaking).
This is because when the set of arms is uncountably infinite
and absolutely no assumptions are made on the payoff function, it is impossible to construct a strategy
that simultaneously achieves sublinear regret for all bandits problems (see, e.g., \cite[Corollary~4]{BMS10}).
When the set of arms is a metric space (possibly with the power of the continuum) previous works have assumed
either the global smoothness of the payoff function \citep{Agr95b,Kle04,KSU08,Cop04}
or local smoothness in the vicinity of the maxima \cite{AOS07}.
Here, smoothness means that the payoff function is either Lipschitz or H{\"o}lder continuous (locally or globally).
These smoothness assumptions are indeed
reasonable in many practical problems of interest.

In this paper, we assume that there exists a dissimilarity function that constrains the behavior of the mean-payoff function,
where a dissimilarity function is a measure of the discrepancy between two arms that is neither symmetric, nor reflexive,
nor satisfies the triangle inequality.
(The same notion was introduced simultaneously and independently of us by \cite[Section~4.4]{KSU08ext} under the name ``quasi-distance.'')
In particular, the dissimilarity function is assumed to locally set a bound on the decrease of the
mean-payoff function at each of its global maxima.
We also assume that the decision maker can construct a recursive covering of the space of
arms in such a way that the diameters of the sets in the covering shrink at a known geometric rate when measured
with this dissimilarity.

\paragraph{Relation to the literature.}
Our work generalizes and improves previous works on continuum-armed bandits.

In particular, \citet{Kle04} and \citet{AOS07} focused on one-dimensional problems, while we allow general spaces.
In this sense, the closest work to the present contribution
is that of \citet{KSU08}, who considered generic metric spaces assuming that the mean-payoff function
is Lipschitz with respect to the (known) metric of the space; its full version \cite{KSU08ext}
relaxed this condition and only requires that the mean-payoff function is Lipschitz at some maximum with
respect to some (known) dissimilarity.\footnote{
The present paper paper is a
concurrent and independent work with respect to the paper of Kleinberg,
Slivkins, and Upfal~\cite{KSU08ext}. An extended abstract~\cite{KSU08} of the latter was published in May 2008 at STOC'08,
while the NIPS'08 version~\cite{BMSS09} of the present paper was submitted at the beginning of June 2008.
At that time, we were not aware of the existence of the full version~\cite{KSU08ext}, which was released in September 2008.
}
 \citet{KSU08ext} proposed a novel algorithm
that achieves essentially the best possible regret bound in a minimax sense with respect to the environments studied,
as well as a much better regret bound if the mean-payoff function has a small ``zooming dimension''.

Our contribution furthers these works in two ways:
\begin{description}
\item[\ (i)] our algorithms, motivated by the recent successful tree-based
optimization algorithms \cite{KS06,GWMT06,CM07}, are easy to implement; 
\item[(ii)] we show that a version of our main algorithm is able to exploit the local properties of the mean-payoff function at its maxima only, which, as far as we know, was not investigated in the approach of \citet{KSU08,KSU08ext}.
\end{description}

The precise discussion of the improvements (and drawbacks) with respect to the papers by \citet{KSU08,KSU08ext}
requires the introduction of somewhat extensive notations and is therefore deferred to Section~\ref{sec:disc}.
However, in a nutshell, the following can be said.

First, by resorting to a hierarchical approach, we are able to avoid the use of the doubling trick,
as well as the need for the (covering) oracle, both of which the so-called zooming algorithm of~\citet{KSU08} relies on.
This comes at the cost of slightly more restrictive assumptions on the mean-payoff function, as well as a more involved analysis.
Moreover, the oracle is replaced by an {\em a priori} choice of a covering tree.
In standard metric spaces, such as the Euclidean spaces, such trees are trivial to construct,
 though, in full generality they may be difficult to obtain when their construction must start from (say) a distance function only.
We also propose
a variant of our algorithm that has smaller computational complexity of order $n \ln n$ compared to the quadratic complexity
$n^2$ of our basic algorithm.
However, the cheaper algorithm requires the doubling trick to achieve an anytime guarantee (just like the zooming algorithm).

Second, we are also able to weaken our assumptions and to consider only properties of the mean-payoff function in
the neighborhoods of its maxima; this leads to
regret bounds scaling as $\tilde O \bigl( \sqrt{n} \bigr)$
\footnote{We write $u_n = \tilde O(v_n)$ when $u_n = O(v_n)$ up to a logarithmic factor.}
when, e.g., the space is the unit hypercube and the mean-payoff function
has a finite number of global maxima $x^*$ around which it is locally
equivalent to a function $\Arrowvert x-x^*\Arrowvert^\alpha$ with some known degree $\alpha>0$.
Thus, in this case,
we get the desirable property that the rate of growth of the regret is independent of the dimensionality of the input space.
(Comparable dimensionality-free rates are obtained under different assumptions in~\cite{KSU08ext}.)

Finally, in addition to the strong theoretical guarantees,
we expect our algorithm to work well in practice
since the algorithm is very close to the recent, empirically very successful tree-search methods
from the games and planning literature
\citep{GeSi08:ICML,GeSi08:AAAI,schadd2008addressing,ChaWiHeUiBo08,finnsson2008simulation}.

\paragraph{Outline.}
The outline of the paper is as follows:
\begin{enumerate}
\item In Section~\ref{sec:setup} we formalize the $\cX$--armed bandit problem.
\item In Section~\ref{sec:HOO} we describe the basic strategy proposed,
called HOO (\emph{hierarchical optimistic optimization}).
\item We present the main results in Section~\ref{sec:mainresults}.
We start by specifying and explaining our assumptions (Section~\ref{sec:AssHOO})
under which various regret bounds are proved.
Then we prove a distribution-dependent bound for the basic version of HOO (Section~\ref{sec:RegrHOO}).
A problem with the basic algorithm is that its computational cost increases quadratically with the number of time steps.
Assuming the knowledge of the horizon, we thus propose
a computationally more efficient variant of the basic algorithm, called {\em truncated HOO}
and prove that it enjoys a regret bound identical to the one of the basic version (Section~\ref{sec:runningtime}) while
its computational complexity is only log-linear in the number of time steps.
The first set of assumptions constrains the mean-payoff function everywhere.
A second set of assumptions is therefore presented that puts constraints on the mean-payoff function
only in a small vicinity of its global maxima; we then propose another algorithm,
called {\em local-HOO},
which is proven to enjoy a regret again essentially similar to the one of the basic version
(Section~\ref{sec:localHOO}).
Finally, we prove the minimax optimality of HOO in metric spaces (Section~\ref{sec:minimax}).
\item In Section~\ref{sec:disc} we compare the results of this paper with previous works.
\end{enumerate}

\section{Problem setup} \label{sec:setup}

A {\em stochastic bandit problem} $\Bandit$ is a pair $\Bandit=(\cX,\RKernel)$,
where $\cX$ is a measurable space of arms
and $\RKernel$ determines the distribution of rewards associated with each arm.
We say that $\RKernel$ is a {\em bandit environment} on $\cX$. Formally, $\RKernel$ is an mapping
$\cX \ra \Probs(\real)$, where $\Probs(\real)$ is the space of probability distributions over the reals.
The distribution assigned to arm $\arm\in \Arms$ is denoted by $\RKernel_\arm$.
We require that for each arm $\arm\in \Arms$,
the distribution $\RKernel_\arm$ admits a first-order moment; we then denote by $\meanpayoff(\arm)$ its expectation (``mean payoff''),
\[
\meanpayoff(\arm) = \int \reward \,\, d\RKernel_\arm(\reward)\,.
\]
The mean-payoff function $\meanpayoff$ thus defined is assumed to be measurable. For simplicity, we shall also assume
that all $M_x$ have bounded supports, included in some fixed bounded interval\footnote{More generally, our results would also hold
when the tails of the reward distributions are uniformly sub-Gaussian.
}, say, the unit interval $[0,1]$.
Then, $f$ also takes bounded values, in $[0,1]$.

A decision maker (the gambler of the introduction)
that interacts with a stochastic bandit problem $\Bandit$ plays a game at discrete time steps according to the following rules.
In the first round the decision maker can select an arm $\Arm_1\in \Arms$ and receives a reward $\Reward_1$ drawn at random from $\RKernel_{\Arm_1}$.
In round $n>1$ the decision maker can select an arm $\Arm_n\in \Arms$ based on the information available up to time $n$, {i.e.}, $(\Arm_1,\Reward_1,\ldots,\Arm_{n-1},\Reward_{n-1})$, and receives a reward $\Reward_n$ drawn from $\RKernel_{\Arm_n}$, independently of $(\Arm_1,\Reward_1,\ldots,\Arm_{n-1},\Reward_{n-1})$ given $\Arm_n$. Note that  a decision maker may randomize his choice, but can only use information available up to the point in time when the choice is made.

Formally, a {\em strategy of the decision maker} in this game (``bandit strategy'') can be described by an infinite sequence
of measurable mappings, $\phi = (\phi_1,\phi_2,\ldots)$, where $\phi_n$ maps the space of past observations,
\[
\cH_n = \bigl( \cX \times [0,1] \bigr)^{n-1},
\]
to the space of probability measures over $\Arms$.
By convention, $\phi_1$ does not take any argument.
A strategy is called {\em deterministic} if for every $n$, $\phi_n$ is a Dirac distribution.

The {goal of the decision maker} is to maximize his expected cumulative reward.
Equivalently, the goal can be expressed as minimizing the expected cumulative regret, which is defined as follows.
Let
\[
f^* = \sup_{x\in \cX} f(x)
\]
be the best expected payoff in a single round.
At round $n$, the {\em cumulative regret} of a
decision maker playing $\Bandit$
is
\[
\wh{R}_n = n\,f^* - \sum_{t=1}^n Y_t,
\]
i.e., the difference between the maximum expected payoff in $n$ rounds and the actual total payoff.
In the sequel, we shall restrict our attention to the expected cumulative regret,
which is defined as the expectation $\E[\wh{R}_n]$ of the cumulative regret $\wh{R}_n$.

Finally, we define the cumulative {\em pseudo-regret} as
\[
R_n = n\,f^* - \sum_{t=1}^n f(X_t)\, ,
\]
that is, the actual rewards used in the definition of the regret are replaced by the mean-payoffs of the arms pulled.
Since (by the tower rule)
\[
\E \bigl[ Y_t \bigr] = \E \bigl[ \EE{Y_t|X_t} \bigr] = \E \bigl[ f(X_t) \bigr]\, ,
\]
the expected values $\E[\wh{R}_n]$ of the cumulative regret and
$\E[R_n]$ of the cumulative pseudo-regret are the same.
Thus, we focus below on the study of the behavior of $\E \bigl[ R_n \bigr]$.

\begin{remark}
As it is argued in \cite{BMS10}, in many real-world problems, the decision maker is not interested in his cumulative regret but rather in
its simple regret. The latter can be defined as follows. After $n$ rounds of play in a stochastic bandit problem $\cB$, the decision maker is asked to make a recommendation $Z_n \in \cX$ based on the $n$ obtained rewards $Y_1,\ldots,Y_n$.
The simple regret of this recommendation equals
\[
r_n = f^* - f(Z_n)\,.
\]
In this paper we focus on the cumulative regret $R_n$, but all the results can be readily extended to the simple regret
by considering the recommendation $Z_n = X_{T_n}$, where $T_n$ is drawn uniformly at random in $\{1,\hdots,n\}$. Indeed, in this case,
$$\E \bigl[ r_n \bigr] \leq \frac{\E \bigl[ R_n \bigr]}{n}\,,$$
as is shown in~\cite[Section 3]{BMS10}.
\end{remark}

\section{The Hierarchical Optimistic Optimization (HOO) strategy} \label{sec:HOO}

The HOO strategy (cf.\ Algorithm~\ref{alg:hoo}) incrementally builds an estimate of the mean-payoff function $\meanpayoff$ over $\cX$.
The core idea (as in previous works) is to estimate $\meanpayoff$ precisely around its maxima, while estimating it loosely in
other parts of the space $\cX$. To implement this idea, HOO maintains a binary tree whose nodes are
associated with measurable regions of the arm-space $\cX$ such that the regions associated with nodes
deeper in the tree (further away from the root) represent increasingly smaller subsets of $\cX$.
The tree is built in an incremental manner. At each node of the tree, HOO stores some statistics based on the
information received in previous rounds. In particular, HOO keeps track of the number of times
a node was
traversed up to round $n$ and the corresponding empirical average
of the rewards received so far. Based on these,
HOO assigns an optimistic estimate (denoted by $B$) to the maximum mean-payoff associated with each node.
These estimates are then used to select the next node to ``play''.
This is done by traversing the tree, beginning from the root, and always following the node with the highest $B$--value
(cf.\ lines~\ref{algline:treetraversalstart}--\ref{algline:treetraversalend} of Algorithm~\ref{alg:hoo}).
Once a node is selected, a point in the region associated with it is chosen (line~\ref{algline:choice})
and is sent to the environment.
Based on the point selected and the received reward, the tree is updated (lines~\ref{algline:update1}--\ref{algline:update2}).

\bigskip
The tree of coverings which HOO needs to receive as an input
is an infinite binary tree whose nodes are associated with subsets of $\cX$.
The nodes in this tree are indexed by pairs of integers $(h,i)$;
node $(h,i)$ is located at depth $h\ge 0$ from the root.
The range of the second index, $i$, associated with nodes at depth $h$ is restricted by
$1\le i \le 2^h$.
Thus, the root node is denoted by $(0,1)$.
By convention, $(h+1,2i-1)$ and $(h+1,2i)$ are used to refer to the two children of the node $(h,i)$.
Let $\cP_{h,i}\subset \cX$ be the region associated with node $(h,i)$.
By assumption, these regions are measurable and must satisfy
 the constraints
\begin{subequations}
\begin{align}
\label{eq:coverings}
\cP_{0,1} &= \Arms\,,  &\mbox{} \\
\label{eq:coverings-c}
\cP_{h,i} &= \cP_{h+1,2i-1} \cup \cP_{h,2i}\,, \quad
	      & \mbox{for all} \ h\ge 0 \ \mbox{and} \ 1\le i \le 2^{h}.
\end{align}
\end{subequations}
As a corollary, the regions $\cP_{h,i}$ at any level $h \geq 0$ cover the space $\cX$,
\[
\cX = \bigcup_{i = 1}^{2^h} \, \cP_{h,i}\,,
\]
explaining the term ``tree of coverings''.

In the algorithm listing
the recursive computation of the $B$--values (lines~\ref{algline:bvalueupdate1}--\ref{algline:bvalueupdate2})
makes a local copy of the tree; of course, this part of the algorithm could be implemented in various other ways.
Other arbitrary choices in the algorithm as shown here are how tie breaking in the node selection part is done (lines~\ref{algline:tiebreak1}--\ref{algline:tiebreak2}), or how a point in the region associated with the selected node is chosen (line~\ref{algline:choice}).
We note in passing that implementing these differently would not change our theoretical results.

\label{sec:HOO.algo}
\begin{algorithm}[p]
\ \\

\noindent
\textbf{Parameters:} \ Two real numbers $\nu_1 > 0$ and $\rho \in (0,1)$,
a sequence $(\cP_{h,i})_{h \geq 0, 1 \leq i \leq 2^h}$ of subsets of $\cX$
satisfying the conditions~\eqref{eq:coverings} and~\eqref{eq:coverings-c}. \\

\textbf{Auxiliary function} \textsc{Leaf}($\cT$): \ outputs a leaf of $\cT$. \\

\noindent
\textbf{Initialization:} \ $\cT= \bigl\{ (0,1) \bigr\}$ and $B_{1,2}=B_{2,2}=+\infty$. \\

\begin{algorithmic}[1]
\For{$n = 1, 2, \ldots$} \Comment{Strategy HOO in round $n\ge 1$}
\State $(h,i) \gets (0,1)$ \Comment{Start at the root}
\State $P \gets \{ (h,i) \}$ \Comment{$P$ stores the path traversed in the tree}
\While{$(h,i) \in \cT$} \Comment{Search the tree $\cT$} \label{algline:treetraversalstart}
	\If{$B_{h+1,2i-1}>B_{h+1,2i}$ } \Comment{Select the ``more promising'' child}
		\State $(h,i) \gets (h+1,2i-1)$
	\ElsIf{$B_{h+1,2i-1}<B_{h+1,2i}$ }
		\State $(h,i) \gets (h+1,2i)$
	\Else \Comment{Tie-breaking rule} \label{algline:tiebreak1}
		\State $Z \sim {\rm Ber}(0.5)$ \Comment{e.g., choose a child at random}
		\State $(h,i) \gets (h+1,2i-Z)$
	\EndIf \label{algline:tiebreak2}
	\State $P \gets P \cup \{ (h,i) \}$
\EndWhile \label{algline:treetraversalend}
\State $(H,I) \gets (h,i)$ \Comment{The selected node} \label{algline:choicenode}
\State \label{algline:choice}
Choose arm $\Arm$ in $\cP_{H,I}$ and play it \Comment{Arbitrary selection of an arm}
\State Receive corresponding reward $\Reward$
\State $\cT \gets \cT \cup \{(H,I)\}$ \Comment{Extend the tree} \label{algline:update1}
\ForAll{$(h,i)\in P$ } \Comment{Update the statistics $T$ and $\hat{\mu}$ stored in the path}
	\State $T_{h,i}\gets T_{h,i}+1$ \Comment{Increment the counter of node $(h,i)$}
	\State $\hat{\mu}_{h,i} \gets \bigl(1-1/T_{h,i}\bigr)\hat{\mu}_{h,i} + Y/T_{h,i}$
\Comment{Update the mean $\hat{\mu}_{h,i}$ of node $(h,i)$}
\EndFor
\ForAll{$(h,i)\in \cT$ } \Comment{Update the statistics $U$ stored in the tree}
	\State $U_{h,i}\gets \hat{\mu}_{h,i} + \sqrt{(2 \ln n)/{T_{h,i}}} + \nu_1 \rho^h$ \Comment{Update the $U$--value of node $(h,i)$}
\EndFor
\State $B_{H+1,2I-1} \gets + \infty$ \Comment{$B$--values of the children of the new leaf}
\State $B_{H+1,2I} \gets + \infty$
\State $\cT' \gets \cT$ \Comment{Local copy of the current tree $\cT$} \label{algline:bvalueupdate1}
\While{$\cT' \ne \bigl\{ (0,1) \bigr\}$} \Comment{Backward computation of the $B$--values}
    \State $(h,i) \gets \mbox{\textsc{Leaf}}(\cT')$ \Comment{Take any remaining leaf}
 	\State $B_{h,i} \gets \min\Bigl\{ U_{h,i}, \,\max\bigl\{ B_{h+1,2i-1}, B_{h+1,2i} \bigr\} \Bigr\}$ \Comment{Backward computation}
    \State $\cT' \gets \cT' \setminus \bigl\{ (h,i) \bigr\}$ \Comment{Drop updated leaf $(h,i)$}
\EndWhile \label{algline:update2} \label{algline:bvalueupdate2}
\EndFor
\end{algorithmic}
\caption{\quad The HOO strategy} \label{alg:hoo}
\end{algorithm}

\bigskip
To facilitate the formal study of the algorithm, we shall need some more notation.
In particular, we shall introduce time-indexed versions
($\cT_n$, $(H_n,I_n)$, $X_n$, $Y_n$, $\wh{\mu}_{h,i}(n)$, etc.) of the quantities
used by the algorithm.
The convention used is that the indexation by $n$ is used to indicate the value
taken at the end of the $n^{\rm th}$ round.

In particular, $\cT_n$ is used to denote the finite subtree stored by the algorithm at the end of round $n$.
Thus, the initial tree is $\cT_0 = \{(0,1)\}$ and it is expanded round after round as
\[
\cT_{n} = \cT_{n-1}  \cup \{(H_n,I_n)\}\,,
\]
where $(H_n,I_n)$ is the node selected in line~\ref{algline:choicenode}.
We call $(H_n,I_n)$ {\em the node played in round $n$}.
We use $X_n$ to denote the point selected by HOO in the region associated with the node played in round $n$,
 while $Y_n$ denotes the received reward.

Node selection works by comparing $B$--values and always choosing the node with the highest $B$--value.
The $B$--value, $B_{h,i}(n)$, at node $(h,i)$ by the end of round $n$  is an estimated upper bound on the mean-payoff function
at node $(h,i)$.
To define it we first need to introduce the average of the rewards received in rounds
when some descendant of node $(h,i)$ was chosen (by convention, each node is a
descendant of itself):
\[
\wh{\mu}_{h,i}(n) = \frac{1}{T_{h,i}(n)} \, \sum_{t=1}^{n} \, Y_t \, \oneb{ \{ \hit \} }\,.
\]
Here, $\cC(h,i)$ denotes the set of all descendants of a node $(h,i)$ in the infinite tree,
\[
\cC(h,i) = \bigl\{ (h,i) \bigr\} \cup \cC(h+1,2i-1) \cup \cC(h+1,2i)\,,
\]
and
$T_{h,i}(n)$ is the number of times a descendant of $(h,i)$ is played up to and including round $n$, that is,
\[
T_{h,i}(n) = \sum_{t=1}^{n} \oneb{ \{ \hit \} }\,.
\]
A key quantity determining $B_{h,i}(n)$ is $U_{h,i}(n)$,
an initial estimate of the maximum of the mean-payoff
function in the region $\cP_{h,i}$ associated with node $(h,i)$:
\begin{equation}
\label{eq:U}
U_{h,i}(n) =
\begin{cases}
\wh{\mu}_{h,i}(n) + \displaystyle{\sqrt{\frac{2 \ln n}{T_{h,i}(n)}}} + \nu_1 \rho^h, & \text{if } T_{h,i}(n)>0; \vspace{.15cm} \\
+\infty, & \text{otherwise}.
\end{cases}
\end{equation}
In the expression corresponding to the case $T_{h,i}(n)>0$,
 the first term added to the average of rewards
 accounts for the uncertainty arising from the randomness of the rewards
 that the average is based on,
 while the second term, $\nu_1 \rho^h$,
 accounts for the maximum possible variation of the mean-payoff function over the region $\cP_{h,i}$.
The actual bound on the maxima used in HOO is defined recursively by
\[
B_{h,i}(n) =
\begin{cases}
\min\Bigl\{ U_{h,i}(n), \,\max\bigl\{ B_{h+1,2i-1}(n), B_{h+1,2i}(n) \bigr\} \Bigr\},
& \text{if } (h,i) \in \cT_{n};\\
+\infty, & \text{otherwise}.
\end{cases}\nonumber
\]
The role of $B_{h,i}(n)$ is to put a tight, optimistic, high-probability
upper bound on the best mean-payoff that can be achieved in the region $\cP_{h,i}$.
By assumption, $\cP_{h,i} = \cP_{h+1,2i-1} \cup \cP_{h+1,2i}$.
Thus, assuming that $B_{h+1,2i-1}(n)$ (resp., $B_{h+1,2i}(n)$) is a valid upper bound
for region $\cP_{h+1,2i-1}$ (resp., $\cP_{h+1,2i}$),
we see that $\max\bigl\{ B_{h+1,2i-1}(n), B_{h+1,2i}(n) \bigr\} $ must be a valid upper bound
for region $\cP_{h,i}$. Since $U_{h,i}(n)$ is another valid upper bound for region $\cP_{h,i}$,
we get a tighter (less overoptimistic) upper bound by taking the minimum of these bounds.

Obviously, for leafs $(h,i)$ of the tree $\cT_n$, one has $B_{h,i}(n) = U_{h,i}(n)$,
while close to the root one may expect that $B_{h,i}(n) < U_{h,i}(n)$; that is,
the upper bounds close to the root are expected to be less biased than the
ones associated with nodes farther away from the root.

\bigskip
Note that at the beginning of round $n$, the algorithm uses $B_{h,i}(n-1)$ to select the node $(H_n,I_n)$
to be played (since $B_{h,i}(n)$ will only be available at the end of round $n$).
It does so by following a path from the root node to an inner node with only one child or a leaf
and finally considering a child $(H_n,I_n)$ of the latter; at each node of the path, the child with
highest $B$--value is chosen, till the node $(H_n,I_n)$ with infinite $B$--value is reached.

\paragraph{Illustrations.}
Figure~\ref{fig:tree} illustrates the computation done by HOO in round $n$, as well as the correspondence between
the nodes of the tree constructed by the algorithm and their associated regions.
Figure~\ref{fig:tree2} shows trees built by running HOO for a specific environment.
\begin{figure}[t]
\begin{center}
\includegraphics[scale=0.4]{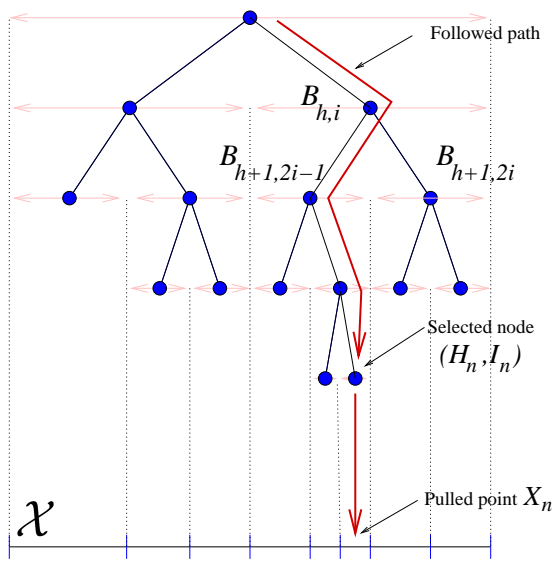}
\caption{Illustration of the node selection procedure in round $n$. The tree represents $\cT_{n}$.
In the illustration, $B_{h+1,2i-1}(n-1) > B_{h+1,2i}(n-1)$, therefore, the selected path included the node $(h+1,2i-1)$
rather than the node $(h+1,2i)$.}
\label{fig:tree}
\end{center}
\end{figure}
\begin{figure}[t]
\begin{center}
\includegraphics[scale=0.4]{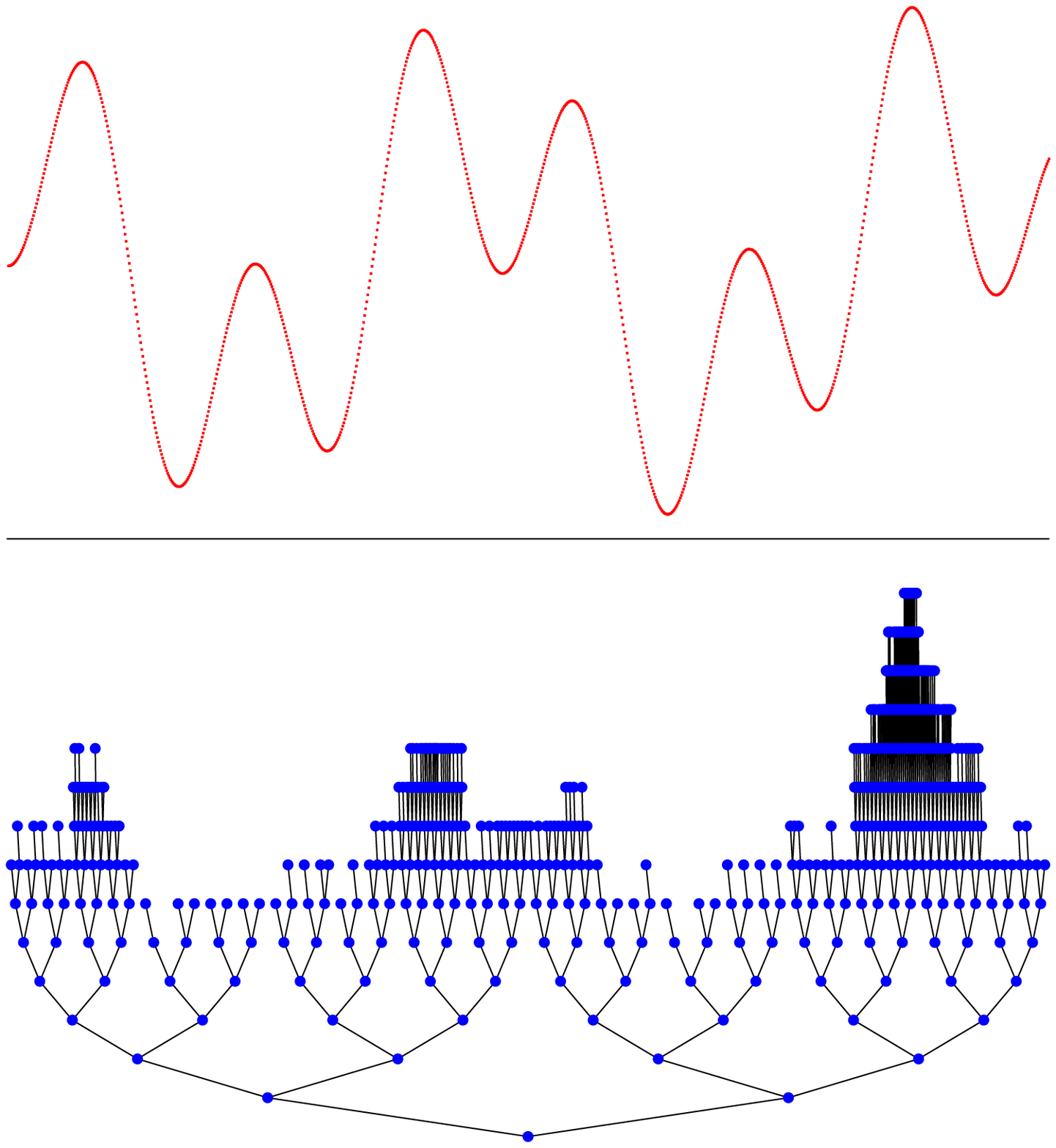} \hspace{2cm}
\includegraphics[scale=0.4]{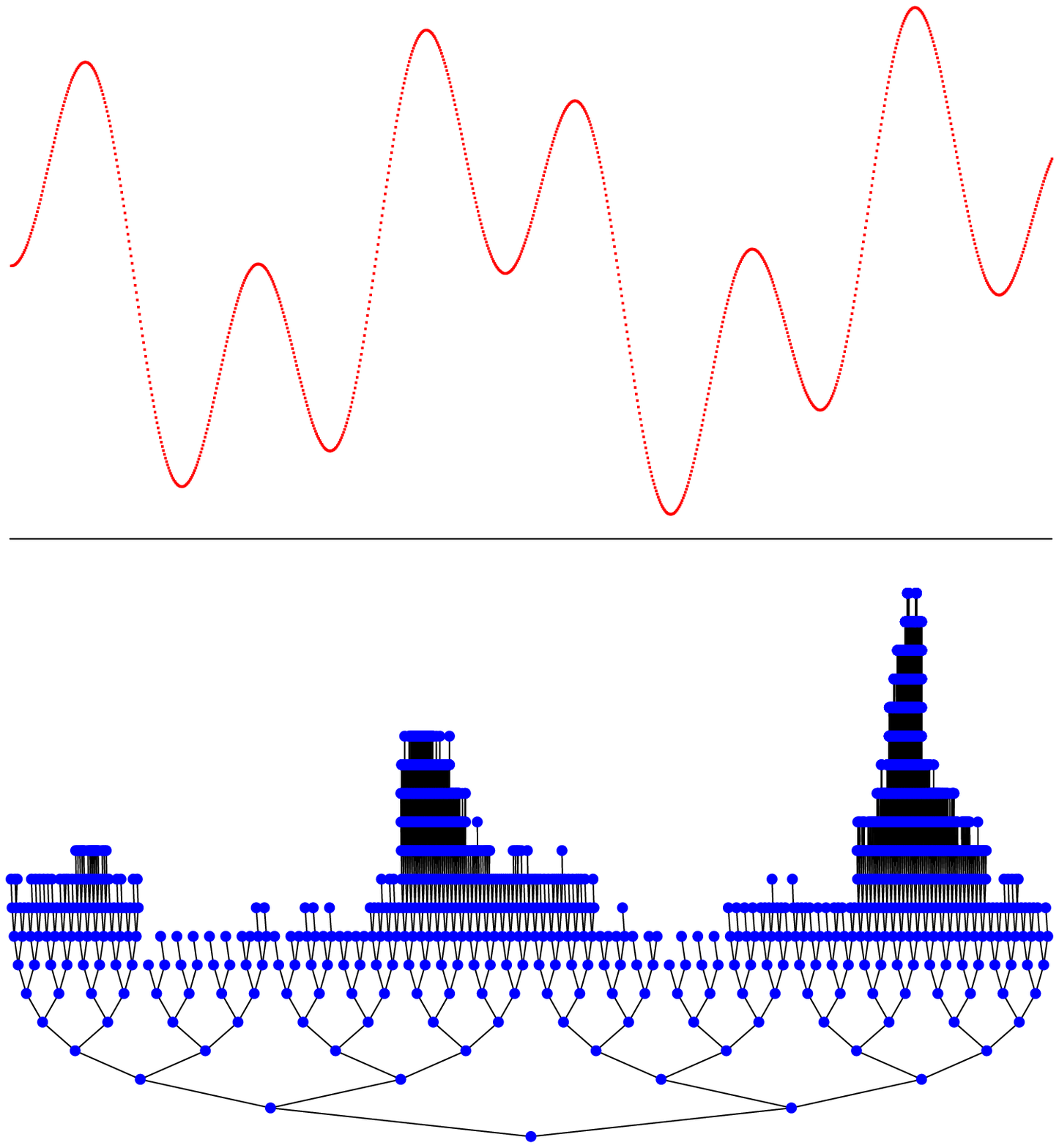}
\caption{The trees (bottom figures) built by HOO after 1,000 (left) and 10,000 (right) rounds.
The mean-payoff function (shown in the top part of the figure) is
$x \in [0,1] \, \longmapsto \, 1/2 \bigl( \sin(13 x) \sin(27 x) +1 \bigr)$;
the corresponding payoffs are Bernoulli-distributed.
The inputs of HOO are as follows:  the tree of coverings is formed by all dyadic intervals,
$\nu_1=1$ and $\rho=1/2$.
The tie-breaking rule is to choose a child at random (as shown in the Algorithm~\ref{alg:hoo}),
while the points in $\cX$ to be played are chosen as the centers of the dyadic intervals.
Note that the tree is extensively refined where the mean-payoff function is near-optimal,
while it is much less developed in other regions.
}
\label{fig:tree2}
\end{center}
\end{figure}

\paragraph{Computational complexity.}
At the end of round $n$, the size of the active tree $\cT_{n}$ is at most $n$, making
the storage requirements of HOO linear in $n$.
In addition, the statistics and $B$--values of all nodes in the active tree
need to be updated, which thus takes  time $O(n)$.
HOO runs in time $O(n)$ at each round $n$, making the algorithm's total running time up to round $n$ quadratic in $n$.
In Section~\ref{sec:runningtime} we
modify HOO so that if the time horizon $n_0$ is known in advance, the total running time is $O(n_0 \ln n_0)$,
while the modified algorithm will be shown to enjoy essentially the same regret bound as the original version.

\section{Main results} \label{sec:mainresults}

We start by describing and commenting on the assumptions that we need to analyze
the regret of HOO. This is followed by stating the first upper bound, followed by some
improvements on the basic algorithm. The section is finished by the statement of
our results on the minimax optimality of HOO.

\subsection{Assumptions}
\label{sec:AssHOO}

The main assumption will concern the ``smoothness'' of the mean-payoff function.
However, somewhat unconventionally, we shall use a notion of smoothness that is built
around dissimilarity functions rather than distances,
allowing us to deal with function classes of
highly different smoothness degrees in a unified manner.
Before stating our smoothness assumptions, we
define the notion of a dissimilarity function and some associated concepts.

\begin{definition}[Dissimilarity]
A {\em dissimilarity} $\ell$ over $\cX$ is a non-negative mapping $\ell:\cX^2 \to \R$
satisfying $\ell(x,x) = 0$ for all $x \in \cX$.
\end{definition}
Given a dissimilarity $\ell$, the {\em diameter} of a subset $A$ of $\cX$ as measured by $\ell$ is defined by
\[
\diam(A) = \sup_{x,y \in A} \ell(x,y)\,,
\]
while the {\em $\ell$--open ball} of $\cX$ with radius $\eps>0$ and center $x \in \cX$
is defined by
\[
\cB(x,\eps) = \cset{ y\in \cX }{ \ell(x,y)<\eps }\,.
\]
Note that the dissimilarity $\ell$ is only used in the theoretical analysis of HOO;
the algorithm does not require $\ell$ as an explicit input. However, when choosing its
parameters (the tree of coverings and the real numbers $\nu_1 > 0$ and $\rho < 1$) for the (set of) two assumptions below to be satisfied,
the user of the algorithm probably has in mind a given dissimilarity.

However, it is also natural to wonder what is the class of functions for
which the algorithm (given a fixed tree) can achieve non-trivial regret bounds;
a similar question for regression was investigated e.g., by \citet{Yang07}.
We shall indicate below how to construct a subset of such a class,
right after stating our assumptions connecting the tree, the dissimilarity, and the environment (the mean-payoff function).
Of these, Assumption~\ref{ass:weaklip} will be interpreted, discussed, and equivalently reformulated
below into~(\ref{eq:A2eqv}), a form that might be more intuitive.
The form (\ref{eq:weak.Lipschitz2}) stated below will turn out to be the most useful one in the proofs.

\begin{asss*}
Given the parameters of HOO, that is, the real numbers $\nu_1 > 0$ and $\rho \in (0,1)$ and the tree of coverings $(\cP_{h,i})$,
there exists a dissimilarity function $\ell$ such that the following two assumptions are satisfied.
\begin{enumerate}
\refstepcounter{assumption}
\item[\theassumption.]
  \label{ass:shrinking}
  There exists $\nu_2 > 0$ such that
  for all integers $h \geq 0$,
  \begin{enumerate}
    \item $\diam(\cP_{h,i}) \le \nu_1 \rho^h$ for all $i = 1,\ldots,2^h$;
    \item for all $i = 1,\ldots,2^h$, there exists $\arm^\circ_{h,i}\in \cP_{h,i}$
     such that
     \[
     \cB_{h,i} \eqdef \cB \bigl(\arm^\circ_{h,i},\,\nu_2 \rho^h\bigr) \subset \cP_{h,i}~;
     \]
    \item $\cB_{h,i} \cap \cB_{h,j} = \emptyset$ for all $1\le i<j\le 2^h$.
  \end{enumerate}
\refstepcounter{assumption}
\setcounter{assumptionV}{\value{assumption}}
\item[\theassumption.]
\label{ass:weaklip}
The mean-payoff function $f$ satisfies
that for all $x,y\in \cX$,
\begin{equation}
\label{eq:weak.Lipschitz2}
f^*-f(y) \leq  f^*-f(x) + \max\bigl\{ f^*-f(x), \, \ell(x,y) \bigr\}\,.
\end{equation}
\end{enumerate}
\end{asss*}

We show next how a tree induces in a natural way first a dissimilarity and then a class of environments.
For this, we need to assume that the tree of coverings $(\cP_{h,i})$ --in addition to~\eqref{eq:coverings} and~\eqref{eq:coverings-c}--
is such that the subsets $\cP_{h,i}$ and $\cP_{h,j}$ are disjoint whenever $1\le i < j \le 2^h$ and that none of them is empty.
Then, each $x \in \cX$ corresponds to a unique path in the tree, which can be represented as an infinite binary sequence
$x_0 x_1 x_2 \ldots$, where
\begin{eqnarray*}
x_0 & = & \onel{ x \in \cP_{1,1+1} }\,, \\
x_{1} & = & \onel{ x\in \cP_{2,1+(2x_0+1)}}\,, \\
x_{2} & = & \onel{ x\in \cP_{3,1+(4x_0+2x_1+1)}}\,, \\
\ldots
\end{eqnarray*}
For points $x,y \in \cX$ with respective representations $x_0 x_1 \ldots$ and $y_0 y_1 \ldots$, we let
\[
\ell(x,y) = (1-\rho) \nu_1 \, \sum_{h=0}^\infty \one{x_h\not=y_h} \rho^{h}\,.
\]
It is not hard to see that this dissimilarity satisfies~\ref{ass:shrinking}.
Thus, the associated class of environments ${\cal C}$ is formed by those with mean-payoff functions
satisfying~\ref{ass:weaklip} with the so-defined dissimilarity.
This is a ``natural class'' underlying the tree for which our tree-based algorithm
can achieve non-trivial regret. (However, we do not know if this is the largest such class.)
\newline

In general, Assumption~\ref{ass:shrinking} ensures that the regions in the
tree of coverings $(\cP_{h,i})$ shrink exactly at a geometric rate.
The following example shows how to satisfy~\ref{ass:shrinking} when the domain $\Arms$
is a $D$--dimensional hyper-rectangle and
the dissimilarity is some positive power of the Euclidean (or supremum) norm.

\begin{example} \label{ex:1}
Assume that $\cX$ is a $D$-dimension hyper-rectangle and consider the dissimilarity
$\ell(x,y) = b \|x-y\|_2^a$, where $a > 0$ and $b>0$ are real numbers and $\|\cdot\|_2$ is the Euclidean norm.
Define the tree of coverings $(\cP_{h,i})$ in the following inductive way: let $\cP_{0,1} = \cX$.
Given a node $\cP_{h,i}$, let $\cP_{h+1,2i-1}$ and $\cP_{h+1,2i}$ be obtained from the hyper-rectangle
$\cP_{h,i}$ by splitting it in the middle along its longest side
(ties can be broken arbitrarily).

We now argue that Assumption~\ref{ass:shrinking} is satisfied.
With no loss of generality we take $\cX = [0,1]^D$. Then, for all integers $u \geq 0$ and $0 \leq k \leq D-1$,
\[
\diam(\cP_{u D + k, 1}) = b \left( \frac{1}{2^u} \sqrt{D - \frac{3}{4}\,k}\right)^a \leq b \left( \frac{\sqrt{D}}{2^u} \right)^{\! a}\,.
\]
It is now easy to see that
Assumption~\ref{ass:shrinking} is satisfied for the indicated dissimilarity,
e.g., with the choice of the parameters
$\rho = 2^{-a/D}$ and $\nu_1 = b \, \bigl(2 \sqrt{D} \bigr)^{a}$ for HOO,
and the value $\nu_2 = b / 2^{a}$.
\end{example}

\begin{example} \label{ex:1ctd}
In the same setting, with the same tree of coverings $(\cP_{h,i})$ over $\cX = [0,1]^D$, but
now with the dissimilarity $\ell(x,y) = b \|x - y\|_{\infty}^a$, we get that
for all integers $u \geq 0$ and $0 \leq k \leq D-1$,
\[
\diam(\cP_{u D + k, 1}) = b \left( \frac{1}{2^u} \right)^a\,.
\]
This time, Assumption~\ref{ass:shrinking} is satisfied,
e.g., with the choice of the parameters
$\rho = 2^{-a/D}$ and $\nu_1 = b \, 2^a$ for HOO,
and the value $\nu_2 = b / 2^{a}$.
\end{example}

The second assumption, \ref{ass:weaklip}, concerns the environment;
when Assumption~\ref{ass:weaklip} is satisfied,
we say that $f$ is {\em weakly Lipschitz} with respect to (w.r.t.) $\ell$.
The choice of this terminology follows from the fact
that if $f$ is $1$--Lipschitz \wrt\ $\ell$, i.e., for all $x,y \in \cX$,
one has $|f(x)-f(y)|\leq \ell(x,y)$, then it is also weakly Lipschitz \wrt\ $\ell$.

On the other hand, weak Lipschitzness is a milder requirement. It
implies local (one-sided) $1$--Lipschitzness at any  global
maximum, since at any arm $x^*$ such that $f(x^*) = f^*$,
the criterion (\ref{eq:weak.Lipschitz2}) rewrites to $f(x^*) - f(y) \leq \ell(x^*,y)$.
In the vicinity of other arms $x$, the constraint is milder as the arm $x$ gets worse
(as $f^*-f(x)$ increases) since the condition (\ref{eq:weak.Lipschitz2}) rewrites to
\begin{equation}
\label{eq:A2eqv}
\forall \, y \in \cX, \qquad f(x) - f(y) \leq \max\bigl\{ f^*-f(x), \, \ell(x,y) \bigr\}\,.
\end{equation}

Here is another interpretation of these two facts; it will be useful when considering
local assumptions in Section~\ref{sec:localHOO} (a weaker set of assumptions).
First, concerning the behavior around global maxima, Assumption~\ref{ass:weaklip} implies that
for any set $\cA \subset \cX$ with $\sup_{x \in \cA} f(x) = f^*$,
\begin{equation} \label{eq:WL1}
f^*-\inf_{x \in \cA} f(x) \leq  \diam(\cA).
\end{equation}
Second, it can be seen that Assumption~\ref{ass:weaklip} is equivalent\footnote{That
Assumption~\ref{ass:weaklip} implies (\ref{eq:WL2}) is immediate; for the converse, it suffices to
consider, for each $y \in \cX$, the sequence
\[
\eps_n = \Bigl( \ell(x,y) - \bigl( f^* - f(x) \bigr) \Bigr)_+ + 1/n\,,
\]
where $( \, \cdot \, )_+$ denotes the nonnegative part.}
to the following property: for all $x \in \cX$ and $\epsilon \geq 0$,
\begin{equation} \label{eq:WL2}
\cB \bigl(x, \, f^*-f(x) + \epsilon \bigr) \, \subset \, \cX_{2 \bigl( f^*-f(x) \bigr) + \epsilon}
\end{equation}
where
\[
\cX_\eps = \bigl\{ x\in \cX : f(x) \geq f^*-\eps \bigr\}
\]
denotes the set of {\em $\epsilon$--optimal arms}. This second property essentially
states that there is no sudden and large drop in the mean-payoff function around
the global maxima (note that this property can be satisfied even for discontinuous functions).

Figure~\ref{fig:WL} presents an illustration of the two properties discussed above.
\begin{figure}[t]
\begin{center}
\includegraphics[scale=0.4]{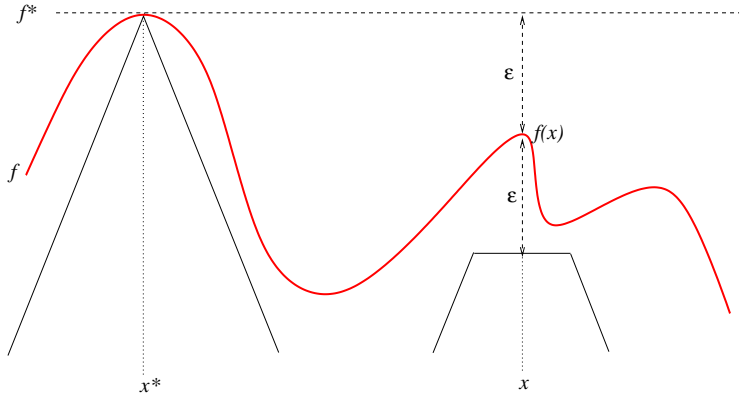}
\caption{Illustration of the property of weak Lipschitzness (on the real line and for
the distance $\ell(x,y) = |x-y|$). Around the optimum $x^*$ the values $f(y)$ should be above $f^* - \ell(x^*,y)$.
Around any $\epsilon$--optimal point $x$ the values
$f(y)$ should be larger than $f^*-2\epsilon$ for $\ell(x,y) \leq \epsilon$
and larger than $f(x)-\ell(x,y)$ elsewhere.}
\label{fig:WL}
\end{center}
\end{figure}

\bigskip
Before stating our main results, we provide a straightforward, though useful
consequence of Assumptions~\ref{ass:shrinking} and~\ref{ass:weaklip}, which
should be seen as an intuitive justification for the third term in (\ref{eq:U}).

For all nodes $(h,i)$, let
\[
f^*_{h,i} = \sup_{x\in \cP_{h,i}} f(x) \qquad
\mbox{and} \qquad
\Delta_{h,i} = f^* - f^*_{h,i}\,.
\]
$\Delta_{h,i}$ is called the {\em suboptimality factor} of node $(h,i)$.
Depending whether it is positive or not,
a node $(h,i)$ is called \emph{suboptimal} ($\Delta_{h,i} > 0$) or \emph{optimal} ($\Delta_{h,i} = 0$).

\begin{lemma}
\label{lem:gooddomains}
Under Assumptions~\ref{ass:shrinking} and~\ref{ass:weaklip},
if the suboptimality factor $\Delta_{h,i}$ of a region $\cP_{h,i}$ is bounded by $c \nu_1 \rho^h$ for some $c \geq 0$,
then all arms in $\cP_{h,i}$ are $\max\{2c,c+1\} \, \nu_1 \rho^h$--optimal, that is,
\[
\cP_{h,i} \subset \cX_{ \max\{2c,c+1\} \, \nu_1 \rho^h }\,.
\]
\end{lemma}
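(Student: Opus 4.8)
The plan is to prove the containment pointwise: I will fix an arbitrary arm $y \in \cP_{h,i}$ and show directly that $f^* - f(y) \leq \max\{2c,c+1\}\,\nu_1\rho^h$, which is exactly the statement $y \in \cX_{\max\{2c,c+1\}\,\nu_1\rho^h}$. The two ingredients are the geometric diameter control from part~(a) of Assumption~\ref{ass:shrinking}, namely $\diam(\cP_{h,i}) \leq \nu_1\rho^h$, and the weak Lipschitz inequality in its raw form~\eqref{eq:weak.Lipschitz2}. Parts~(b) and~(c) of Assumption~\ref{ass:shrinking}, as well as the reformulations~\eqref{eq:A2eqv}--\eqref{eq:WL2}, will not be needed; the raw form is what handles optimal and suboptimal nodes in one stroke, whereas the consequence~\eqref{eq:WL1} only covers the case $\Delta_{h,i}=0$.

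The key step is to compare $y$ with a \emph{second} arm $x$ taken from the same cell $\cP_{h,i}$. For any such $x$, both points lie in $\cP_{h,i}$, so $\ell(x,y) \leq \diam(\cP_{h,i}) \leq \nu_1\rho^h$, and~\eqref{eq:weak.Lipschitz2} gives
\[
f^* - f(y) \;\leq\; \bigl(f^*-f(x)\bigr) + \max\bigl\{ f^*-f(x),\, \nu_1\rho^h \bigr\}.
\]
The right-hand side is a continuous, nondecreasing function of the single quantity $f^*-f(x)$, so I would tighten the bound by taking the infimum over $x \in \cP_{h,i}$. Since $\inf_{x\in\cP_{h,i}}\bigl(f^*-f(x)\bigr) = f^* - \sup_{x\in\cP_{h,i}} f(x) = \Delta_{h,i}$, monotonicity lets the infimum pass inside, yielding
\[
f^* - f(y) \;\leq\; \Delta_{h,i} + \max\bigl\{ \Delta_{h,i},\, \nu_1\rho^h \bigr\}.
\]

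It then remains to insert the hypothesis $\Delta_{h,i} \leq c\,\nu_1\rho^h$ and simplify. Monotonicity of the right-hand side in $\Delta_{h,i}$ gives $f^*-f(y) \leq c\,\nu_1\rho^h + \max\{c,1\}\,\nu_1\rho^h = \bigl(c + \max\{c,1\}\bigr)\nu_1\rho^h$, and a one-line case split ($c \leq 1$ versus $c \geq 1$) identifies $c + \max\{c,1\}$ with $\max\{2c,c+1\}$, completing the argument. I do not anticipate a genuine obstacle; the only point needing a little care is that the supremum defining $f^*_{h,i}$ need not be attained, so one cannot simply substitute a maximizer of $f$ over the cell. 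This is precisely why I phrase the middle step as an infimum over all of $\cP_{h,i}$ combined with the monotonicity of the bound, which recovers the limiting value $\Delta_{h,i}$ without assuming attainment.
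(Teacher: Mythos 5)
Your proof is correct and follows essentially the same route as the paper's: both compare $y$ against near-optimal points of $\cP_{h,i}$ using the weak Lipschitz inequality~\eqref{eq:weak.Lipschitz2} together with the diameter bound of Assumption~\ref{ass:shrinking}, and both handle the possible non-attainment of the supremum $f^*_{h,i}$ by a limiting argument. The paper does this with $\delta$--near-maximizers and $\delta \to 0$, while you take an infimum and invoke continuity and monotonicity of the bound in $f^*-f(x)$; the two devices are interchangeable.
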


\begin{proof}
For all $\delta > 0$, we denote by $x^*_{h,i}(\delta)$ an element of $\cP_{h,i}$ such that
\[
f \bigl( x^*_{h,i}(\delta) \bigr) \geq \fhi - \delta = f^* - \Delta_{h,i} - \delta\,.
\]
By the weak Lipschitz property (Assumption~\aref{ass:weaklip}), it then follows that for all $y \in \cP_{h,i}$,
\begin{multline}
\nonumber
f^*-f(y) \leq  f^*- f \bigl( x^*_{h,i}(\delta) \bigr) + \max\Bigl\{ f^*- f \bigl( x^*_{h,i}(\delta) \bigr),
\, \ell \bigl( x^*_{h,i}(\delta), \,\, y \bigr) \Bigr\} \\
\leq \Delta_{h,i} + \delta + \max \bigl\{ \Delta_{h,i} + \delta, \,\, \diam \cP_{h,i} \bigr\}\,.
\end{multline}
Letting $\delta \to 0$ and substituting the bounds on the suboptimality and on the diameter
of $\cP_{h,i}$ (Assumption~A1) concludes the proof.
\end{proof}

\subsection{Upper bound for the regret of HOO}
\label{sec:RegrHOO}

\citeauthor{AOS07} \cite[Assumption~2]{AOS07}
observed that the regret of a continuum-armed bandit algorithm
should depend on how fast the volumes of the sets of $\epsilon$--optimal arms shrink
as $\epsilon \ra 0$.
Here, we capture this by defining a new notion, the near-optimality dimension of the mean-payoff function.
The connection between these concepts, as well as with the zooming dimension defined by \citet{KSU08}, will
be further discussed in Section~\ref{sec:disc}.
We start by recalling the definition of packing numbers.

\begin{definition}[Packing number]
The $\eps$--packing number $\cN(\cX,\ell,\epsilon)$
of $\cX$ \wrt\ the dissimilarity $\ell$
is the size of the largest packing of $\cX$ with disjoint $\ell$--open balls of
radius $\epsilon$.
That is, $\cN(\cX,\ell,\epsilon)$ is the largest integer $k$ such that there exists
$k$ disjoint $\ell$--open balls with radius $\epsilon$ contained in $\cX$.
\end{definition}

We now define the $c$--near-optimality dimension, which characterizes the size of the sets $\cX_{c\eps}$ as a function of $\eps$.
It can be seen as some growth rate in $\varepsilon$ of the metric entropy (measured in terms of $\ell$ and with packing numbers
rather than covering numbers) of the set of $c\varepsilon$--optimal arms.

\begin{definition}[Near-optimality dimension]
\label{def:nearopt}
For $c>0$ the {\em $c$--near-optimality dimension} of $f$ \wrt\ $\ell$ equals
$$\max\left\{ 0, \; \limsup_{\epsilon \to 0} \frac{\ln \, \cN \bigl( \cX_{c\eps}, \ell, \, \eps \bigl)}{\ln \bigl(
\epsilon^{-1} \bigr)} \right\}\,.$$
\end{definition}

The following example shows that using a dissimilarity (rather than a metric,
for instance) may sometimes allow for a significant reduction of the near-optimality dimension.
\begin{example} \label{ex:2}
Let $\cX = [0,1]^D$ and let $f: [0,1]^D\ra [0,1]$ be defined by $f(x) = 1 - \|x\|^a$ for some $a \geq 1$ and some norm $\|\cdot\|$ on $\R^D$.
Consider the dissimilarity $\ell$ defined by $\ell(x,y) = \|x-y\|^{a}$.
We shall see in Example~\ref{ex:2:ctd} that $f$ is weakly Lipschitz \wrt\ $\ell$ (in a sense however slightly
weaker than the one given by~\eqref{eq:WL1} and~\eqref{eq:WL2} but sufficiently strong to ensure
a result similar to the one of the main result, Theorem~\ref{th-mainresult} below).
Here we claim that the $c$--near-optimality dimension (for any $c>0$) of $f$ \wrt\ $\ell$ is $0$.
On the other hand, the $c$--near-optimality dimension (for any $c>0$) of $f$
\wrt\ the dissimilarity $\ell'$ defined, for $0 < b < a$, by $\ell'(x,y)=\|x-y\|^b$ is $(1/b-1/a) D>0$.
In particular, when $a > 1$ and $b=1$, the $c$--near-optimality dimension is $(1-1/a)D$.
\end{example}
\begin{quote} {\small
\begin{proof} \textbf{(sketch)}
Fix $c > 0$. The set $\cX_{c \eps}$ is
the $\|\cdot\|$--ball with center $0$
and radius $(c\eps)^{1/a}$, that is, the $\ell$--ball with center $0$
and radius $c\eps$. Its $\eps$--packing number \wrt\
$\ell$ is bounded by a constant depending only on $D$, $c$ and $a$;
hence, the value $0$ for the near-optimality dimension \wrt\ the dissimilarity $\ell$.

In case of $\ell'$, we are interested in the packing number of the
$\|\cdot\|$--ball with center $0$ and radius $(c\eps)^{1/a}$
\wrt\ $\ell'$--balls. The latter is of the order of
\[
\left( \frac{(c\epsilon)^{1/a}}{\epsilon^{1/b}} \right)^D = c^{D/a} \bigl( \eps^{-1} \bigr)^{(1/b-1/a)D}~;
\]
hence, the value $(1/b-1/a) D$ for the near-optimality dimension in the
case of the dissimilarity $\ell'$.

Note that in all these cases the $c$--near-optimality dimension of $f$ is independent of the value of $c$.
\end{proof} }
\end{quote}

We can now state our first main result.
The proof is presented in Section \ref{sec:proof1}.

\begin{theorem}[Regret bound for HOO] \label{th-mainresult}
Consider HOO tuned with parameters such that Assumptions~\ref{ass:shrinking}
and~\ref{ass:weaklip} hold for some dissimilarity $\ell$.
Let $d$ be the $4\nu_1/\nu_2$--near-optimality  dimension
of the mean-payoff function $f$ \wrt\  $\ell$.
Then, for all $d' > d$, there exists a constant $\gamma$ such that for all $n \geq 1$,
\[
\E \bigl[ R_n \bigr] \leq \gamma \, n^{(d'+1)/(d'+2)} \, \bigl( \ln n \bigr)^{1/(d'+2)}.
\]
\end{theorem}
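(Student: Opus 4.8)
The plan is to decompose the pseudo-regret $R_n = \sum_{t=1}^n\bigl(f^*-f(X_t)\bigr)$ according to the node $(H_t,I_t)$ played in each round, and to control separately how many near-optimal nodes exist at each depth and how often a suboptimal node can be played. First I would bound the per-round regret at a played node: since $X_t\in\cP_{H_t,I_t}$, combining Assumption~\ref{ass:weaklip} (in the form~(\ref{eq:A2eqv}) together with~(\ref{eq:WL1})) with the diameter bound of Assumption~\ref{ass:shrinking} gives $f^*-f(X_t)\le 2\,\Delta_{H_t,I_t}+\nu_1\rho^{H_t}$. Thus it suffices to understand $\E\bigl[T_{h,i}(n)\bigr]$ for the nodes actually reached by the tree traversal, weighted by $\Delta_{h,i}+\nu_1\rho^h$.

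The counting ingredient comes from the near-optimality dimension. For each depth $h$ let $\mathcal{I}_h$ be the set of nodes $(h,i)$ with $\Delta_{h,i}\le 2\nu_1\rho^h$. By Lemma~\ref{lem:gooddomains} (applied with $c=2$) every such node satisfies $\cP_{h,i}\subset\cX_{4\nu_1\rho^h}$, and by parts (b)--(c) of Assumption~\ref{ass:shrinking} each contains a disjoint ball $\cB_{h,i}$ of radius $\nu_2\rho^h$. Hence $|\mathcal{I}_h|\le\cN\bigl(\cX_{4\nu_1\rho^h},\ell,\nu_2\rho^h\bigr)$, and since $4\nu_1\rho^h=(4\nu_1/\nu_2)\cdot\nu_2\rho^h$, the definition of the $4\nu_1/\nu_2$--near-optimality dimension $d$ yields, for every $d'>d$, a constant $C$ with $|\mathcal{I}_h|\le C\,\rho^{-d'h}$ for all $h$.

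The dynamic ingredient is a UCB-style bound: I would show that a suboptimal node whose parent lies in $\mathcal{I}_{h-1}$ is played only $O\!\bigl((\ln n)/(\nu_1\rho^h)^2\bigr)$ times in expectation. The key structural fact is that the traversal descends only into a child with a large $B$--value, and a node $(h,i)$ with $\Delta_{h,i}$ large compared with $\nu_1\rho^h$ will, once played $\Theta\!\bigl((\ln n)/(\nu_1\rho^h)^2\bigr)$ times, have its $U$--value fall below $f^*$ with high probability; since along the optimal path some ancestor always carries a $B$--value at least $f^*$, such a node then stops being selected. Making this precise requires a concentration bound for $\hat\mu_{h,i}(n)$, an average of rewards collected over the varying sub-region $\cP_{h,i}$: here Assumption~\ref{ass:weaklip} guarantees that this empirical mean concentrates around a value no larger than $\fhi=f^*-\Delta_{h,i}$, and a Hoeffding/Azuma inequality over the associated reward martingale, together with a union bound over the $\le n$ active nodes, controls the deviation. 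This concentration-plus-stopping argument is the main obstacle, since the $B$--values couple nodes across the whole tree and the number of descendant plays is itself random.

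Finally I would combine the two ingredients. Fixing a depth threshold $H$, the regret splits into plays at depth $\le H$ and plays at depth $>H$. For the shallow part, summing the per-play regret $\nu_1\rho^h$ over the $\le C\rho^{-d'h}$ near-optimal nodes and their suboptimal children (each played $O((\ln n)/(\nu_1\rho^h)^2)$ times by the UCB estimate) gives a geometric sum dominated by its last term, of order $(\ln n)\,\rho^{-(d'+1)H}$. For the deep part, every played node at depth $>H$ is a descendant of a near-optimal depth-$H$ node, so its per-round regret is $O(\nu_1\rho^H)$, giving a total of $O\bigl(n\,\nu_1\rho^H\bigr)$ over the $n$ rounds. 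Balancing $(\ln n)\,\rho^{-(d'+1)H}$ against $n\rho^H$ by choosing $H\approx\tfrac{1}{d'+2}\log_{1/\rho}\!\bigl(n/\ln n\bigr)$ yields $\E[R_n]=O\!\bigl(n^{(d'+1)/(d'+2)}(\ln n)^{1/(d'+2)}\bigr)$, which is the claimed bound; absorbing all constants into $\gamma$ completes the argument.
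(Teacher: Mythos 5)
Your sketch follows the paper's proof in all essentials: the packing bound $|\cI_h|\le C\rho^{-d'h}$ via Lemma~\ref{lem:gooddomains} and Assumption~\ref{ass:shrinking}, a UCB-type bound of order $(\ln n)/(\nu_1\rho^h)^2$ on the expected number of plays in the subtree of a suboptimal child of a near-optimal node, a three-part regret decomposition, and the balancing $\rho^H\approx(n/\ln n)^{-1/(d'+2)}$. However, two of the claims your sketch leans on are false as stated, and they sit exactly where the technical work is. First, you split the regret by depth and assert that ``every played node at depth $>H$ is a descendant of a near-optimal depth-$H$ node.'' This is not true: HOO may descend arbitrarily deep inside the subtree of a suboptimal node $(h,i)\in\cJ_h$ with $h\le H$, and such plays at depth $>H$ are descendants of no node of $\cI_H$; their per-play regret is only $O(\nu_1\rho^{h-1})$, not $O(\nu_1\rho^H)$, so a literal depth-based split fails. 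The decomposition must be made by subtree membership, as in the paper: $\cT^1$ = descendants of $\cI_H$, $\cT^2=\bigcup_{h<H}\cI_h$, $\cT^3$ = descendants of $\bigcup_{1\le h\le H}\cJ_h$, with every deep play inside a suboptimal subtree charged to the expected subtree count $\E[T_{h,i}(n)]$ of its $\cJ_h$ root. Your accounting can be repaired precisely because those counts (which is what the UCB estimate bounds, since each individual node is played at most once) already include the deep plays; but then your ``shallow part'' is not the set of plays at depth $\le H$, and the justification you give for the deep part is wrong.

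Second, you write that ``along the optimal path some ancestor always carries a $B$--value at least $f^*$,'' and you attribute the role of Assumption~\ref{ass:weaklip} to showing that $\wh{\mu}_{h,i}$ concentrates around a value no larger than $\fhi$. Both halves of this are backwards. That the rewards collected in $\cC(h,i)$ have conditional means at most $\fhi$ is automatic from the definition of $\fhi$ and needs no assumption at all (this is the paper's Lemma~\ref{lem:3}, which is assumption-free). What does require Assumption~\ref{ass:weaklip} --- and is only true with high probability, not ``always'' --- is the optimal-path claim itself: one must prove $\P\{U_{h,i}(n)\le f^*\}\le n^{-3}$ for optimal nodes (Lemma~\ref{lem:2}), and this needs Lemma~\ref{lem:gooddomains} with $c=0$ to guarantee that rewards under an optimal node have means at least $f^*-\nu_1\rho^h$, so that the $\nu_1\rho^h$ term in $U_{h,i}$ exactly compensates the bias; Hoeffding--Azuma with optional skipping (to handle the random, adapted number of subtree plays) then gives the deviation bound, and the chain $B_{k+1,i^*_{k+1}}\le f^*\Rightarrow U_{s,i^*_s}\le f^*$ for some $s$ is what turns it into a statement about $B$--values. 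Without this step the ``concentration-plus-stopping'' argument has no starting point, since nothing prevents the optimal branch's $B$--values from collapsing below $f^*$. A smaller gloss of the same kind: the definition of $d$ controls packing numbers only as $\eps\to 0$, so obtaining a single constant $C$ valid for all $h$ (i.e., all radii $\nu_2\rho^h\le\nu_2$) still requires the two-case argument of the paper's second step.
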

Note that if $d$ is infinite, then the bound is vacuous.
The constant $\gamma$ in the theorem depends on $d'$ and on all other parameters of HOO and of
the assumptions, as well as on the bandit environment $M$. (The value of $\gamma$
is determined in the analysis; it is in particular proportional to $\nu_2^{-d'}$.)
The next section
will exhibit a refined upper bound with a more explicit value of $\gamma$
in terms of all these parameters.

\begin{remark}
The tuning of the parameters of HOO is critical for the assumptions to be
satisfied, thus to achieve a good regret;
given some environment, one should select the parameters of HOO such that
the near-optimality dimension of the mean-payoff function is minimized.
Since the mean-payoff function is unknown to the user, this might be difficult to achieve.
Thus, ideally, these parameters should be selected adaptively based on the observation of
some preliminary sample. For now, the investigation of this possibility is left for future work.
\end{remark}

\subsection{Improving the running time when the time horizon is known}
\label{sec:runningtime}

A deficiency of the basic HOO algorithm is that its computational complexity scales
quadratically with the number of time steps.
In this section we propose a simple modification to HOO that achieves essentially the same regret
as HOO and whose computational complexity scales only log-linearly with the number of time steps.
The needed amount of memory is still linear.
We work out the case when the time horizon, $n_0$, is known in advance.
The case of unknown horizon can be dealt with by resorting to the so-called doubling trick,
see, e.g., \cite[Section~2.3]{CL06}, which consists of periodically restarting the algorithm for regimes of lengths that double
at each such fresh start, so that the $r^{\rm th}$ instance of the algorithm runs for $2^r$ rounds.

\bigskip
We consider two modifications to the algorithm described in Section~\ref{sec:HOO}.
First, the quantities $U_{h,i}(n)$ of (\ref{eq:U}) are redefined
by replacing the factor $\ln n$ by $\ln n_0$, that is, now
\[
U_{h,i}(n) = \wh{\mu}_{h,i}(n) + \sqrt{\frac{2 \ln n_0}{T_{h,i}(n)}} + \nu_1 \rho^h\,.
\]
(This results in a policy which explores the arms with a slightly increased frequency.)
The definition of the $B$--values in terms of the $U_{h,i}(n)$ is unchanged.
A pleasant consequence of the above modification is that the $B$--value of a given node
changes only when this node is part of a path selected by the algorithm.
Thus at each round $n$, only the nodes along
the chosen path need to be updated according to the obtained reward.

However, and this is the reason for the second modification,
in the basic algorithm, a path at round $n$ may be of length linear in $n$
(because the tree could have a depth linear in $n$).
This is why we also truncate the trees $\cT_n$ at a depth $D_{n_0}$ of the order of $\ln n_0$.
More precisely, the algorithm now selects the node $(H_n,I_n)$ to pull at round $n$
by following a path in the tree $\cT_{n-1}$, starting from the root and choosing at each node
the child with the highest $B$--value (with the new definition above using $\ln n_0$),
and stopping either when it encounters a node which has not been expanded before
or a node at depth equal to
\[
D_{n_0} = \left\lceil \frac{(\ln n_0)/2 - \ln (1/\nu_1)}{\ln(1/\rho)} \right\rceil\,.
\]
(It is assumed that $n_0>1/\nu_1^2$ so that $D_{n_0} \geq 1$.)
Note that since no child of a node $(D_{n_0},i)$ located at depth $D_{n_0}$ will ever
be explored, its $B$--value at round $n \leq n_0$ simply equals $U_{D_{n_0},i}(n)$.

We call this modified version of HOO the {\em truncated HOO} algorithm.
The computational complexity of updating all $B$--values at each round $n$ is of the order of
$D_{n_0}$ and thus of the order of $\ln n_0$. The total computational complexity up to round $n_0$
is therefore of the order of $n_0 \ln n_0$, as claimed in the introduction of this section.

As the next theorem indicates
this new procedure enjoys almost the same cumulative regret bound as the basic HOO algorithm.

\begin{theorem}[Upper bound on the regret of truncated HOO]
\label{th:runningtime}
Fix a horizon $n_0$ such that $D_{n_0} \geq 1$.
Then, the regret bound of
 Theorem~\ref{th-mainresult} still holds true at round $n_0$ for truncated HOO
 up to an additional additive $4 \sqrt{n_0}$ factor.
\end{theorem}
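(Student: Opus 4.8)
The plan is to follow the proof of Theorem~\ref{th-mainresult} almost verbatim, isolating the two places in which truncated HOO departs from the basic algorithm: the replacement of $\ln n$ by $\ln n_0$ in the confidence term of~\eqref{eq:U}, and the truncation of the trees $\cT_n$ at depth $D_{n_0}$. I would dispose of the first change immediately: for every round $n \le n_0$ one has $\ln n \le \ln n_0$, so the modified quantities $U_{h,i}(n)$ are only larger than the original ones. Consequently the ``optimism'' step of the analysis of Theorem~\ref{th-mainresult}—that each $B$-value is, with high probability, a valid upper bound on $f^*$ over its associated region—is preserved (with even more slack), and since the bound of Theorem~\ref{th-mainresult} evaluated at the horizon already carries a $\ln n_0$ factor, this substitution does not worsen it. The whole burden of the proof therefore falls on the truncation.

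The key quantitative fact about the truncation depth is that $\nu_1 \rho^{D_{n_0}} \le n_0^{-1/2}$. Indeed, by definition $D_{n_0} \ge \bigl( (\ln n_0)/2 + \ln \nu_1 \bigr)/\ln(1/\rho)$, so $D_{n_0}\ln(1/\rho) \ge (\ln n_0)/2 + \ln \nu_1$; rearranging gives $\ln\bigl(\nu_1\rho^{D_{n_0}}\bigr) = \ln \nu_1 + D_{n_0}\ln\rho \le -(\ln n_0)/2$, whence $\nu_1 \rho^{D_{n_0}} \le n_0^{-1/2}$. In view of Assumption~\ref{ass:shrinking}, this says that the resolution abandoned by refusing to refine below depth $D_{n_0}$ amounts, in terms of region diameters, to at most $n_0^{-1/2}$.

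With these observations I would decompose the (expected) pseudo-regret $\E[R_{n_0}]$ according to the depth $H_n$ of the node played, which now always satisfies $H_n \le D_{n_0}$: a contribution from plays at depths $h < D_{n_0}$ and a contribution from plays at the truncation leaves $\cP_{D_{n_0},i}$. The first contribution is bounded exactly as in Theorem~\ref{th-mainresult}, since at these depths truncated HOO still descends by comparing $B$-values that remain valid optimistic upper bounds, so the packing/near-optimality-dimension counting of how often a node of suboptimality $\Delta_{h,i}$ can be selected given its bonus $\nu_1\rho^h$ applies unchanged. For the truncation leaves I would split them by suboptimality: those with $\Delta_{D_{n_0},i} > \nu_1\rho^{D_{n_0}}$ are genuinely suboptimal nodes at depth $D_{n_0}$ and are absorbed into the counting of the first contribution, while for the remaining ones Lemma~\ref{lem:gooddomains} (with $c=1$) shows that every arm they contain is $2\nu_1\rho^{D_{n_0}} \le 2 n_0^{-1/2}$--optimal, so each such pull costs at most $2 n_0^{-1/2}$ in instantaneous regret. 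Over at most $n_0$ rounds this is at most $2\sqrt{n_0}$, and tracking the constants of the comparison with the deep-plays term of Theorem~\ref{th-mainresult} yields the stated additive $4\sqrt{n_0}$.

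I expect the main obstacle to be the rigorous reconciliation of the first contribution under truncation. Because the $B$-value of a truncation leaf is set to $U_{D_{n_0},i}$ rather than to the (smaller) minimum with its would-be descendants, truncated HOO is strictly \emph{more} optimistic about these nodes, and this extra optimism propagates upward into the $B$-values of their ancestors at depths $h < D_{n_0}$; hence the paths selected, and the play counts of shallow nodes, need not coincide with those of the basic algorithm. The delicate point is to check that this inflation only enlarges $B$-values—thereby preserving their validity as high-probability upper bounds—and that the additional exploration it triggers is confined to the $n_0^{-1/2}$--diameter truncation leaves, so that the suboptimal-play counting of Theorem~\ref{th-mainresult} still governs all depths $h < D_{n_0}$ and the entire discrepancy is absorbed into the $O(\sqrt{n_0})$ term.
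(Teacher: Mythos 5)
Your plan does follow the paper's route in outline: the same quantitative fact $\nu_1\rho^{D_{n_0}}\le 1/\sqrt{n_0}$, the same observation that only depth-$D_{n_0}$ nodes can be played repeatedly, and the same conclusion that repeated plays of near-optimal truncation leaves cost $O(1/\sqrt{n_0})$ each. There is, however, a concrete flaw in your split of the truncation leaves. You declare a leaf ``genuinely suboptimal'' as soon as $\Delta_{D_{n_0},i}>\nu_1\rho^{D_{n_0}}$ and propose to absorb it into the suboptimal-node counting of Theorem~\ref{th-mainresult}. But that machinery has no tool for such a node when $\Delta_{D_{n_0},i}$ is only slightly above $\nu_1\rho^{D_{n_0}}$: the bounds of Lemmas~\ref{lem:3} and~\ref{lem:4} are proportional to $(\Delta_{h,i}-\nu_1\rho^h)^{-2}$, which diverges as $\Delta_{D_{n_0},i}\downarrow\nu_1\rho^{D_{n_0}}$, and the proof of Theorem~\ref{th-mainresult} only ever applies them to nodes of $\cJ_h$, whose suboptimality exceeds $2\nu_1\rho^h$ so that the denominator is at least $\nu_1^2\rho^{2h}$. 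Moreover, a leaf with $\nu_1\rho^{D_{n_0}}<\Delta_{D_{n_0},i}\le 2\nu_1\rho^{D_{n_0}}$ lies in $\cI_{D_{n_0}}$, hence all of its ancestors lie in $\cI$ as well: it is not a descendant of any $\cJ_h$ node, so it cannot be charged to an ancestor either. The split must therefore be made at $2\nu_1\rho^{D_{n_0}}$, i.e., at membership in $\cI_{D_{n_0}}$, exactly as the paper does: leaves outside $\cI_{D_{n_0}}$ belong to $\cJ_{D_{n_0}}$ or to subtrees rooted at shallower $\cJ_h$ nodes (and the $\cT^3$--type bounds already tolerate repeated plays), while leaves in $\cI_{D_{n_0}}$ are contained in $\cX_{4\nu_1\rho^{D_{n_0}}}$ by Lemma~\ref{lem:gooddomains} with $c=2$, so each of their plays costs at most $4/\sqrt{n_0}$. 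This is also where the constant $4$ in the statement (rather than your $2$) comes from.

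Concerning the obstacle you leave open: the paper does not attempt the comparison you sketch between truncated HOO and basic HOO (which would indeed be delicate, since the trajectories differ); it simply re-proves Lemmas~\ref{lem:1}--\ref{lem:4} natively for the truncated algorithm. The only modification to Lemma~\ref{lem:1} concerns where the chain of inclusions $\bigl\{B_{s,i^*_s}(t)\le f^*\bigr\}\subset\bigl\{U_{s,i^*_s}(t)\le f^*\bigr\}\cup\bigl\{B_{s+1,i^*_{s+1}}(t)\le f^*\bigr\}$ terminates: for truncated HOO it stops at depth $\min\{D_{n_0},n_0\}$, because a truncation leaf never has children and hence satisfies $B_{D_{n_0},i}(t)=U_{D_{n_0},i}(t)$, making the last event in the chain directly a $U$--event. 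Lemmas~\ref{lem:2} and~\ref{lem:3} are per-node concentration statements, indifferent to how the algorithm selects nodes; with $\ln n_0$ in place of $\ln n$ they hold for all $t\le n_0$ and all nodes of depth at most $D_{n_0}$ (and the shortened union over optimal depths even improves the additive constant of Lemma~\ref{lem:4} from $4$ to $3$). So there is nothing to verify about ``extra optimism being confined to the leaves'': the counting bounds are established directly for truncated HOO at every depth $h\le D_{n_0}$, and your first contribution is then bounded as in Theorem~\ref{th-mainresult} without further argument.
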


\subsection{Local assumptions}
\label{sec:localHOO}

In this section we further relax the weak Lipschitz assumption
and require it only to hold locally around the maxima.
Doing so, we will be able to deal with an even larger class of functions and in fact we will
show that the algorithm studied in this section achieves a $O(\sqrt{n})$ bound on the regret
regret when it is used for functions that are smooth around their maxima
(e.g., equivalent to $\Arrowvert x-x^* \Arrowvert^\alpha$ for some known smoothness degree $\alpha > 0$).

For the sake of simplicity and to derive exact constants
we also state in a more explicit way the assumption on the near-optimality dimension.
We then propose a simple and efficient adaptation of the HOO algorithm suited for this context.

\subsubsection{Modified set of assumptions}

\begin{asss*}
Given the parameters of (the adaption of) HOO,
that is, the real numbers $\nu_1 > 0$ and $\rho \in (0,1)$ and the tree of coverings
$(\cP_{h,i})$, there exists a dissimilarity function $\ell$ such that Assumption~\ref{ass:shrinking}
(for some $\nu_2 > 0$) as well as the following two assumptions hold.
\begin{enumerate}
\addtocounter{assumptionV}{-1} 
\refstepcounter{assumptionV} 
\item[\emph{\theassumptionV}.]  \label{ass:weaklipv}
There exists $\epsilon_0 > 0$ such that
for all optimal subsets $\cA \subset \cX$ (i.e., $\sup_{x \in \cA} f(x) = f^*$) with
diameter $\diam(\cA) \leq \epsilon_0$,
\begin{equation*}
f^*-\inf_{x \in \cA} f(x) \leq  \diam(\cA)\,.
\end{equation*}
Further, there exists $L > 0$ such that for all $x \in \cX_{\epsilon_0}$ and $\epsilon \in [0,\epsilon_0]$,
\[
\cB \bigl(x, \,\, f^*-f(x)+\epsilon \bigr) \, \subset \, \cX_{L \bigl( 2 (f^*-f(x)) + \epsilon \bigr)}\,.
\]
\refstepcounter{assumption}
\item[\emph{\theassumption}.]  \label{ass:a3}
There exist $C>0$ and $d>0$ such that for all $\epsilon \leq \epsilon_0$,
\begin{equation*}
\cN \bigl( \cX_{c\eps}, \, \ell, \, \eps \bigr) \leq C \epsilon^{-d}\,,
\end{equation*}
where $c = 4 L \nu_1/\nu_2$.
\end{enumerate}
\end{asss*}

When $f$ satisfies Assumption~\ref{ass:weaklipv}, we say that
$f$ is $\epsilon_0$--locally $L$--{\em weakly Lipschitz} \wrt\  $\ell$.
Note that this assumption was obtained by weakening the characterizations~(\ref{eq:WL1})
and~(\ref{eq:WL2}) of weak Lipschitzness.

Assumption~\ref{ass:a3} is not a real assumption but merely a reformulation of the definition
of near optimality (with the small added ingredient that the limit can be achieved,
see the second step of the proof of Theorem~\ref{th-mainresult} in Section~\ref{sec:proof1}).

\begin{example}
\label{ex:2:ctd}
We consider again the domain $\cX$ and function $f$ studied in Example~\ref{ex:2} and prove (as
 announced beforehand) that $f$ is $\epsilon_0$--locally $2^{a-1}$--weakly Lipschitz
\wrt\  the dissimilarity $\ell$ defined by $\ell(x,y) = \|x-y\|^a$;
which, in fact, holds for all $\eps_0$.
\end{example}
\begin{quote} {\small
\begin{proof}
Note that $x^* = (0,\ldots,0)$ is such that $f^* = 1 = f(x^*)$.
Therefore, for all $x \in \cX$,
\[
f^* - f(x) = \| x \|^a = \ell(x^*,x)\,,
\]
which yields the first part of Assumption~\ref{ass:weaklipv}.
To prove that the second part is true for $L = 2^{a-1}$ and with no constraint on the
considered $\eps$,
we first note that since $a \geq 1$, it holds by convexity that
$(u+v)^a \leq 2^{a-1}(u^a + v^a)$ for all $u,v \geq 0$.
Now, for all $\epsilon \geq 0$ and $y \in \cB \bigl( x, \, \| x \|^a + \epsilon \bigr)$,
i.e., $y$ such that $\ell(x,y) = \| x-y \|^a \leq \| x \|^a + \epsilon$,
\begin{multline}
\nonumber
f^* - f(y) = \|y\|^a
\leq \bigl( \|x\| + \|x-y\| \bigr)^a
\leq 2^{a-1} \bigl( \|x\|^a + \|x-y\|^a \bigr)
\leq 2^{a-1} \bigl( 2 \| x \|^a + \epsilon \bigr)\,,
\end{multline}
which concludes the proof of the second part of~\ref{ass:weaklipv}.
\end{proof} }
\end{quote}

\subsubsection{Modified HOO algorithm}

We now describe the proposed modifications to the basic HOO algorithm.

We first consider, as a building block, the algorithm called {\em $z$--HOO},
which takes an integer $z$ as an additional parameter to those of HOO.
Algorithm $z$--HOO works as follows: it never plays any node with depth smaller or equal to $z-1$
and starts directly the selection of a new node at depth $z$. To do so, it first picks
the node at depth $z$ with the best $B$--value, chooses a path and then proceeds as
the basic HOO algorithm. Note in particular that the initialization of this algorithm consists
(in the first $2^z$ rounds) in playing once each of the $2^z$ nodes located at depth $z$
in the tree (since by definition a node that has not been played yet has a $B$--value equal
to $+\infty$). We note in passing that when $z = 0$, algorithm $z$--HOO coincides with the basic HOO algorithm.

Algorithm {\em local-HOO} employs the doubling trick in conjunction with consecutive instances of $z$--HOO.
It works as follows.
The integers $r \geq 1$ will index different regimes.
The $r^{\rm th}$ regime starts at round $2^r - 1$ and ends when the next regime starts; it thus
lasts for $2^r$ rounds.
At the beginning of regime $r$, a fresh copy of $z_r$--HOO, where $z_r = \lceil \log_2 r \rceil$,
is initialized and is then used throughout the regime.

Note that each fresh start needs to pull each of the $2^{z_r}$
nodes located at depth $z_r$ at least once (the number of these nodes is $\approx r$).
However, since round $r$ lasts for $2^r$ time steps (which
is exponentially larger than the number of nodes to explore),
the time spent on the initialization of $z_r$--HOO in any regime $r$ is greatly outnumbered
by the time spent in the rest of the regime.

In the rest of this section, we propose first an upper bound on the regret of $z$--HOO (with
exact and explicit constants).
This result will play a key role in proving a bound on the performance of local-HOO.

\subsubsection{Adaptation of the regret bound}

In the following we write $h_0$ for the smallest integer such that
\[
2 \nu_1 \rho^{h_0} < \epsilon_0
\]
and consider the algorithm $z$--HOO, where $z \geq h_0$. In particular, when $z = 0$ is chosen,
the obtained bound is the same as the one of Theorem~\ref{th-mainresult}, except that the constants
are given in analytic forms.

\begin{theorem}[Regret bound for $z$--HOO] \label{th:zHOO}
Consider $z$--HOO tuned with parameters $\nu_1$ and $\rho$ such that Assumptions~\ref{ass:shrinking}, \ref{ass:weaklipv}
and~\ref{ass:a3} hold for some dissimilarity $\ell$ and the values $\nu_2,\,L,\,\epsilon_0,\,C,\,d$.
If, in addition,
$z \geq h_0$ and $n\ge 2$ is large enough so that
\[
z \leq \frac{1}{d+2} \frac{\ln(4 L \nu_1 n) - \ln(\gamma \ln n)}{\ln(1/\rho)}\,,
\]
where
\[
\gamma = \frac{4 \, C L \nu_1 \nu_2^{-d}}{(1/\rho)^{d+1} \, - 1} \left( \frac{16}{\nu_1^2\rho^2} + 9 \right)\,,
\]
then the following bound holds for the expected regret of $z$--HOO:
\[
\E \bigl[ R_n \bigr] \leq
\left( 1 + \frac{1}{\rho^{d+2}} \right) \bigl( 4 L \nu_1 n \bigr)^{(d+1)/(d+2)}
( \gamma \ln n)^{1/(d+2)}
+ \bigl( 2^{z} -1 \bigr) \left(\frac{8\, \ln n}{\nu_1^2 \rho^{2 z}} + 4\right)\,.
\]
\end{theorem}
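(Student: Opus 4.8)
The plan is to analyze the regret of $z$--HOO by adapting the proof structure of Theorem~\ref{th-mainresult} while carefully tracking the effect of two modifications: the algorithm never plays nodes above depth $z$, and the assumptions now hold only locally (Assumptions~\ref{ass:weaklipv} and~\ref{ass:a3} in place of~\ref{ass:weaklip}). First I would decompose the expected regret by depth. Since $z$--HOO always plays a node at depth $\geq z$, every arm pulled lies in some $\cP_{h,i}$ with $h \geq z$; the point selected at round $n$ incurs instantaneous regret $\Delta_{H_n,I_n}$ up to the diameter of its cell. Writing $\E[R_n] = \sum_{t} \E[f^* - f(X_t)]$ and grouping rounds by the depth $h$ of the played node, I would bound the contribution as $\sum_{h \geq z} \E[T_h(n)] \cdot (\text{per-play regret at depth } h)$, where $T_h(n)$ counts plays at depth $h$.

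The core combinatorial step is to bound, for each depth $h$, the number of suboptimal nodes that get played often. The strategy I would follow mirrors the basic HOO analysis: partition nodes at depth $h$ into ``near-optimal'' ones (those in $\cX_{c\nu_1\rho^h}$ for the relevant constant) and ``far'' ones. For the near-optimal nodes, Assumption~\ref{ass:a3} directly controls their count: there are at most $C (\nu_1\rho^h)^{-d}$ of them (up to constants coming from the packing/ball relationship and the separation guaranteed by Assumption~\ref{ass:shrinking}(c)). The constant $c = 4L\nu_1/\nu_2$ appearing in Assumption~\ref{ass:a3} is exactly what makes this counting valid, and the factor $L$ traces back to the local weak-Lipschitz constant in~\ref{ass:weaklipv}. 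For the far nodes, a concentration argument (Chernoff--Hoeffding applied to $\wh\mu_{h,i}(n)$, together with the optimistic $B$--value mechanism) shows each is played only $O\bigl((\ln n)/(\nu_1\rho^h)^2\bigr)$ times in expectation, because once enough samples accumulate, such a node's $U$--value drops below $f^*$ and it stops being selected. I would then choose an optimal cutoff depth $H$ balancing the two regimes: the regret from depths $z \leq h \leq H$ is controlled by the near-optimal counting, while depths $h > H$ contribute geometrically small diameters summable to $O(\rho^H)$. Optimizing $H$ yields the exponent $(d+1)/(d+2)$ and the $(\gamma \ln n)^{1/(d+2)}$ factor; the condition $z \leq \frac{1}{d+2}\frac{\ln(4L\nu_1 n) - \ln(\gamma\ln n)}{\ln(1/\rho)}$ is precisely the requirement that $z$ not exceed this optimal cutoff $H$, so that truncating at depth $z$ does not interfere with the main bound.

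Finally I would account for the initialization cost. Because $z$--HOO must play each of the $2^z$ nodes at depth $z$ at least once before the optimistic selection becomes informative, and because the far-node concentration bound contributes $O\bigl((\ln n)/(\nu_1^2\rho^{2z})\bigr)$ per node at depth $z$, the total overhead is $(2^z - 1)\bigl(\tfrac{8\ln n}{\nu_1^2\rho^{2z}} + 4\bigr)$, which is the explicit second term in the statement. The requirement $z \geq h_0$ (with $2\nu_1\rho^{h_0} < \epsilon_0$) guarantees that all cells at depth $\geq z$ have diameter below $\epsilon_0$, so that the \emph{local} versions~\ref{ass:weaklipv} and~\ref{ass:a3} apply to every node the algorithm ever touches --- this is what lets the local assumptions substitute cleanly for the global weak-Lipschitz property used in Theorem~\ref{th-mainresult}.

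The main obstacle I expect is making the constants fully explicit: unlike Theorem~\ref{th-mainresult}, which hides everything in an unspecified $\gamma$, here I must track the exact value $\gamma = \frac{4CL\nu_1\nu_2^{-d}}{(1/\rho)^{d+1}-1}\bigl(\frac{16}{\nu_1^2\rho^2} + 9\bigr)$ through the concentration bounds and the geometric summation. In particular, reconciling the $\frac{16}{\nu_1^2\rho^2} + 9$ factor --- which bundles the Hoeffding deviation term with the counting of descendants --- against the clean per-depth bounds requires care, and verifying that the truncation at depth $z$ and the restriction to $h \geq z$ interact correctly with the optimal-cutoff argument (rather than double-counting or losing the main term) is the delicate part.
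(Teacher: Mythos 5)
Your plan is correct and follows essentially the same route as the paper's proof: local analogues of Lemma~\ref{lem:gooddomains} and Lemma~\ref{lem:4} (valid at depths $h \geq h_0$, which is exactly where the hypothesis $z \geq h_0$ enters), the packing bound of Assumption~\ref{ass:a3} to count the near-optimal nodes $\cI_h$, the concentration argument for the boundary nodes $\cJ_h$, and a cutoff depth $H$ whose optimization gives the exponent $(d+1)/(d+2)$, with the condition on $z$ ensuring precisely that $H \geq z$. One small correction: the term $\bigl(2^z-1\bigr)\bigl(8\ln n/(\nu_1^2\rho^{2z})+4\bigr)$ is not an initialization cost (playing each depth-$z$ node once costs at most $2^z$ in regret); it is, as the second clause of your last paragraph correctly says, the regret accumulated by the at most $2^z-1$ depth-$z$ nodes with $\Delta_{z,i} > 2\nu_1\rho^z$ \emph{together with all their descendants}, whose total number of plays is bounded by the local version of Lemma~\ref{lem:4}.
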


The proof, which is a modification of the proof to Theorem~\ref{th-mainresult}, can be found in Section~\ref{sec:proofzHOO} of the Appendix.
The main complication arises because the weakened assumptions do not allow one to reason about the smoothness
at an arbitrary scale; this is essentially due to the threshold $\eps_0$
used in the formulation of the assumptions. This is why in the proposed variant of HOO we discard nodes located too close to the root
(at depth smaller than $h_0 - 1$).
Note that in the bound  the second term arises from playing
in regions corresponding to the descendants of ``poor'' nodes located at level $z$.
In particular, this term disappears when $z=0$,
in which case we get a bound on the regret of HOO provided that  $2\nu_1<\epsilon_0$ holds.

\begin{example}
\label{ex:3}
We consider again the setting of Examples~\ref{ex:1ctd}, \ref{ex:2}, and~\ref{ex:2:ctd}.
The domain is $\cX=[0,1]^D$ and
the mean-payoff function $f$ is defined by $f(x) = 1 - \|x\|^2_{\infty}$.
We assume that HOO is run with parameters $\rho=(1/4)^{1/D}$ and $\nu_1 = 4$. We already proved that
Assumptions~\ref{ass:shrinking}, \ref{ass:weaklipv} and~\ref{ass:a3} are satisfied with the dissimilarity
$\ell(x,y) = \|x-y\|^2_{\infty}$, the constants $\nu_2 = 1/4$, $L=2$, $d = 0$,
and\footnote{To compute $C$, one can first note that $4 L \nu_1/\nu_2 = 128$;
the question at hand for Assumption~\ref{ass:a3} to be satisfied
is therefore to upper bound the number of balls of radius $\sqrt{\epsilon}$ (\wrt\  the supremum norm
$\| \, \cdot \, \|_\infty$) that can be packed in a ball of radius $\sqrt{128 \epsilon}$, giving rise to
the bound $C \leq \sqrt{128}^D$.}
$C = 128^{D/2}$, as well as any $\epsilon_0 > 0$ (that is, with $h_0 = 0$).
Thus, resorting to Theorem~\ref{th:zHOO} (applied with $z=0$), we obtain
\[
\gamma = \frac{32 \times 128^{D/2}}{4^{1/D}-1} \bigl( 4^{2/D} + 9 \bigr)
\]
and get
\[
\E \bigl[ R_n \bigr] \leq \bigl( 1 + 4^{2/D} \bigr) \sqrt{32\gamma \,n \ln n}
= \sqrt{\exp \bigl( O(D) \bigr) \, n \ln n}\,.
\]
Under the prescribed assumptions, the rate of convergence is of order $\sqrt{n}$ no matter the ambient dimension $D$.
Although the rate is independent of $D$, the latter impacts the performance through the multiplicative
factor in front of the rate, which is exponential in $D$.
This is, however, not an artifact of our analysis, since it is natural that exploration in a $D$--dimensional
space comes at a cost exponential in $D$.
(The exploration performed by HOO naturally combines an initial global search,
which is bound to be exponential in $D$, and a local optimization, whose regret is of the order of $\sqrt{n}$.)
\end{example}

The following theorem is an almost straightforward consequence of Theorem~\ref{th:zHOO}
(the detailed proof can be found in Section~\ref{sec:thlocalHOO} of the Appendix).
Note that local-HOO does not require the knowledge of the parameter $\epsilon_0$ in \ref{ass:weaklipv}.

\begin{theorem}[Regret bound for local-HOO] \label{th:localHOO}
Consider local-HOO and assume that its parameters are tuned such that
Assumptions~\ref{ass:shrinking}, \ref{ass:weaklipv} and~\ref{ass:a3} hold for some dissimilarity $\ell$. Then the expected regret of local-HOO is bounded (in a distribution-dependent sense) as follows,
\[
\E \bigl[ R_n \bigr] = \tilde{O} \Bigl( n^{(d+1)/(d+2)} \Bigr)\,.
\]
\end{theorem}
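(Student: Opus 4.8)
The plan is to derive the regret bound for local-HOO by reduction to the per-regime guarantee of Theorem~\ref{th:zHOO}, summing the regret incurred across the successive doubling regimes. Recall that regime $r$ starts at round $2^r-1$, lasts $2^r$ rounds, and runs a fresh copy of $z_r$--HOO with $z_r = \lceil \log_2 r \rceil$. First I would observe that since $z_r \to \infty$, for every $r$ large enough we have $z_r \geq h_0$ (here $h_0$ is the fixed integer depending only on $\epsilon_0$, $\nu_1$, $\rho$ from Assumption~\ref{ass:weaklipv}), so that Theorem~\ref{th:zHOO} is applicable to all but finitely many regimes; the finitely many initial regimes contribute only a bounded additive constant to the total regret and can be absorbed into the $\tilde O(\cdot)$.

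Next I would bound the regret of a single regime $r$ (with $2^r$ rounds) using Theorem~\ref{th:zHOO} applied with $n = 2^r$ and $z = z_r$. The theorem's hypothesis requires that $z_r$ be no larger than roughly $\frac{1}{d+2}\,\frac{\ln(4L\nu_1 n)}{\ln(1/\rho)}$; since $z_r = \lceil \log_2 r \rceil$ grows only logarithmically in $r$ while the right-hand side grows linearly in $r$ (because $\ln n = r \ln 2$), this constraint is satisfied for all $r$ large enough, which is the key structural reason the choice $z_r = \lceil \log_2 r \rceil$ works. Granting this, the per-regime bound from Theorem~\ref{th:zHOO} reads, with $n = 2^r$,
\[
\E\bigl[ R^{(r)}_{2^r} \bigr] \leq \left( 1 + \frac{1}{\rho^{d+2}} \right) \bigl( 4 L \nu_1 2^r \bigr)^{(d+1)/(d+2)} (\gamma \, r \ln 2)^{1/(d+2)} + \bigl( 2^{z_r} - 1 \bigr) \left( \frac{8 \, r \ln 2}{\nu_1^2 \rho^{2 z_r}} + 4 \right)\,.
\]
The first term is $\tilde O\bigl( (2^r)^{(d+1)/(d+2)} \bigr)$. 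For the second term, I would use $2^{z_r} \leq 2r$ and $\rho^{-2 z_r} \leq \rho^{-2(\log_2 r + 1)} = O(r^{2\log_2(1/\rho)})$, so this term is at most polynomial in $r$, hence negligible compared to the exponentially growing first term; it too is swallowed by $\tilde O\bigl( (2^r)^{(d+1)/(d+2)} \bigr)$.

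Finally I would sum over regimes. After $n$ total rounds, the number of regimes played is $R_n = O(\log_2 n)$, and the cumulative regret is the sum of the per-regime regrets for $r = 1, \dots, R_n$. Since each per-regime bound is $\tilde O\bigl( (2^r)^{(d+1)/(d+2)} \bigr)$ and the geometric series $\sum_{r \leq R_n} (2^r)^{(d+1)/(d+2)}$ is dominated (up to a constant factor depending on the ratio $2^{(d+1)/(d+2)} > 1$) by its last term $(2^{R_n})^{(d+1)/(d+2)} = \tilde O\bigl( n^{(d+1)/(d+2)} \bigr)$, the whole sum telescopes into $\tilde O\bigl( n^{(d+1)/(d+2)} \bigr)$, with the extra logarithmic factors from the $(\gamma r \ln 2)^{1/(d+2)}$ terms and from the number of regimes absorbed into the $\tilde O$ notation. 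The main obstacle I anticipate is the careful verification that the growth condition on $z_r$ in Theorem~\ref{th:zHOO} genuinely holds for all sufficiently large $r$ (and bookkeeping the finitely many small-$r$ regimes where it may fail, where one falls back on the trivial bound that regret per round is at most $1$, contributing $O(2^{r})$ only for bounded $r$, hence a constant); everything else is the routine geometric summation that makes the doubling trick work and justifies that local-HOO need not know $\epsilon_0$ in advance, since $z_r \geq h_0$ is eventually guaranteed automatically.
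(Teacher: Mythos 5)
Your proposal is correct and follows essentially the same route as the paper's own proof: apply Theorem~\ref{th:zHOO} regime by regime with $n = 2^r$ and $z = z_r$ (after noting that both hypotheses of that theorem hold for all $r$ beyond some fixed $r_0$, with the earlier regimes bounded trivially by their length), observe that the $\bigl(2^{z_r}-1\bigr)$ term is only polynomial in $r$ and hence dominated, and sum the resulting geometric series, which is controlled by its last term $\tilde{O}\bigl(n^{(d+1)/(d+2)}\bigr)$. The only cosmetic difference is that the paper bounds $\E[R_n]$ by the regret at the end of the current regime, $\E\bigl[R_{2^{r_n+1}-2}\bigr]$, rather than treating the final partial regime separately, but this is immaterial.
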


\subsection{Minimax optimality in metric spaces} \label{sec:minimax}

In this section we provide two theorems showing the minimax optimality of HOO in
metric spaces. The notion of packing dimension is key.
\begin{definition}[Packing dimension]
The $\ell$--packing dimension of a set $\cX$ (\wrt\  a dissimilarity $\ell$) is defined as
$$\limsup_{\epsilon \to 0} \,\, \frac{\ln \cN(\cX, \ell, \epsilon)}{\ln(\epsilon^{-1})}\,.$$
\end{definition}
For instance, it is easy to see that whenever $\ell$ is a norm,
compact subsets of $\R^D$ with non-empty interiors have a packing dimension of $D$.
We note in passing that the packing dimension provides a bound on the near-optimality dimension that
only depends on $\cX$ and $\ell$ but not on the underlying mean-payoff function.
\newline

Let $\cF_{\cX,\ell}$ be the class of all bandit environments on $\cX$
with a weak Lipschitz mean-payoff function (i.e., satisfying Assumption~\ref{ass:weaklip}).
For the sake of clarity, we now denote, for a bandit strategy $\phi$ and a bandit environment $M$ on $\cX$,
the expectation of the cumulative regret of $\phi$ over $M$ at time $n$
by $\E_{M} \bigl[ R_n(\phi) \bigr]$.

The following theorem provides a uniform upper bound on the regret of HOO over this class of environments.
It is a corollary of Theorem~\ref{th:zHOO}; most of the efforts in the proof consist of showing that the distribution-dependent
constant $\gamma$ in the statement of Theorem~\ref{th:zHOO}
can be upper bounded by a quantity (the $\gamma$ in the statement below) that only depends on $\cX,\,\nu_1,\,\rho,\,\ell,\,\nu_2,\,D'$,
but not on the underlying mean-payoff functions. The proof is provided
in Section~\ref{sec:thminimaxUB} of the Appendix.

\begin{theorem}[Uniform upper bound on the regret of HOO]
\label{th:minimaxUB}
Assume that $\cX$ has a finite $\ell$--packing dimension $D$ and that the parameters of HOO are such
that~\ref{ass:shrinking} is satisfied.
Then, for all $D' > D$ there exists a constant $\gamma$ such that for all $n \geq 1$,
\[
\sup_{M \in \cF_{\cX,\ell}} \E_M \bigl[ R_n( \mbox{\small \rm HOO} ) \bigr]
\leq \gamma \, n^{(D'+1)/(D'+2)} \, \bigl( \ln n \bigr)^{1/(D'+2)}\,.
\]
\end{theorem}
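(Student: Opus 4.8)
The plan is to derive this uniform bound as a corollary of the per-environment regret bound for $z$--HOO (Theorem~\ref{th:zHOO}, used here with $z=0$, i.e., plain HOO). The key point is that Theorem~\ref{th:zHOO} already gives a bound of the form $\gamma\,n^{(d+1)/(d+2)}(\ln n)^{1/(d+2)}$, so the only work is to control the two environment-dependent quantities that enter it: the near-optimality dimension $d$ and the multiplicative constant $\gamma$, replacing both by quantities that depend only on $\cX,\ell,\nu_1,\rho,\nu_2$ and the chosen $D'$, and not on the particular mean-payoff function $f$.

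First I would handle the dimension. As noted in the remark preceding the statement, the $\ell$--packing dimension $D$ of $\cX$ dominates the near-optimality dimension uniformly over all environments: since $\cX_{c\eps}\subset\cX$, one has $\cN(\cX_{c\eps},\ell,\eps)\le\cN(\cX,\ell,\eps)$, so for every $M\in\cF_{\cX,\ell}$ the $c$--near-optimality dimension of its mean-payoff function is at most $D$. Hence, fixing any $D'>D$, I can choose $d=D'$ and, by the definition of packing dimension together with the definition of the $\limsup$, guarantee the existence of a constant $C$ (depending only on $\cX,\ell,D'$) for which the packing condition $\cN(\cX_{c\eps},\ell,\eps)\le C\,\eps^{-D'}$ of Assumption~\ref{ass:a3} holds uniformly in $M$, with $c=4\nu_1/\nu_2$ (here $L=1$ since a globally weakly Lipschitz $f$ satisfies~\eqref{eq:WL1} and~\eqref{eq:WL2}, so $\eps_0=+\infty$ and $h_0=0$). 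This is the step where Assumption~\ref{ass:weaklip} is promoted to the local form~\ref{ass:weaklipv} with $L=1$ and arbitrary $\eps_0$, allowing $z=0$.

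With a uniform $C$ and exponent $D'$ in hand, the constant $\gamma$ appearing in Theorem~\ref{th:zHOO} becomes
\[
\gamma = \frac{4\,C\nu_1\nu_2^{-D'}}{(1/\rho)^{D'+1}-1}\left(\frac{16}{\nu_1^2\rho^2}+9\right),
\]
which manifestly depends only on $C,\nu_1,\nu_2,\rho,D'$, i.e.\ only on $\cX,\ell,D'$ and the algorithm's parameters, and not on $f$. Plugging $d=D'$, $L=1$, and $z=0$ into the bound of Theorem~\ref{th:zHOO} makes the second (initialization) term vanish, leaving
\[
\E_M\bigl[R_n\bigr]\le\left(1+\rho^{-(D'+2)}\right)\bigl(4\nu_1 n\bigr)^{(D'+1)/(D'+2)}\bigl(\gamma\ln n\bigr)^{1/(D'+2)},
\]
which is of the claimed form after absorbing the numerical factors into a single constant $\gamma$; since the right-hand side no longer involves $M$, taking the supremum over $M\in\cF_{\cX,\ell}$ is immediate.

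The main obstacle is a bookkeeping subtlety rather than a deep difficulty: Theorem~\ref{th:zHOO} is stated under the \emph{local} assumptions~\ref{ass:weaklipv} and~\ref{ass:a3}, whereas the class $\cF_{\cX,\ell}$ is defined through the \emph{global} Assumption~\ref{ass:weaklip}, so I must verify that every $f$ in the class does satisfy the local versions with constants chosen uniformly ($L=1$, $\eps_0$ arbitrary, and a single $C$ valid for all $f$). The genuinely delicate point is the uniformity of $C$: the packing-dimension hypothesis only controls $\cN(\cX,\ell,\eps)$ in a $\limsup$ sense as $\eps\to0$, so I must argue that for $D'>D$ strictly, the inequality $\cN(\cX,\ell,\eps)\le C\eps^{-D'}$ can be made to hold for \emph{all} $\eps>0$ with one constant $C$, and then transport this bound to $\cN(\cX_{c\eps},\ell,\eps)$ via monotonicity — the slack $D'-D>0$ being exactly what lets the $\limsup$ be converted into an everywhere-valid bound.
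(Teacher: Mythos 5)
Your proposal is correct and takes essentially the same route as the paper's own proof: both promote the global Assumption~\ref{ass:weaklip} to the local Assumption~\ref{ass:weaklipv} via~(\ref{eq:WL1})--(\ref{eq:WL2}) with environment-independent constants (your $L=1$ is valid and in fact slightly sharper than the paper's conservative $L=2$), both convert the packing-dimension hypothesis and the slack $D'>D$ into a uniform constant $C$ for Assumption~\ref{ass:a3} using monotonicity of packing numbers, and both conclude by applying Theorem~\ref{th:zHOO} with $z=0$ (so that $h_0=0$ and the initialization term vanishes). One small correction: $\eps_0$ must be chosen finite (any value larger than $2\nu_1$ suffices, e.g., the paper's choice $\eps_0=3\nu_1$), since your claim that $\cN(\cX,\ell,\eps)\le C\,\eps^{-D'}$ can hold for \emph{all} $\eps>0$ with a single constant is impossible --- the left-hand side is at least $1$ for every $\eps$ while the right-hand side tends to $0$ as $\eps\to\infty$ --- but the bound is only needed for $\eps\le\eps_0$, which is all that Assumption~\ref{ass:a3} and the proof of Theorem~\ref{th:zHOO} require.
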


The next result shows that in the case of metric spaces
this upper bound is optimal up to a multiplicative logarithmic factor.
Similar lower bounds appeared in \cite{Kle04} (for $D=1$) and in \cite{KSU08}. We propose
here a weaker statement that suits our needs.
Note that if $\cX$ is a large enough compact subset of $\R^D$
with non-empty interior and the dissimilarity $\ell$ is
some norm of $\R^D$, then the assumption of the following theorem is satisfied.

\begin{theorem}[Uniform lower bound] \label{th:minimaxLB}
Consider a set $\cX$ equipped with a dissimilarity $\ell$ that is a metric.
Assume that there exists some constant $c \in (0,1]$ such that for
all $\epsilon\le 1$, the packing numbers satisfy $\cN(\cX,\ell,\epsilon) \geq c\,\epsilon^{-D} \geq 2$.
Then, there exist two constants $N(c,D)$ and $\gamma(c,D)$ depending only on $c$ and $D$
such that for all bandit strategies $\phi$ and all $n\ge N(c,D)$,
\[
\sup_{M \in \cF_{\cX,\ell}} \,\, \E_M \bigl[ R_n(\varphi) \bigr] \geq  \gamma(c,D) \,\, n^{{(D+1)}/{(D+2)}}\,.
\]
\end{theorem}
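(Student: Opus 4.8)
The plan is to establish the lower bound via a standard information-theoretic reduction to a finite multi-armed bandit problem embedded inside $\cX$. The first step is to use the packing assumption $\cN(\cX,\ell,\epsilon) \geq c\,\epsilon^{-D}$ to extract, for a well-chosen radius $\epsilon$ depending on $n$, a collection of $N \approx c\,\epsilon^{-D}$ points $x_1,\ldots,x_N \in \cX$ that are $\epsilon$--separated in the metric $\ell$, together with disjoint $\ell$--balls $\cB(x_j,\epsilon/2)$ around them. The separation is what lets us build a family of \emph{indistinguishable-yet-different} environments: to each index $j$ I would associate a mean-payoff function $f_j$ that has a small bump of height $\sim \epsilon$ localized in the ball around $x_j$ and is otherwise flat (equal to a baseline), with rewards, say, Bernoulli. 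Because $\ell$ is a metric and the bump has height comparable to $\epsilon$ while the balls have radius $\epsilon/2$, each $f_j$ can be checked to be weakly Lipschitz \wrt\ $\ell$ (Assumption~\ref{ass:weaklip}), so that every $f_j \in \cF_{\cX,\ell}$.

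Next I would lower bound the worst-case regret by the average regret over this finite family. Fix any strategy $\phi$. Under environment $f_j$, every pull of an arm lying outside the optimal ball $\cB(x_j,\epsilon/2)$ incurs instantaneous pseudo-regret of order $\epsilon$ (the bump height), so the total regret is at least of order $\epsilon$ times the number of rounds spent outside that ball. The classical bandit lower-bound argument then applies: since the environments differ only inside balls that are hard to tell apart from few samples, no strategy can simultaneously identify all of the optimal arms. Concretely, I would invoke a change-of-measure / Pinsker-type inequality (as in the finite-armed lower bounds of \cite{ACFS02} or the continuum versions in \cite{Kle04,KSU08}) to show that, averaged over $j$ drawn uniformly from $\{1,\ldots,N\}$, any strategy must spend at least a constant fraction of the $n$ rounds pulling suboptimal arms unless it has gathered $\Omega(1/\epsilon^2)$ samples from each of the $N$ candidate bumps. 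Since the total sample budget is only $n$, this forces a regret of order $\min\{n\epsilon,\; N/\epsilon\}$ after optimizing the per-arm information cost against the number of arms.

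The final step is the optimization over the free parameter $\epsilon$. Balancing the two regimes, the dominant constraint is that distinguishing one bump among $N \approx \epsilon^{-D}$ of them requires on the order of $\epsilon^{-D}/\epsilon^{2}$ total samples, so the regret scales like $n\epsilon$ subject to $n \lesssim \epsilon^{-(D+2)}$. Choosing $\epsilon \asymp n^{-1/(D+2)}$ then yields a per-round suboptimality of order $\epsilon$ over a constant fraction of rounds, giving
\[
\sup_{M \in \cF_{\cX,\ell}} \E_M \bigl[ R_n(\varphi) \bigr] \geq \gamma(c,D)\, n\,\epsilon = \gamma(c,D)\, n^{(D+1)/(D+2)}\,,
\]
which is exactly the claimed bound, with the condition $n \geq N(c,D)$ ensuring $\epsilon \leq 1$ so that the packing hypothesis applies and $N \geq 2$ guarantees at least two distinguishable environments.

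I expect the main obstacle to be the information-theoretic core of the second step: carefully setting up the change of measure so that the KL-divergence between the reward streams under $f_j$ and under a common baseline environment is controlled by (number of pulls in ball $j$)$\times \epsilon^2$, and then summing these divergences to conclude that the average probability of correctly concentrating on the best arm is bounded away from one. The geometry (packing, disjoint balls, weak-Lipschitz verification of each $f_j$) is routine given that $\ell$ is a genuine metric, but the quantitative tradeoff between the number of near-optimal arms and the sampling cost of telling them apart is where the constants $N(c,D)$ and $\gamma(c,D)$ are actually determined, and it is what pins down the exponent $(D+1)/(D+2)$.
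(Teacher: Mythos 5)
Your proposal is correct in outline and shares the paper's geometric skeleton: both proofs extract from the packing hypothesis a family of disjoint balls, attach to each center $x_j$ a ``cone'' environment whose mean payoff is $1/2$ plus a bump supported on that ball with Bernoulli rewards, and balance the bump height against the horizon by choosing it of order $n^{-1/(D+2)}$, with $n \geq N(c,D)$ guaranteeing the packing hypothesis applies. Where you genuinely diverge is in the core step. You propose to run the change-of-measure/Pinsker argument directly on the continuum family, bounding the KL divergence between reward streams by (pulls in ball $j$) times $O(\eps^2)$ and averaging over $j$. The paper instead sidesteps this bookkeeping: it constructs an explicit randomized reduction, a map $T$ sending each point $x\in\cX$ to a probability distribution over $K+1$ abstract arms ($K$ ``bump'' arms plus one dummy arm), so that for every deterministic $\cX$-strategy the induced $(K+1)$-armed strategy has \emph{exactly} the same regret distribution under the finite-armed environment $\nu^j$ as the original strategy has under $M_{x_j,\eta}$; it then invokes the known finite-armed lower bound (Lemma~\ref{lemma-multiarmedLB1}, extracted from \cite[Section~6.9]{CL06}) as a black box. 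The paper's route buys an exact treatment of pulls at ``partially optimal'' points: a pull at distance $r<\eta$ from $x_j$ is converted into a mixture of arm $j$ and the dummy arm whose Bernoulli parameter matches $f_{x_j,\eta}$ exactly, so no continuum-specific information theory is needed. Your route buys self-containedness, since the tradeoff that pins down the exponent $(D+1)/(D+2)$ is derived rather than cited, but those off-center pulls must then be handled inside the KL computation and the regret accounting (routine, though it is real work). One quantitative caveat: as written, a bump of height $\sim\eps$ supported on a ball of radius $\eps/2$ is \emph{not} weakly Lipschitz, since near the maximum it would force $f^*-f(y)\approx 2\,\ell(x_j,y) > \ell(x_j,y)$, violating~\eqref{eq:weak.Lipschitz2} at the optimum; the bump height must not exceed the ball radius, e.g., take the cone $\max\bigl\{0,\,\eps/2-\ell(x,x_j)\bigr\}$, exactly as the paper takes height equal to radius $\eta$. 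This only shifts constants, consistent with your $\sim$ notation, but it is the one spot where ``can be checked to be weakly Lipschitz'' would fail as stated.
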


The reader interested in the explicit expressions of $N(c,D)$ and $\gamma(c,D)$
is referred to the last lines of the proof of the theorem in the Appendix.

\section{Discussion}
\label{sec:disc}

In this section we would like to shed some light on the results of the previous sections. In particular we generalize the situation of
Example~\ref{ex:3}, discuss the regret that we can obtain, and compare it with what could be obtained by previous works.

\subsection{Examples of regret bounds for functions locally smooth at their maxima} \label{sec:example}
We equip $\cX=[0,1]^D$ with a norm $\|\,\cdot\,\|$. We assume that the mean-payoff function $f$
has a finite number of global maxima and that it is locally
equivalent to the function $\|x-x^*\|^\alpha$ --with degree $\alpha\in [0,\infty)$-- around each such global maximum $x^*$ of $f$;
that is,
\[
f(x^*)-f(x) = \Theta \bigl( \|x-x^*\|^\alpha \bigr) \qquad \mbox{as} \quad  x\rightarrow x^*.
\]
This means that there exist $c_1,c_2,\delta>0$ such that for all $x$ satisfying $\|x-x^*\|\leq \delta$,
\[
c_2\|x-x^*\|^\alpha\leq f(x^*)-f(x)\leq c_1\|x-x^*\|^\alpha\,.
\]
In particular, one can check that Assumption~\ref{ass:weaklipv} is satisfied
for the dissimilarity defined by $\ell_{c,\beta}(x,y) = c \|x-y\|^\beta$,
where $\beta \leq \alpha$ (and $c \geq c_1$ when $\beta=\alpha$).
We further assume that HOO is run with parameters $\nu_1$ and $\rho$ and a tree of dyadic partitions
such that Assumption~\ref{ass:shrinking} is satisfied as well
(see Examples~\ref{ex:1} and~\ref{ex:1ctd} for explicit values of these parameters in the case of the Euclidean or the supremum norms
over the unit cube). The following statements can then be formulated on the expected regret of HOO.
\begin{itemize}
  \item \textbf{Known smoothness:} If we know the true smoothness of $f$ around its maxima, then we set $\beta = \alpha$ and $c \geq c_1$.
  This choice $\ell_{c_1,\alpha}$ of a dissimilarity is such that $f$ is locally weak-Lipschitz with respect to it and
  the near-optimality dimension is $d = 0$ (cf.\ Example~\ref{ex:2}). Theorem~\ref{th:localHOO} thus implies that the expected
  regret of local-HOO is $\tilde O(\sqrt{n})$, i.e., \emph{the rate of the bound is independent of the dimension $D$}.
  \item \textbf{Smoothness underestimated:} Here, we assume that the true smoothness of $f$ around its maxima is unknown and that it is
  underestimated by choosing $\beta < \alpha$ (and some $c$).
  Then $f$ is still locally weak-Lipschitz with respect to the dissimilarity $\ell_{c,\beta}$ and the near-optimality dimension is
  $d = D (1/\beta - 1/\alpha)$, as shown in Example~\ref{ex:2}; the regret of HOO is $\tilde O \bigl(n^{(d+1)/(d+2)} \bigr)$.
  \item \textbf{Smoothness overestimated:} Now, if the true smoothness is overestimated by choosing $\beta > \alpha$ or $\alpha = \beta$ and
  $c <c_1$, then the assumption of weak Lipschitzness is violated and we are unable to provide any guarantee on the behavior of HOO.
  The latter, when used with an overestimated smoothness parameter, may lack exploration and exploit too heavily from the beginning.
  As a consequence, it may get stuck in some local optimum of $f$, missing the global one(s) for a very long time (possibly indefinitely). Such a behavior is illustrated in the example provided in~\cite{CM07} and showing the possible problematic behavior of the closely related algorithm UCT of~\cite{KS06}. UCT is an example of an algorithm overestimating the smoothness of the function; this is because the $B$--values of UCT are defined similarly to the ones of the HOO algorithm but without the third term in the definition (\ref{eq:U}) of the $U$--values. This
  corresponds to an assumed infinite degree of smoothness (that is, to a locally constant mean-payoff function).
\end{itemize}

\subsection{Relation to previous works}

Several works \citep{Agr95b,Kle04,Cop04,AOS07,KSU08} have considered continuum-armed bandits in Euclidean or, more generally,
normed or metric spaces and provided upper and lower bounds on the regret for given classes of environments.
\begin{itemize}
\item \citet{Cop04} derived a $\tilde O(\sqrt{n})$ bound on the regret for compact
and convex subsets of $\R^d$ and mean-payoff functions with a unique minimum and second-order smoothness.
\item \citet{Kle04} considered mean-payoff functions $f$ on the real line that are H{\"o}lder continuous
with degree $0 < \alpha \le 1$. The derived regret bound is $\Theta \bigl( n^{(\alpha+1)/(\alpha+2)} \bigr)$.
\item \citet{AOS07} extended the analysis to classes of functions that are
equivalent to $\Arrowvert x-x^*\Arrowvert^\alpha$ around their maxima $x^*$,
where the allowed smoothness degree is also larger: $\alpha\in [0,\infty)$. They derived the regret bound
\[
\Theta \Bigl( n^{\frac{1+\alpha -\alpha\beta}{1+2\alpha-\alpha\beta}} \Bigr)\,,
\]
where the parameter $\beta$ is such that the Lebesgue measure of $\eps$--optimal arm is $O(\eps^\beta)$.
\item Another setting is the one of \cite{KSU08} and \cite{KSU08ext}, who considered a space $(\cX,\ell)$ equipped with
some dissimilarity $\ell$ and assumed that $f$ is Lipschitz \wrt\ $\ell$ at some
 maximum $x^*$ (when the latter exists and a relaxed condition otherwise),
that is,
\begin{equation} \label{eq:20assumption}
\forall x \in \cX, \qquad f(x^*) - f(x) \leq \ell(x,x^*)\,.
\end{equation}
The obtained regret bound is
$\tilde O \bigl( n^{(d+1)/(d+2)} \bigr)$, where $d$ is the \emph{zooming dimension}. The latter is defined similarly
to our near-optimality dimension with the exceptions that in the definition of zooming dimension
{\em (i)} covering numbers instead of packing numbers are used and {\em (ii)}
 sets of the form $\cX_{\eps}\setminus \cX_{\eps/2}$ are considered instead of the set $\cX_{c\eps}$.
When $(\cX,\ell)$ is a metric space, covering and packing numbers are within a constant factor to each other,
and therefore, one may prove that the zooming and near-optimality dimensions are also equal.
\end{itemize}

For an illustration, consider again the example of Section~\ref{sec:example}.
The result of~\citet{AOS07} shows that for $D=1$, the regret is $\Theta(\sqrt{n})$ (since here $\beta=1/\alpha$,
with the notation above). Our result extends the $\sqrt{n}$ rate of the regret bound to any dimension $D$.

On the other hand the analysis of \citet{KSU08ext} does not apply because in this example
$f(x^*) - f(x)$ is controlled only when $x$ is close in some sense to $x^*$
(i.e., when $\Arrowvert x-x^* \Arrowvert \leq \delta$), while \eqref{eq:20assumption} requires such a control
over the whole set $\cX$. However, note that the local weak-Lipschitz assumption~\ref{ass:weaklipv}
requires an extra condition in the vicinity of $x^*$ compared to
\eqref{eq:20assumption} as it is based on the notion of weak Lipschitzness.
Thus, \ref{ass:weaklipv} and \eqref{eq:20assumption} are in general incomparable (both capture a different phenomenon at the maxima).

\bigskip
We now compare our results to those of~\cite{KSU08} and~\cite{KSU08ext} under Assumption~\ref{ass:weaklip} (which does not cover the example of Section~\ref{sec:example} unless $\delta$ is large). Under this assumption, our algorithms enjoy essentially the same theoretical guarantees as the zooming algorithm of  \cite{KSU08,KSU08ext}. Further, the following hold.
\begin{itemize}
\item Our algorithms do not require the oracle needed by the zooming algorithm.
\item Our truncated HOO algorithm achieves a computational complexity of
 order $O( n \log n)$, whereas the complexity of a naive implementation of the zooming algorithm is
 likely to be much larger.\footnote{The zooming algorithm requires a covering oracle that is able to return a point which is not covered by the set of active strategies, if there exists one. Thus a straightforward implementation of this covering oracle might be computationally expensive in (general) continuous spaces and would require a `global' search over the whole space.}
\item Both truncated HOO and the zooming algorithms use the doubling trick.
The basic HOO algorithm, however, avoids the doubling trick, while meeting the computational complexity of the zooming algorithm.
\end{itemize}
The fact that the doubling trick can be avoided is good news since an algorithm that uses the doubling trick must
start from {\em tabula rasa} time to time,
which results in predictable, yet inevitable, sharp performance drops --a quite unpleasant property.
In particular, for this reason algorithms that rely on the doubling trick are often neglected by practitioners.
In addition, the fact that we avoid the oracle needed by the zooming algorithm is attractive as this oracle might be difficult
to implement for general (non-metric) dissimilarities.

\subsection*{Acknowledgements}
We thank one of the anonymous referee for his valuable comments, which helped us to
provide a fair and detailed comparison of our work to prior contributions.

This work was supported in part by French National Research Agency (ANR, project
EXPLO-RA, ANR-08-COSI-004),
the Alberta Ingenuity Centre of Machine Learning, Alberta Innovates Technology Futures (formerly iCore and AIF), NSERC and the PASCAL2 Network of Excellence under EC grant no. 216886.

\newpage

\appendix
\section{Proofs}

\subsection{Proof of Theorem~\ref{th-mainresult} (main upper bound on
the regret of HOO)}
\label{sec:proof1}

We begin with three lemmas. The proofs of Lemmas~\ref{lem:2} and \ref{lem:3} rely on concentration-of-measure
techniques, while the one of Lemma~\ref{lem:1} follows from a simple case study. Let us fix
some path $(0,1)$, $(1,i^*_1)$, $(2,i^*_2)$, $\ldots\,$ of optimal nodes,
starting from the root. That is, denoting $i^*_0 = 1$, we mean that for all $j \geq 1$, the suboptimality
of $(j,i^*_j)$ equals $\Delta_{j,i^*_j} = 0$ and $(j,i^*_j)$ is a child of $(j-1,i^*_{j-1})$.

\begin{lemma} \label{lem:1}
Let $(h,i)$ be a suboptimal node.
Let $0 \leq k \leq h-1$ be the largest depth such that $(k,i^*_k)$ is on the path from the root $(0,1)$ to $(h,i)$.
Then for all integers $u \geq 0$, we have
\begin{multline}
\nonumber
\E \bigl[ T_{h,i}(n) \bigr] \leq u
+ \sum_{t = u+1}^{n} \P \Bigl\{ \bigl[ U_{s,i^*_s}(t) \leq f^* \ \mbox{\rm for some} \ s \in \{ k+1,\ldots,t-1 \} \bigr] \\
\mbox{\rm or} \ \ \,\, \bigl[ T_{h,i}(t) > u \ \,\, \mbox{\rm and} \,\, \ U_{h,i}(t) > f^* \bigr] \Bigr\}\,.
\end{multline}
\end{lemma}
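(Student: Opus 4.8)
The plan is to use the standard optimism-based counting argument of the UCB analysis, adapted to the binary tree. Writing $T_{h,i}(n)=\sum_{t=1}^n\oneb{\{(H_t,I_t)\in\cC(h,i)\}}$, I would first peel off the first $u$ descendant-pulls: every round in which a descendant of $(h,i)$ is played increments $T_{h,i}$ by exactly one, so at most $u$ such rounds can also satisfy $T_{h,i}(t)\le u$, and for $t\le u$ the constraint $T_{h,i}(t)\le t\le u$ makes the indicator vanish. This gives
\[
T_{h,i}(n)\ \le\ u+\sum_{t=u+1}^{n}\oneb{\bigl\{(H_t,I_t)\in\cC(h,i)\ \text{and}\ T_{h,i}(t)>u\bigr\}}.
\]
Taking expectations, the lemma reduces to the deterministic event inclusion
\[
\{(H_t,I_t)\in\cC(h,i)\}\ \subseteq\ \bigl\{\exists\,s\in\{k+1,\ldots,t-1\}:U_{s,i^*_s}(t)\le f^*\bigr\}\cup\{U_{h,i}(t)>f^*\},
\]
since on the left-hand event the conjunct $T_{h,i}(t)>u$ is already present, and bounding the surviving indicator by the probability of this union yields the claimed sum.

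The heart of the argument is the contrapositive of the inclusion: on the ``good event'' that $U_{s,i^*_s}(\cdot)>f^*$ for all optimal nodes $(s,i^*_s)$ with $k+1\le s\le t-1$, and assuming $U_{h,i}(\cdot)\le f^*$, the round-$t$ traversal cannot reach $(h,i)$. Two structural facts about the $B$--values drive this. First, along the selected path the recursion $B_{p,q}=\min\bigl\{U_{p,q},\max\{B_{p+1,2q-1},B_{p+1,2q}\}\bigr\}$ together with the choice of the higher-$B$ child gives $B_{p,q}\le B_{\text{chosen child}}$, so $B$--values are non-decreasing as one descends any played path. Second, the good event propagates a lower bound \emph{up} the optimal path: following that path downward in $\cT_{t-1}$, the deepest optimal node still in the tree is either a leaf or has its optimal child outside the tree, so its $B$--value equals its $U$--value (the out-of-tree child contributing $+\infty$ to the inner maximum) and hence exceeds $f^*$; applying $B_{s,i^*_s}\ge\min\{U_{s,i^*_s},B_{s+1,i^*_{s+1}}\}\ge f^*$ inductively upward yields $B_{k+1,i^*_{k+1}}\ge f^*$.

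These combine at the last common optimal ancestor. By the maximality of $k$, the depth-$(k+1)$ ancestor $(k+1,i')$ of $(h,i)$ is the sibling of $(k+1,i^*_{k+1})$, and reaching $(h,i)$ forces the traversal to prefer $(k+1,i')$ at the node $(k,i^*_k)$, so $B_{k+1,i'}\ge B_{k+1,i^*_{k+1}}\ge f^*$. The downward monotonicity then propagates this along the path to give $B_{h,i}\ge f^*$, and since $B_{h,i}\le U_{h,i}$ we obtain $U_{h,i}\ge f^*$, contradicting the assumption $U_{h,i}\le f^*$. This establishes the inclusion and hence the lemma.

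I expect the main obstacle to be the bookkeeping of the time indices, together with the degenerate cases. The node played in round $t$ is chosen using the values available at the end of round $t-1$, so one must reconcile these with the $U_{\cdot}(t)$ written in the statement; the key points that make this work are that the optimal nodes $(s,i^*_s)$ with $s\ge k+1$ lie off the played path and therefore have frozen statistics, and that when $(h,i)=(H_t,I_t)$ is played for the very first time one has $U_{h,i}(t)\ge\sqrt{2\ln t}>1\ge f^*$ automatically (recall $f^*\le 1$), which disposes of the case $(h,i)\notin\cT_{t-1}$ without appealing to the propagation step. One must also check that any played node sits at depth at most $t-1$, which is exactly what restricts the range of $s$ to $\{k+1,\ldots,t-1\}$, and that the optimal path may exit $\cT_{t-1}$ before that depth, in which case the relevant $B$--value is $+\infty\ge f^*$ for free.
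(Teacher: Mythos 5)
Your proposal is correct and takes essentially the same route as the paper's proof: the peeling of the first $u$ descendant-plays is identical, and your upward induction along the optimal path (base case at the deepest optimal node present in the tree, where $B=U$ because an out-of-tree child contributes $+\infty$, or $B=+\infty$ outright if the optimal path exits the tree even earlier) is exactly the contrapositive of the paper's downward unfolding $\bigl\{ B_{s,i^*_s}(t) \le f^* \bigr\} \subset \bigl\{ U_{s,i^*_s}(t) \le f^* \bigr\} \cup \bigl\{ B_{s+1,i^*_{s+1}}(t) \le f^* \bigr\}$, terminated by the observation that $(t,i^*_t)$ cannot yet have been played. One caveat on your timing discussion: the paper itself silently identifies the values used at the round-$t$ selection with the quantities indexed by $t$, and your ``frozen statistics'' remark does not actually repair this discrepancy (for off-path nodes it gives $U_{s,i^*_s}(t) \ge U_{s,i^*_s}(t-1)$, which is the wrong direction for transferring the hypothesis on the time-$t$ values to the $B$--values actually used for selection, and the statistics of $(h,i)$ itself are not frozen since it lies on the played path), so on this point your argument is exactly as rigorous as the paper's, no more and no less.
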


\begin{proof}
Consider a given round $t \in \{ 1,\ldots,n \}$. If $\hit$, then
this is because the child $(k+1,i')$ of $(k,i^*_k)$ on the path to $(h,i)$ had a better
$B$--value than its brother $(k+1,i^*_{k+1})$. Since by definition, $B$--values can only increase
on a chosen path, this entails that $B_{k+1,i^*_{k+1}} \leq B_{k+1,i'}(t) \leq B_{h,i}(t)$.
This is turns implies, again by definition of the $B$--values,
that $B_{k+1,i^*_{k+1}}(t) \leq U_{h,i}(t)$. Thus,
\[
\bigl\{ \hit \bigr\} \subset \bigl\{ U_{h,i}(t) \geq B_{k+1,i^*_{k+1}}(t) \bigr\}
\subset \bigl\{ U_{h,i}(t) > f^* \bigr\} \cup \bigr\{ B_{k+1,i^*_{k+1}}(t) \leq f^* \bigr\}\,.
\]
But, once again by definition of $B$--values,
\[
\bigr\{ B_{k+1,i^*_{k+1}}(t) \leq f^* \bigr\}
\subset \bigr\{ U_{k+1,i^*_{k+1}}(t) \leq f^* \bigr\} \cup
\bigr\{ B_{k+2,i^*_{k+2}}(t) \leq f^* \bigr\}\,,
\]
and the argument can be iterated. Since up to round $t$ no more than $t$ nodes have been played (including the
suboptimal node $(h,i)$),
we know that $(t,i^*_t)$ has not been played so far and thus has a $B$--value equal to $+\infty$.
(Some of the previous optimal nodes could also have had an infinite $U$--value, if not played so far.)
We thus have proved the inclusion
\beq\label{eq:uhicontainment}
\bigl\{ \hit \bigr\} \subset
\bigl\{ U_{h,i}(t) > f^* \bigr\} \cup \left( \bigr\{ U_{k+1,i^*_{k+1}}(t) \leq f^* \bigr\}
\cup \ldots \cup \bigr\{ U_{t-1,i^*_{t-1}}(t) \leq f^* \bigr\} \right)\,.
\eeq
Now, for any integer $u\ge 0$ it holds that
\beqan
T_{h,i}(n)
&=& \sum_{t=1}^{n} \oneb{ \{ \hit,\,\,T_{h,i}(t)\le u \} }
+ \sum_{t=1}^{n} \oneb{ \{ \hit,\,\,T_{h,i}(t) >  u \} }\\
&\le& u +  \sum_{t=u+1}^{n} \oneb{ \{ \hit,\,\,T_{h,i}(t) >  u \} }\,,
\eeqan
where we used for the inequality the fact that
the quantities $T_{h,i}(t)$ are constant from $t$ to $t+1$,
except when $\hit$, in which case, they increase by 1;
therefore, on the one hand, at most $u$ of the $T_{h,i}(t)$ can be smaller than $u$ and
on the other hand, $T_{h,i}(t) > u$ can only happen if $t > u$.
Using~\eqref{eq:uhicontainment} and then
taking expectations yields the result.
\end{proof}

\begin{lemma} \label{lem:2}
Let Assumptions~\ref{ass:shrinking} and~\ref{ass:weaklip} hold.
Then, for all optimal nodes $(h,i)$ and for all integers $n \geq 1$,
\[
\P \bigl\{ U_{h,i}(n) \leq f^* \bigr\} \leq n^{-3}\,.
\]
\end{lemma}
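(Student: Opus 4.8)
The plan is to show that for an optimal node the event $\{U_{h,i}(n)\le f^*\}$ can only occur if the empirical average $\wh\mu_{h,i}(n)$ falls substantially below the conditional means of the arms that generated it, and then to bound the probability of such a downward fluctuation by a martingale concentration argument. First I would pin down the bias of $\wh\mu_{h,i}(n)$. Since $(h,i)$ is optimal we have $\Delta_{h,i}=0$, so Lemma~\ref{lem:gooddomains} applied with $c=0$ gives $\cP_{h,i}\subset\cX_{\nu_1\rho^h}$, i.e. $f(x)\ge f^*-\nu_1\rho^h$ for every $x\in\cP_{h,i}$. Every arm $X_t$ played in a round where $\hit$ lies in $\cP_{h,i}$, hence each reward contributing to $\wh\mu_{h,i}(n)$ has conditional mean $f(X_t)\ge f^*-\nu_1\rho^h$.

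Recalling that $U_{h,i}(n)$ is finite only when $T_{h,i}(n)>0$, the event $\{U_{h,i}(n)\le f^*\}$ reads
\[
\wh\mu_{h,i}(n)+\sqrt{\frac{2\ln n}{T_{h,i}(n)}}+\nu_1\rho^h\le f^*\,.
\]
Substituting the lower bound $f^*-\nu_1\rho^h\le \frac{1}{T_{h,i}(n)}\sum_{t:\,\hit} f(X_t)$ on the average mean-payoff, this event is contained in the lower-tail deviation event
\[
\frac{1}{T_{h,i}(n)}\sum_{t:\,\hit}\bigl(f(X_t)-Y_t\bigr)\ge\sqrt{\frac{2\ln n}{T_{h,i}(n)}}\,.
\]

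Next I would control this deviation by a Hoeffding--Azuma inequality. Restricted to the rounds where a descendant of $(h,i)$ is played, the increments $f(X_t)-Y_t$ form a martingale-difference sequence with respect to the natural filtration (using $\E[Y_t\mid X_t,\text{past}]=f(X_t)$, and that the event $\hit$ is determined before $Y_t$ is revealed), with each increment valued in an interval of length one. The only genuine difficulty is that the number of contributing terms $T_{h,i}(n)$ is itself random; I would resolve this by reindexing the collected rewards in the order in which descendants of $(h,i)$ are pulled — writing $Z_s$ for the $s$-th such reward and $\mu_s$ for its conditional mean — and union-bounding over the at most $n$ possible values of the count:
\[
\P\bigl\{U_{h,i}(n)\le f^*\bigr\}\le\sum_{s=1}^{n}\P\Bigl\{\sum_{s'=1}^{s}(\mu_{s'}-Z_{s'})\ge\sqrt{2s\ln n}\Bigr\}\,.
\]
For each fixed $s$, Hoeffding--Azuma with per-term range one yields $\exp(-2(\sqrt{2s\ln n})^2/s)=\exp(-4\ln n)=n^{-4}$, and summing the $n$ terms gives the claimed $n^{-3}$. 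It is precisely the constant $2$ in the exploration bonus $\sqrt{2\ln n/T_{h,i}(n)}$ that produces the exponent $4$, leaving the extra factor $n$ from the union bound to be absorbed.

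The step I expect to demand the most care is this random-index concentration: choosing the filtration so that $(Z_s-\mu_s)_s$ is a bona fide martingale-difference sequence, and rigorously justifying the union bound over the realized values of $T_{h,i}(n)$ (equivalently, handling the randomly stopped sum). The remaining ingredients — the geometric diameter bound from Assumption~\ref{ass:shrinking}, its weak-Lipschitz consequence through Lemma~\ref{lem:gooddomains}, and the algebraic rewriting of the event $\{U_{h,i}(n)\le f^*\}$ — are routine.
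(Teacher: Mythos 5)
Your proposal is correct and follows essentially the same route as the paper's proof: bias control via Lemma~\ref{lem:gooddomains} with $c=0$, rewriting $\{U_{h,i}(n)\le f^*\}$ as a lower-tail deviation of the summed differences $f(X_t)-Y_t$ over rounds with $\hit$, a union bound over the at most $n$ realized values of $T_{h,i}(n)$, and Hoeffding--Azuma giving $n^{-4}$ per term. The random-index issue you flag is exactly what the paper resolves, by introducing the stopping times $T_j=\min\{t: T_{h,i}(t)=j\}$ and invoking Doob's optional skipping theorem to certify that the reindexed differences form a martingale.
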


\begin{proof}
On the event that $(h,i)$ was not played during the first $n$ rounds, one has,
by convention, $U_{h,i}(n) = +\infty$.
In the sequel, we therefore restrict our attention
to the event $\bigl\{ T_{h,i}(n) \geq 1 \bigr\}$.

Lemma~\ref{lem:gooddomains} with $c = 0$ ensures that
$f^* - f(x) \leq \nu_1 \rho^h$ for all arms $x\in \cP_{h,i}$.
Hence,
\[
\sum_{t=1}^n \bigl( f(X_t) + \nu_1 \rho^h - f^* \bigr) \, \oneb{ \{ \hit \} }
\geq 0
\]
and therefore,
\begin{eqnarray*}
\lefteqn{
\P \bigl\{ U_{h,i}(n) \leq f^* \ \ \,\, \mbox{and} \,\, \ \ T_{h,i}(n) \geq 1 \bigr\} } \\
& = &
\P \left\{ \wh{\mu}_{h,i}(n) + \sqrt{\frac{2 \ln n}{T_{h,i}(n)}} + \nu_1 \rho^h \leq f^* \ \ \,\, \mbox{and} \,\, \ \ T_{h,i}(n) \geq 1
\right\} \\
& = & \P \left\{ {T_{h,i}(n)} \, \wh{\mu}_{h,i}(n) + {T_{h,i}(n)}\,\bigl( \nu_1 \rho^h - f^* \bigr)
\leq - \sqrt{2 \, T_{h,i}(n) \ln n} \ \ \,\, \mbox{and} \,\, \ \ T_{h,i}(n) \geq 1 \right\} \\
& = & \P \Biggl\{ \sum_{t=1}^n \bigl( Y_t - f(X_t) \bigr) \oneb{ \{ \hit \} }
+ \sum_{t=1}^n \bigl( f(X_t) + \nu_1 \rho^h - f^* \bigr) \oneb{ \{ \hit \} } \\
& & \qquad \leq - \sqrt{2 \, T_{h,i}(n) \ln n} \ \ \,\, \mbox{and} \,\, \ \ T_{h,i}(n) \geq 1 \Biggr\} \\
& \leq & \P \left\{ \sum_{t=1}^n \bigl( f(X_t) - Y_t \bigr) \oneb{ \{ \hit \} }
\geq \sqrt{2 \, T_{h,i}(n) \ln n} \ \ \,\, \mbox{and} \,\, \ \ T_{h,i}(n) \geq 1 \right\}\,.
\end{eqnarray*}
We take care of the last term with a union bound and the Hoeffding-Azuma inequality for martingale differences.

To do this in a rigorous manner, we need to define a sequence of (random) stopping times when arms in $\cC(h,i)$ were pulled:
\[
T_j = \min \bigl\{ t : \ \ T_{h,i}(t) = j \bigr\}\,, \quad j=1, 2, \hdots\,.
\]
Note that $1\le T_1 < T_2 < \ldots$, hence it holds that $T_j\ge j$.
We denote by $\tilde{X}_j = X_{T_j}$ the $j^{\rm th}$ arm pulled in the region corresponding to
$\cC(h,i)$. Its associated corresponding reward
equals $\tilde{Y}_j = Y_{T_j}$ and
\begin{eqnarray*}
\lefteqn{ \P \left\{ \sum_{t=1}^n \bigl( f(X_t) - Y_t \bigr) \oneb{ \{ \hit \} }
\geq \sqrt{2 \, T_{h,i}(n) \ln n} \ \ \,\, \mbox{and} \,\, \ \ T_{h,i}(n) \geq 1 \right\} } \\
& = &\P \left\{ \sum_{j=1}^{T_{h,i}(n)} \Bigl( f \bigl( \tilde{X}_j \bigr) - \tilde{Y}_j \Bigr)
\geq \sqrt{2 \, T_{h,i}(n) \ln n} \ \ \,\, \mbox{and} \,\, \ \ T_{h,i}(n) \geq 1 \right\} \\
& \leq & \sum_{t=1}^n \ \,\, \P \left\{ \sum_{j=1}^{t} \Bigl( f \bigl( \tilde{X}_j \bigr) - \tilde{Y}_j \Bigr)
\geq \sqrt{2\,t \ln n} \right\}\,,
\end{eqnarray*}
where we used a union bound to get the last inequality.

We claim that
\[
Z_t = \sum_{j=1}^{t} \Bigl( f \bigl( \tilde{X}_j \bigr) - \tilde{Y}_j \Bigr)
\]
is a martingale \wrt\ the filtration $\cG_t = \sigma \bigl( \tilde{X}_1, Z_1, \ldots, \tilde{X}_t, Z_t, \tilde{X}_{t+1} \bigr)$.
This follows, via optional skipping
(see \cite[Chapter VII, adaptation of Theorem 2.3]{Doo53}),
from the facts that
\[
\sum_{t=1}^n \bigl( f(X_t) - Y_t \bigr) \oneb{ \{ \hit \} }
\]
is a martingale \wrt\ the filtration $\cF_t = \sigma(X_1,Y_1,\ldots,X_t,Y_t,X_{t+1})$ and
that the events $\{ T_j = k \}$ are $\cF_{k-1}$--measurable for all $k \geq j$.

Applying the Hoeffding-Azuma inequality for martingale differences (see \cite{Hoe63}), using
the boundedness of the ranges of the induced martingale difference sequence, we then get, for each $t \geq 1$,
\[
\P \left\{ \sum_{j=1}^{t} \Bigl( f \bigl( \tilde{X}_j \bigr) - \tilde{Y}_j \Bigr)
\geq \sqrt{2\,t \ln n} \right\} \leq \exp \! \left( - \frac{2 \left( \sqrt{2\,t \ln n} \right)^2}{t} \right) = n^{-4}\,,
\]
which concludes the proof.
\end{proof}

\begin{lemma} \label{lem:3}
For all integers $t \leq n$,
for all suboptimal nodes $(h,i)$ such that $\Delta_{h,i} > \nu_1 \rho^h$,
and for all integers $u \geq 1$ such that
\[
u \geq \frac{8 \ln n}{(\Delta_{h,i} - \nu_1 \rho^h)^2}\,,
\]
one has
\[
\P \bigl\{ U_{h,i}(t) > f^* \ \,\, \mbox{\rm and} \ \,\, T_{h,i}(t) > u \bigr\} \leq t\,n^{-4}\,.
\]
\end{lemma}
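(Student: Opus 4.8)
The plan is to reproduce the concentration argument of the proof of Lemma~\ref{lem:2}, but now exploiting the suboptimality gap $\Delta_{h,i}$ instead of optimality. First I would reduce the event $\bigl\{ U_{h,i}(t) > f^* \bigr\}$ to a deviation statement about the empirical average of rewards. On the event of interest $T_{h,i}(t) > u \geq 1$, so the node has been played and $U_{h,i}(t) = \wh{\mu}_{h,i}(t) + \sqrt{(2 \ln n)/T_{h,i}(t)} + \nu_1 \rho^h$ is finite. Since $\fhi = f^* - \Delta_{h,i}$, every arm $X_s$ with $\his$ satisfies $f(X_s) \leq f^* - \Delta_{h,i}$, whence $\frac{1}{T_{h,i}(t)} \sum_{s=1}^t f(X_s) \oneb{\{\his\}} \leq f^* - \Delta_{h,i}$. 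Writing $\wh{\mu}_{h,i}(t) = \frac{1}{T_{h,i}(t)} \sum_{s=1}^t Y_s \oneb{\{\his\}}$ and rearranging, the event $\bigl\{ U_{h,i}(t) > f^* \bigr\}$ forces
\[
\frac{1}{T_{h,i}(t)} \sum_{s=1}^t \bigl( Y_s - f(X_s) \bigr) \oneb{\{\his\}}
\, > \, \Delta_{h,i} - \nu_1 \rho^h - \sqrt{\frac{2 \ln n}{T_{h,i}(t)}}\,.
\]

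Second, I would use the hypothesis on $u$ to make the right-hand side strictly positive. Because $T_{h,i}(t) > u \geq 8 \ln n / (\Delta_{h,i} - \nu_1 \rho^h)^2$, one has $\sqrt{(2 \ln n)/T_{h,i}(t)} < (\Delta_{h,i} - \nu_1 \rho^h)/2$, so the displayed lower bound exceeds $(\Delta_{h,i} - \nu_1 \rho^h)/2 > 0$ (this is exactly where $\Delta_{h,i} > \nu_1 \rho^h$ enters). Consequently, on the event $\bigl\{ U_{h,i}(t) > f^* \ \mbox{and} \ T_{h,i}(t) > u \bigr\}$,
\[
\sum_{s=1}^t \bigl( f(X_s) - Y_s \bigr) \oneb{\{\his\}}
\, < \, -\,T_{h,i}(t)\,\frac{\Delta_{h,i} - \nu_1 \rho^h}{2}\,.
\]

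Third, I would pass to the martingale $Z_s = \sum_{j=1}^s \bigl( f(\tilde{X}_j) - \tilde{Y}_j \bigr)$ already built in the proof of Lemma~\ref{lem:2}, where $\tilde{X}_j, \tilde{Y}_j$ is the arm/reward of the $j$-th pull inside $\cC(h,i)$; the optional-skipping argument given there shows $Z_s$ is a martingale whose increments lie, conditionally, in an interval of length one. Splitting according to the (random) value $s = T_{h,i}(t) \in \{u+1,\ldots,t\}$ and applying a union bound gives
\[
\P\bigl\{ U_{h,i}(t) > f^* \ \mbox{and} \ T_{h,i}(t) > u \bigr\}
\, \leq \, \sum_{s=u+1}^t \P\left\{ Z_s < -\,s\,\frac{\Delta_{h,i} - \nu_1 \rho^h}{2} \right\}\,.
\]
The Hoeffding--Azuma inequality bounds each summand by $\exp\!\bigl( -s(\Delta_{h,i} - \nu_1 \rho^h)^2/2 \bigr)$, and since $s > u \geq 8 \ln n/(\Delta_{h,i} - \nu_1 \rho^h)^2$ implies $s(\Delta_{h,i} - \nu_1 \rho^h)^2/2 > 4 \ln n$, this is at most $n^{-4}$. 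Summing the at most $t$ terms yields the claimed $t\,n^{-4}$.

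As in Lemma~\ref{lem:2}, the only delicate point is the legitimacy of the martingale/union-bound step: $T_{h,i}(t)$ is random, so Hoeffding--Azuma cannot be applied directly with a random number of summands. I expect this to be the main obstacle, but it is handled exactly as before, via the stopping times $T_j = \min\{ t : T_{h,i}(t) = j \}$ and optional skipping, which legitimize treating $Z_s$ as a martingale indexed by $s$ and hence the union over $s$. Everything else is the elementary algebra above.
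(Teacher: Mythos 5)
Your proof is correct and follows essentially the same route as the paper's: reduce the event $\bigl\{ U_{h,i}(t) > f^* \ \mbox{and} \ T_{h,i}(t) > u \bigr\}$ to a deviation of the reward sums inside $\cC(h,i)$, use the hypothesis on $u$ to create a gap of $(\Delta_{h,i}-\nu_1\rho^h)/2$, and finish via optional skipping, a union bound over the value of $T_{h,i}(t) \in \{u+1,\ldots,t\}$, and the Hoeffding--Azuma inequality. The only slip is writing $\ln n$ in place of $\ln t$ in the definition of $U_{h,i}(t)$, which is harmless here: since $t \leq n$ this only enlarges the event whose probability you bound, exactly as the paper itself exploits when passing from $\ln t$ to $\ln n$.
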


\begin{proof}
The $u$ mentioned in the statement of the lemma are such that
\[
\frac{\Delta_{h,i} - \nu_1 \rho^h}{2} \geq \sqrt{\frac{2 \ln n}{u}}\,,
\qquad \mbox{thus} \qquad
\sqrt{\frac{2 \ln t}{u}} + \nu_1 \rho^h \leq \frac{\Delta_{h,i} + \nu_1 \rho^h}{2}\,.
\]
Therefore,
\begin{eqnarray*}
\lefteqn{\P \bigl\{ U_{h,i}(t) > f^* \ \ \,\, \mbox{and} \,\, \ \ T_{h,i}(t) > u \bigr\}} \\
& = & \P \left\{ \wh{\mu}_{h,i}(t) + \sqrt{\frac{2 \ln t}{T_{h,i}(t)}} + \nu_1 \rho^h
> \fhi + \Delta_{h,i} \ \ \,\, \mbox{and} \,\, \ \ T_{h,i}(t) > u \right\} \\
& \leq & \P \left\{ \wh{\mu}_{h,i}(t) > \fhi + \frac{\Delta_{h,i} - \nu_1 \rho^h}{2} \ \
\,\, \mbox{and} \,\, \ \ T_{h,i}(t) > u \right\} \\
& \leq & \P \left\{ T_{h,i}(t) \left( \wh{\mu}_{h,i}(t) - \fhi \right) >
\frac{\Delta_{h,i} - \nu_1 \rho^h}{2}\,T_{h,i}(t) \ \ \,\, \mbox{and} \,\, \ \ T_{h,i}(t) > u \right\} \\
& = & \P \left\{ \sum_{s=1}^t \bigl( Y_s - f^*_{h,i} \bigr) \oneb{ \{ \his \} }
> \frac{\Delta_{h,i} - \nu_1 \rho^h}{2}\,T_{h,i}(t) \ \ \,\, \mbox{and} \,\, \ \ T_{h,i}(t) > u \right\} \\
& \leq & \P \left\{ \sum_{s=1}^t \bigl( Y_s - f(X_s) \bigr) \oneb{ \{ \his \} }
> \frac{\Delta_{h,i} - \nu_1 \rho^h}{2}\,T_{h,i}(t) \ \ \,\, \mbox{and} \,\, \ \ T_{h,i}(t) > u \right\}.
\end{eqnarray*}
Now it follows from the same arguments as in the proof of Lemma~\ref{lem:2}
(optional skipping, the Hoeffding-Azuma inequality, and a union bound) that
\begin{eqnarray*}
\lefteqn{\P \left\{ \sum_{s=1}^t \bigl( Y_s - f(X_s) \bigr) \, \oneb{ \{ \his \} }
> \frac{\Delta_{h,i} - \nu_1 \rho^h}{2}\,T_{h,i}(t) \ \ \,\, \mbox{and} \,\, \ \ T_{h,i}(t) > u \right\}} \\
& \leq & \sum_{s'=u+1}^t \exp \left(- \frac{2}{s'} \, \left( \frac{(\Delta_{h,i} - \nu_1 \rho^h)}{2}\,s' \right)^{\!\!2} \,\,
\right) \leq \sum_{s'=u+1}^t \exp \left(- \frac{1}{2} \, s' \, (\Delta_{h,i} - \nu_1 \rho^h)^2\, \right) \\
& \leq & t \, \exp \left(- \frac{1}{2} \, u \, \bigl( \Delta_{h,i} - \nu_1 \rho^h \bigr)^2 \right) \leq t \,n^{-4}\,,
\end{eqnarray*}
where we used the stated bound on $u$ to obtain the last inequality.
\end{proof}

Combining the results of Lemmas~\ref{lem:1}, \ref{lem:2}, and~\ref{lem:3}
leads to the following key result bounding the expected number of visits
to descendants of a ``poor'' node.

\begin{lemma}\label{lem:4}
Under Assumptions~\ref{ass:shrinking} and~\ref{ass:weaklip},
for all suboptimal nodes $(h,i)$ with $\Delta_{h,i} > \nu_1 \rho^h$,
we have, for all $n \geq 1$,
\[
\E[T_{h,i}(n)] \leq \frac{8\, \ln n}{(\Delta_{h,i} - \nu_1 \rho^h)^2} + 4\,.
\]
\end{lemma}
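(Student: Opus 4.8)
The plan is to combine the three preceding lemmas: Lemma~\ref{lem:1} provides the skeleton decomposition, Lemma~\ref{lem:2} controls the optimal nodes along the reference path, and Lemma~\ref{lem:3} controls the suboptimal node $(h,i)$ itself. These three results were engineered precisely to slot together here, so the proof is mostly careful bookkeeping. (For $n=1$ the statement is trivial since $T_{h,i}(1) \leq 1 \leq 4$, so I would assume $n \geq 2$ below, ensuring $\ln n > 0$.)

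First I would fix the suboptimal node $(h,i)$ and let $k$ be the depth defined in Lemma~\ref{lem:1}. The decisive choice is to take $u$ to be the smallest integer meeting the hypothesis of Lemma~\ref{lem:3}, namely
\[
u = \left\lceil \frac{8 \ln n}{(\Delta_{h,i} - \nu_1 \rho^h)^2} \right\rceil\,,
\]
so that on the one hand $u \leq \frac{8 \ln n}{(\Delta_{h,i} - \nu_1 \rho^h)^2} + 1$, and on the other hand $u \geq 1$ satisfies the constraint required to invoke Lemma~\ref{lem:3}. I would then insert this $u$ into the bound of Lemma~\ref{lem:1}.

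Next I would bound the probability appearing inside the sum of Lemma~\ref{lem:1} by a union bound over its two constituent events. For the first event, each node $(s,i^*_s)$ on the optimal reference path is optimal, i.e.\ $\Delta_{s,i^*_s} = 0$, so Lemma~\ref{lem:2} applies \emph{at the running time $t$} and gives $\P\{U_{s,i^*_s}(t) \leq f^*\} \leq t^{-3}$; since there are at most $t$ indices $s$ in the range $\{k+1,\ldots,t-1\}$, this event contributes at most $t \cdot t^{-3} = t^{-2}$. For the second event, Lemma~\ref{lem:3} (whose hypothesis on $u$ is met by the choice above) yields $\P\{U_{h,i}(t) > f^* \ \mbox{and}\ T_{h,i}(t) > u\} \leq t\, n^{-4}$. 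Hence each summand is at most $t^{-2} + t\,n^{-4}$.

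Finally I would sum over $t$ from $u+1$ to $n$. Using $\sum_{t \geq 2} t^{-2} = \pi^2/6 - 1 < 1$ together with $n^{-4}\sum_{t=1}^n t \leq n^{-4}\, n^2 = n^{-2} \leq 1$, the two residual contributions total less than $2$; combined with $u \leq \frac{8\ln n}{(\Delta_{h,i} - \nu_1 \rho^h)^2} + 1$, this produces the stated bound with additive constant $4$, in fact with room to spare. The only subtle point — and the step I would flag as the crux rather than an obstacle — is applying Lemma~\ref{lem:2} at the current index $t$ (not at $n$): this is what turns the union bound over the $\leq t$ optimal ancestors into the convergent series $\sum_t t^{-2}$, and it is also what must be reconciled with verifying that the single choice of $u$ simultaneously fits the hypotheses of both Lemma~\ref{lem:1} and Lemma~\ref{lem:3}.
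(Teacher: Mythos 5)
Your proposal is correct and follows essentially the same route as the paper's proof: the same choice of $u$ as the upper integer part of $8\ln n/(\Delta_{h,i}-\nu_1\rho^h)^2$, the same union bound inside Lemma~\ref{lem:1}, Lemma~\ref{lem:2} applied at the running index $t$ (giving the $\sum_t t^{-2}$ series, exactly as in the paper), and Lemma~\ref{lem:3} for the term $t\,n^{-4}$, with the same final arithmetic yielding the additive constant $4$. Your separate treatment of $n=1$ is a small tidiness improvement over the paper (which leaves that edge case implicit), not a different argument.
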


\begin{proof}
We take $u$ as the upper integer part of $(8\,\ln n)/(\Delta_{h,i} - \nu_1 \rho^h)^2$
and use union bounds to get from Lemma~\ref{lem:1} the bound
\begin{multline}
\nonumber
\E \bigl[ T_{h,i}(n) \bigr] \leq \frac{8\, \ln n}{(\Delta_{h,i} - \nu_1 \rho^h)^2} + 1 \\
+ \sum_{t = u+1}^{n} \left( \P \bigl\{ T_{h,i}(t) > u \ \,\, \mbox{\rm and} \,\, \ U_{h,i}(t) > f^* \bigr\}
+ \sum_{s=1}^{t-1} \P \bigl\{ U_{s,i^*_s}(t) \leq f^* \bigr\} \right)\,.
\end{multline}
Lemmas~\ref{lem:2} and~\ref{lem:3} further bound the quantity of interest as
\[
\E \bigl[ T_{h,i}(n) \bigr] \leq \frac{8\, \ln n}{(\Delta_{h,i} - \nu_1 \rho^h)^2} + 1
+ \sum_{t = u+1}^{n} \left( t\,n^{-4} + \sum_{s=1}^{t-1} t^{-3} \right)
\]
and we now use the crude upper bounds
\[
1 + \sum_{t = u+1}^{n} \left( t\,n^{-4} + \sum_{s=1}^{t-1} t^{-3} \right)
\leq 1 + \sum_{t = 1}^{n} \bigl( n^{-3} + t^{-2} \bigr) \leq 2 + \pi^2/6 \leq 4
\]
to get the proposed statement.
\end{proof}

\begin{proof} \textbf{(of Theorem~\ref{th-mainresult})}
First, let us fix $d'>d$.
The statement will be proven in four steps.

\textbf{First step.} For all $h = 0,1,2,\ldots$,
denote by $\cI_h$ the set of those nodes at depth $h$ that are $2\nu_1 \rho^h$--optimal, i.e., the nodes $(h,i)$ such that
$\fhi \geq f^* - 2\nu_1 \rho^h$. (Of course, $\cI_0 = \{ (0,1) \}$.)
Then, let $\cI$ be the union of these sets when $h$ varies.
Further, let $\cJ$ be the set of nodes that are not in $\cI$ but whose parent is in $\cI$.
Finally, for $h = 1,2,\ldots$
we denote by $\cJ_h$ the nodes in $\cJ$ that are located at depth $h$ in the tree
(i.e., whose parent is in $\cI_{h-1}$).

Lemma~\ref{lem:4} bounds in particular the expected
number of times each node $(h,i) \in \cJ_h$ is visited.
Since for these nodes $\Delta_{h,i} > 2\nu_1 \rho^h$, we get
\[
\E \bigl[ T_{h,i}(n) \bigr] \leq \frac{8 \, \ln n}{\nu_1^2\rho^{2h}} + 4\,.
\]

\textbf{Second step.}
We bound the cardinality $|\cI_h|$ of $\cI_h$.
We start with the case $h \geq 1$.
By definition, when $(h,i) \in \cI_h$, one has $\Delta_{h,i} \leq 2\nu_1\rho^h$, so that
by Lemma~\ref{lem:gooddomains} the inclusion $\cP_{h,i} \subset \cX_{4\nu_1\rho^h}$ holds.
Since by Assumption~\ref{ass:shrinking}, the sets $\cP_{h,i}$ contain disjoint
balls of radius $\nu_2 \rho^h$,
we have that
\[
|\cI_h| \leq
\cN \bigl( \cup_{(h,i)\in \cI_h} \cP_{h,i}, \, \ell, \, \nu_2\rho^h \bigr) \leq
\cN \bigl( \cX_{4\nu_1\rho^h}, \, \ell, \, \nu_2\rho^h \bigr) =
\cN \bigl( \cX_{(4\nu_1/\nu_2)\,\nu_2\rho^h}, \, \ell, \, \nu_2\rho^h \bigr)\,.
\]
We prove below that
there exists a constant $C$ such that
for all $\eps \leq \nu_2$,
\begin{equation}
\label{eq:claimC}
\cN \bigl( \cX_{(4\nu_1/\nu_2)\,\epsilon}, \, \ell, \, \epsilon \bigr)
\leq C \, \epsilon^{-d'}\,.
\end{equation}
Thus we obtain the bound $|\cI_h| \leq C\,\bigl( \nu_2 \rho^h \bigr)^{-d'}$ for all $h \geq 1$.
We note that the obtained bound $|\cI_h| \leq C\,\bigl( \nu_2 \rho^h \bigr)^{-d'}$
is still valid for $h = 0$, since $|\cI_0| = 1$.

It only remains to prove~(\ref{eq:claimC}).
Since $d' > d$, where $d$ is the near-optimality of $f$, we have, by definition, that
\[
\limsup_{\eps \to 0} \frac{\ln \cN \bigl( \cX_{(4\nu_1/\nu_2)\,\epsilon}, \, \ell, \, \epsilon \bigr)}{
\ln \bigl( \eps^{-1} \bigr)} \leq d\,,
\]
and thus, there exists $\eps_{d'} > 0$ such that for all $\eps \leq \eps_{d'}$,
\[
\frac{\ln \cN \bigl( \cX_{(4\nu_1/\nu_2)\,\epsilon}, \, \ell, \, \epsilon \bigr)}{
\ln \bigl( \eps^{-1} \bigr)} \leq d'\,,
\]
which in turn implies that for all $\eps \leq \eps_{d'}$,
\[
\cN \bigl( \cX_{(4\nu_1/\nu_2)\,\epsilon}, \, \ell, \, \epsilon \bigr)
\leq \epsilon^{-d'}\,.
\]
The result is proved with $C = 1$ if $\eps_{d'} \geq \nu_2$. Now, consider
the case $\eps_{d'} < \nu_2$.
Given the definition of packing numbers, it is straightforward that
for all $\eps \in \bigl[ \eps_{d'}, \, \nu_2 \bigr]$,
\[
\cN \bigl( \cX_{(4\nu_1/\nu_2)\,\epsilon}, \, \ell, \, \epsilon \bigr)
\leq u_{d'} \stackrel{\mbox{\small{def}}}{=} \cN \bigl( \cX, \, \ell, \, \epsilon_{d'} \bigr)~;
\]
therefore,
for all $\eps \in \bigl[ \eps_{d'}, \, \nu_2 \bigr]$,
\[
\cN \bigl( \cX_{(4\nu_1/\nu_2)\,\epsilon}, \, \ell, \, \epsilon \bigr)
\leq u_{d'} \, \frac{\nu_2^{d'}}{\eps^{d'}} = C \eps^{-d'}
\]
for the choice $C = \max \bigl\{ 1, \,\, u_{d'} \,\nu_2^{d'} \bigr\}$.
Because we take the maximum with 1, the stated inequality also
holds for $\eps \leq \eps^{-d'}$, which concludes the proof of~(\ref{eq:claimC}).

\textbf{Third step.} Let $H \geq 1$ be an integer to be chosen later.
We partition the nodes of the infinite tree $\cT$ into three subsets,
$\cT = \cT^1 \cup \cT^2 \cup \cT^3$, as follows.
Let the set $\cT^1$ contain the descendants of the nodes in $\cI_H$
(by convention, a node is considered its own descendant, hence the nodes of $\cI_H$ are included in $\cT^1$);
let $\cT^2 = \cup_{0\le h<H} \, \cI_h$; and let
$\cT^3$ contain the descendants of the nodes in $\cup_{1 \leq h\le H} \, \cJ_h$.
Thus, $\cT^1$ and $\cT^3$ are potentially infinite, while $\cT^2$ is finite.

We recall that we denote by $(H_t,I_t)$ the node that was chosen by HOO in round $t$.
From the definition of the algorithm,
each node is played at most once, thus no two such random variables are equal when $t$ varies.
We decompose the regret according to which of the sets $\cT^j$ the nodes $(H_t,I_t)$ belong to:
\begin{multline}
\nonumber
\E\bigl[ R_n \bigr] = \E \! \left[ \sum_{t=1}^n (f^* - f(X_t)) \right] = \E \bigl[ R_{n,1} \bigr]
 + \E \bigl[ R_{n,2} \bigr] + \E \bigl[ R_{n,3} \bigr]\,, \\ \mbox{where}
\qquad R_{n,i}  = \sum_{t=1}^n \bigl( f^* - f(X_t) \bigr) \oneb{\{ (H_t,I_t) \in \cT^i \} }\,,
\qquad \mbox{for } i=1,2,3.
\end{multline}
The contribution from $\cT^1$ is easy to bound.
By definition any node in $\cI_H$ is $2\nu_1\rho^H$--optimal.
Hence, by Lemma~\ref{lem:gooddomains}, the corresponding domain is included in $\cX_{4\nu_1\rho^H}$.
By definition of a tree of coverings, the domains of the descendants of these nodes are
still included in $\cX_{4\nu_1\rho^H}$.
Therefore,
\[
\E \bigl[ R_{n,1} \bigr] \leq 4 \nu_1 \rho^H \, n\,.
\]

For $h \geq 0$, consider a node $(h,i) \in \cT^2$.
It belongs to $\cI_{h}$ and is therefore $2\nu_1 \rho^h$--optimal.
By Lemma~\ref{lem:gooddomains}, the corresponding domain is included in $\cX_{4\nu_1 \rho^h}$.
By the result of the second step of this proof and using that each node is played at most once, one gets
\[
\E \bigl[ R_{n,2} \bigr] \leq \sum_{h=0}^{H-1} 4\nu_1 \rho^h \, |\cI_{h}| \leq
4 C \nu_1\nu_2^{-d'}\,\sum_{h=0}^{H-1} \rho^{h(1-d')}\,.
\]

We finish by bounding the contribution from $\cT^3$.
We first remark that since the parent of any element $(h,i) \in \cJ_h$ is in $\cI_{h-1}$,
by Lemma~\ref{lem:gooddomains} again, we have that $\cP_{h,i} \subset
\cX_{4\nu_1\rho^{h-1}}$.
We now use the first step of this proof to get
\[
\E \bigl[ R_{n,3} \bigr] \leq \sum_{h=1}^H 4\nu_1\rho^{h-1}\,\sum_{i\,:\,(h,i) \in \cJ_h}
\E \bigl[ T_{h,i}(n) \bigr] \leq \sum_{h=1}^H 4\nu_1\rho^{h-1}\,|\cJ_h|\,\left(
\frac{8 \, \ln n}{\nu_1^2\rho^{2h}} + 4 \right)\,.
\]
Now, it follows from the fact that the parent of $\cJ_h$ is in $\cI_{h-1}$ that
$|\cJ_h| \leq 2 |\cI_{h-1}|$ when $h \geq 1$. Substituting this and the bound on $|\cI_{h-1}|$ obtained in
the second step of this proof, we get
\begin{eqnarray*}
\E \bigl[ R_{n,3} \bigr]
& \leq & \sum_{h=1}^H 4\nu_1\rho^{h-1}\,\left( 2 C\,\bigl( \nu_2 \rho^{h-1} \bigr)^{-d'} \right)\,\left(
\frac{8 \, \ln n}{\nu_1^2\rho^{2h}} + 4 \right) \\
& \leq & 8 C \nu_1 \nu_2^{-d'} \, \sum_{h=1}^H \rho^{h(1-d')+d'-1}\,\left(
\frac{8 \, \ln n}{\nu_1^2\rho^{2h}} + 4 \right)\,.
\end{eqnarray*}

\textbf{Fourth step.}
Putting the obtained bounds together, we get
\begin{eqnarray}
\E \bigl[ R_n \bigr] & \leq &
4 \nu_1 \rho^H \, n +
4 C \nu_1 \nu_2^{-d'}\,\sum_{h=0}^{H-1} \rho^{h(1-d')} +
8 C \nu_1 \nu_2^{-d'}\,\sum_{h=1}^H \rho^{h(1-d')+d'-1}\,\left(
\frac{8 \, \ln n}{\nu_1^2\rho^{2h}} + 4 \right)
\nonumber \\
& = & O \! \left( n \rho^H
+ (\ln n) \sum_{h=1}^H \rho^{-h(1+d')} \right)
= O \Bigl( n \rho^H + \rho^{-H(1+d')}\ln n \Bigr)
\end{eqnarray}
(recall that $\rho < 1$). Note that all constants hidden in the $O$ symbol
only depend on $\nu_1$, $\nu_2$, $\rho$ and $d'$.

Now, by choosing $H$ such that
$\rho^{-H(d'+2)}$ is of the order of $n/\ln n$, that is,
$\rho^H$ is of the order of $(n/\ln n)^{-1/(d'+2)}$,
we get the desired result, namely,
\[
\E \bigl[ R_n \bigr] = O \Bigl( n^{(d'+1)/(d'+2)}\, (\ln n)^{1/(d'+2)} \Bigr)\,.
\]
\end{proof}

\subsection{Proof of Theorem~\ref{th:runningtime} (regret bound for truncated HOO)}

The proof follows from an adaptation of the proof of Theorem~\ref{th-mainresult} and of its associated lemmas;
for the sake of clarity and precision, we explicitly state the adaptations of the latter. \\

\textbf{Adaptations of the lemmas.}
Remember that $D_{n_0}$ denotes the maximum depth of the tree, given horizon $n_0$.
The adaptation of Lemma~\ref{lem:1} is done as follows.
Let $(h,i)$ be a suboptimal node with $h \leq D_{n_0}$ and let
$0 \leq k \leq h-1$ be the largest depth such that $(k,i^*_k)$ is on the path from the root $(0,1)$ to $(h,i)$.
Then,
for all integers $u \geq 0$, one has
\begin{multline}
\nonumber
\E \bigl[ T_{h,i}(n_0) \bigr] \leq u
+ \sum_{t = u+1}^{n_0} \P \Bigl\{ \bigl[ U_{s,i^*_s}(t) \leq f^* \ \mbox{\rm for some $s$ with} \ k+1 \leq s \leq \min \{ D_{n_0},
n_0 \} \bigr] \\
\mbox{\rm or} \ \ \,\, \bigl[ T_{h,i}(t) > u \ \,\, \mbox{\rm and} \,\, \ U_{h,i}(t) > f^* \bigr] \Bigr\}\,.
\end{multline}

As for Lemma~\ref{lem:2}, its straightforward adaptation states that under
Assumptions~\ref{ass:shrinking} and~\ref{ass:weaklip},
for all optimal nodes $(h,i)$ with $h \leq D_{n_0}$ and for all integers $1\leq t \leq n_0$,
\[
\P \bigl\{ U_{h,i}(t) \leq f^* \bigr\} \leq t \, (n_0)^{-4} \leq (n_0)^3\,.
\]

Similarly, the same changes yield from Lemma~\ref{lem:3} the following result for truncated HOO.
For all integers $t \leq n_0$,
for all suboptimal nodes $(h,i)$ such that
$h \leq D_{n_0}$ and $\Delta_{h,i} > \nu_1 \rho^h$,
and for all integers $u \geq 1$ such that
\[
u \geq \frac{8 \ln n_0}{(\Delta_{h,i} - \nu_1 \rho^h)^2}\,,
\]
one has
\[
\P \bigl\{ U_{h,i}(t) > f^* \ \,\, \mbox{\rm and} \ \,\, T_{h,i}(t) > u \bigr\} \leq t\,(n_0)^{-4}\,.
\]

Combining these three results (using the same methodology as in the proof of Lemma~\ref{lem:4}) shows
that under Assumptions~\ref{ass:shrinking} and~\ref{ass:weaklip},
for all suboptimal nodes $(h,i)$ such that $h \leq D_{n_0}$ and $\Delta_{h,i} > \nu_1 \rho^h$,
one has
\begin{eqnarray*}
\E[T_{h,i}(n_0)] & \leq & \frac{8\, \ln n_0}{(\Delta_{h,i} - \nu_1 \rho^h)^2} + 1 + \sum_{t=u+1}^{n_0} \left( t\,(n_0)^4
+ \sum_{s=1}^{\min \{ D_{n_0}, n_0 \} } (n_0)^{-3} \right) \\
& \leq & \frac{8\, \ln n_0}{(\Delta_{h,i} - \nu_1 \rho^h)^2} + 3\,.
\end{eqnarray*}
(We thus even improve slightly the bound of Lemma~\ref{lem:4}.) \\

\textbf{Adaptation of the proof of Theorem~\ref{th-mainresult}.}
The main change here comes from the fact that trees are cut at the depth $D_{n_0}$.
As a consequence, the sets $\mathcal{I}_h$, $\mathcal{I}$, $\mathcal{J}$, and $\mathcal{J}_h$ are defined
only by referring to nodes of depth smaller than $D_{n_0}$. All steps
of the proof can then be repeated, except the third step;
there, while the bounds on the regret resulting from nodes of $\cT^1$ and $\cT^3$
go through without any changes (as these sets were constructed by considering all descendants
of some base nodes), the bound on the regret $R_{n,2}$ associated with the nodes $\cT^2$
calls for a modified proof since at this stage we used the property that each node is played at most once.
But this is not true anymore for nodes $(h,i)$ located at depth $D_{n_0}$, which
can be played several times. Therefore the proof is modified as follows.

Consider a node at depth $h = D_{n_0}$. Then, by definition of $D_{n_0}$,
\[
h \geq D_{n_0} = \frac{(\ln n_0)/2 - \ln (1/\nu_1)}{\ln(1/\rho)}\,, \qquad
\mbox{that is}, \qquad
\nu_1 \, \rho^h \leq \frac{1}{\sqrt{n_0}}\,.
\]
Since the considered nodes are $2\nu_1\rho^{D_{n_0}}$--optimal, the corresponding domains
are $4\nu_1\rho^{D_{n_0}}$--optimal by Lemma~\ref{lem:gooddomains}, thus
also $4/\sqrt{n_0}$--optimal. The instantaneous regret incurred when playing any of these
nodes is therefore bounded by $4/\sqrt{n_0}$; and the associated
cumulative regret (over $n_0$ rounds) can be bounded by $4\sqrt{n_0}$.
In conclusion, with the notations of~Theorem~\ref{th-mainresult},
we get the new bound
\[
\E \bigl[ R_{n,2} \bigr] \leq \sum_{h=0}^{H-1} 4\nu_1 \rho^h \, |\cI_{h}| + 4\sqrt{n_0}\leq
4\sqrt{n_0} + 4 C \nu_1\nu_2^{-d'}\,\sum_{h=0}^{H-1} \rho^{h(1-d')}\,.
\]
The rest of the proof goes through and only this additional additive factor of $4\sqrt{n_0}$
is suffered in the final regret bound. (The additional factor can be included in the $O$ notation.)

\subsection{Proof of Theorem~\ref{th:zHOO} (regret bound for $z$--HOO)}
\label{sec:proofzHOO}

We start with the following equivalent of Lemma~\ref{lem:gooddomains} in
this new local context.
Remember that $h_0$ is the smallest integer such that
\[
2 \nu_1 \rho^{h_0} < \epsilon_0\,.
\]
\begin{lemma}
\label{lem:gooddomains-local}
Under Assumptions~\ref{ass:shrinking} and~\ref{ass:weaklipv},
for all $h \geq h_0$,
if the suboptimality factor $\Delta_{h,i}$ of a region $\cP_{h,i}$ is bounded by $c \nu_1 \rho^h$ for some $c \in [0,2]$,
then all arms in $\cP_{h,i}$ are $L \max \{ 2c, \, c+1 \, \} \, \nu_1 \rho^h$--optimal, that is,
\[
\cP_{h,i} \subset \cX_{L \max \{ 2c, \, c+1 \} \nu_1 \rho^h}\,.
\]
When $c = 0$, i.e., the node $(h,i)$ is optimal, the bound improves to
\[
\cP_{h,i} \subset \cX_{\nu_1 \rho^h}\,.
\]
\end{lemma}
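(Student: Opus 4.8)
The plan is to mirror the proof of Lemma~\ref{lem:gooddomains}, but to split into two cases according to whether the node is optimal ($c=0$) or genuinely suboptimal ($c \in (0,2]$), and to replace the single use of the global weak-Lipschitz inequality~(\ref{eq:weak.Lipschitz2}) by the two separate ingredients of the \emph{local} assumption~\ref{ass:weaklipv}. Throughout, the role of the hypothesis $h \geq h_0$ is to guarantee that every scale appearing in the argument stays below the threshold $\epsilon_0$: indeed, by the definition of $h_0$ and Assumption~\ref{ass:shrinking}, $\diam(\cP_{h,i}) \leq \nu_1 \rho^h \leq \nu_1 \rho^{h_0} < \epsilon_0/2$, so the local assumption is always applicable. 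The case split is forced by the fact that the optimal case gets the sharper bound $\nu_1\rho^h$ with no factor $L$, which the ball part of~\ref{ass:weaklipv} alone cannot deliver.

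In the first case ($c=0$), the node $(h,i)$ is optimal, i.e.\ $\sup_{x \in \cP_{h,i}} f(x) = f^*$, so $\cP_{h,i}$ is an optimal subset in the sense of~\ref{ass:weaklipv}; since moreover $\diam(\cP_{h,i}) \leq \nu_1 \rho^h < \epsilon_0$, the first half of Assumption~\ref{ass:weaklipv} applies directly and yields $f^* - \inf_{x \in \cP_{h,i}} f(x) \leq \diam(\cP_{h,i}) \leq \nu_1 \rho^h$, which is precisely $\cP_{h,i} \subset \cX_{\nu_1 \rho^h}$.

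In the second case ($c \in (0,2]$), I would reproduce the opening of the proof of Lemma~\ref{lem:gooddomains}: for each $\delta > 0$ fix $x^*_{h,i}(\delta) \in \cP_{h,i}$ with $f(x^*_{h,i}(\delta)) \geq \fhi - \delta = f^* - \Delta_{h,i} - \delta$, so that $f^* - f(x^*_{h,i}(\delta)) \leq \Delta_{h,i} + \delta \leq c\nu_1\rho^h + \delta \leq 2\nu_1\rho^h + \delta$; for small $\delta$ this shows $x^*_{h,i}(\delta) \in \cX_{\epsilon_0}$, the prerequisite for invoking the ball part of~\ref{ass:weaklipv}. Fixing $y \in \cP_{h,i}$, so that $\ell(x^*_{h,i}(\delta),y) \leq \diam(\cP_{h,i}) \leq \nu_1 \rho^h$, I would choose the slack parameter as $\epsilon = \max\{0,\, \ell(x^*_{h,i}(\delta),y) - (f^*-f(x^*_{h,i}(\delta)))\} + \eta$ for an auxiliary $\eta > 0$. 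Then $f^*-f(x^*_{h,i}(\delta)) + \epsilon = \max\{f^*-f(x^*_{h,i}(\delta)),\, \ell(x^*_{h,i}(\delta),y)\} + \eta > \ell(x^*_{h,i}(\delta),y)$, so $y \in \cB(x^*_{h,i}(\delta),\, f^*-f(x^*_{h,i}(\delta))+\epsilon)$, while $\epsilon \leq \nu_1\rho^h + \eta$, which is $\leq \epsilon_0$ for small $\eta$. The inclusion in~\ref{ass:weaklipv} then gives $f^*-f(y) \leq L\bigl(2(f^*-f(x^*_{h,i}(\delta))) + \epsilon\bigr) = L\bigl(\max\{2(f^*-f(x^*_{h,i}(\delta))),\, (f^*-f(x^*_{h,i}(\delta))) + \ell(x^*_{h,i}(\delta),y)\} + \eta\bigr)$. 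Substituting $f^*-f(x^*_{h,i}(\delta)) \leq c\nu_1\rho^h + \delta$ and $\ell \leq \nu_1\rho^h$ and letting $\delta \to 0$ then $\eta \to 0$ yields $f^*-f(y) \leq L\max\{2c,\, c+1\}\nu_1\rho^h$, hence $\cP_{h,i} \subset \cX_{L\max\{2c,c+1\}\nu_1\rho^h}$.

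I expect the only delicate point to be the bookkeeping around the open ball: the diameter bound is non-strict whereas $\cB$ is an open ball, so the auxiliary $\eta > 0$ is introduced precisely to obtain a strict inclusion, and one must simultaneously check the constraints $\epsilon \in [0,\epsilon_0]$ and $x^*_{h,i}(\delta) \in \cX_{\epsilon_0}$ — exactly where $h \geq h_0$ and $c \leq 2$ enter. Taking the two limits $\delta\to 0$ and $\eta\to 0$ at the end absorbs the slack and recovers the same $\max\{2c,c+1\}$ combination as in Lemma~\ref{lem:gooddomains}, now merely scaled by the local constant $L$.
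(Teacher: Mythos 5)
Your proof is correct and takes essentially the same approach as the paper's: both handle $c=0$ via the first part of Assumption~\ref{ass:weaklipv} applied to the optimal set $\cP_{h,i}$, and for positive $c$ both pick a $\delta$--near-maximizer of $f$ in $\cP_{h,i}$, use $c\le 2$ and $h\ge h_0$ to place it in $\cX_{\eps_0}$, invoke the ball-inclusion part of Assumption~\ref{ass:weaklipv} with an appropriately chosen $\eps$, and let the slack tend to $0$ to recover the factor $L\max\{2c,c+1\}$. Your auxiliary $\eta>0$ even repairs a small imprecision in the paper's version, which feeds the non-strict inequality $\ell(x,y)\le f^*-f(x)+\eps$ into the \emph{open} ball $\cB\bigl(x,\,f^*-f(x)+\eps\bigr)$ without comment.
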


\begin{proof}
We first deal with the general case of $c \in [0,2]$.
By the hypothesis on the suboptimality of $\cP_{h,i}$, for all $\delta > 0$, there exists
an element $x \in \cX_{c \nu_1 \rho^h + \delta} \, \cap \, \cP_{h,i}$. If $\delta$ is small
enough, e.g., $\delta \in \bigl( 0, \, \eps_0 - 2 \nu_1 \rho^{h_0} \bigr]$, then
this element satisfies $x \in \cX_{\eps_0}$.
Let $y \in \cP_{h,i}$. By Assumption~\ref{ass:shrinking}, $\ell(x,y) \leq \diam(\cP_{h,i}) \leq \nu_1 \rho^h$,
which entails, by denoting $\eps = \max\bigl\{0, \nu_1 \rho^h - (f^*-f(x)) \bigr\}$,
\[
\ell(x,y) \leq \nu_1 \rho^h \leq f^*-f(x) + \eps\,, \qquad \mbox{that is}, \qquad
y \in \cB \bigl(x, \, f^*-f(x) + \epsilon \bigr)\,.
\]
Since $x \in \cX_{\eps_0}$ and $\eps \leq \nu_1 \rho^h \leq \nu_1 \rho^{h_0} < \eps_0$,
the second part of Assumption~\ref{ass:weaklipv} then yields
\[
y \in \cB \bigl(x, \, f^*-f(x) + \epsilon \bigr) \,\, \subset \cX_{L \bigl(2 (f^*-f(x)) + \epsilon\bigr)}\,.
\]
It follows from the definition of $\eps$ that
$f^* - f(x) + \epsilon = \max \bigl\{ f^* - f(x), \, \nu_1 \rho^h \bigr\}$,
and this implies
\[
y \in \cB \bigl(x, \, f^*-f(x) + \epsilon \bigr) \,\, \subset \cX_{L \bigl( f^*-f(x)  + \max \{ f^* - f(x), \, \nu_1 \rho^h \}
\bigr)}\,.
\]
But $x \in \cX_{c \nu_1 \rho^h + \delta}$, i.e., $f^* - f(x) \leq c \nu_1 \rho^h + \delta$, we thus have proved
\[
y \in \cX_{L \bigl( \max \{ 2c, \, c+1 \} \nu_1 \rho^h + 2\delta \bigr)}\,.
\]
In conclusion, $\cP_{h,i} \subset \cX_{L \max \{ 2c, \, c+1 \} \nu_1 \rho^h +  2 L \delta}$
for all sufficiently small $\delta > 0$. Letting $\delta \to 0$ concludes the proof.

In the case of $c = 0$, we resort to the first part of Assumption~\ref{ass:weaklipv},
which can be applied since $\diam(\cP_{h,i}) \leq \nu_1 \rho^h \leq \eps_0$ as already noted above,
and can exactly be restated as indicating that for all $y \in \cP_{h,i}$,
\[
f^* - f(y) \leq \diam(\cP_{h,i}) \leq \nu_1 \rho^h~;
\]
that is, $\cP_{h,i} \subset \cX_{\nu_1 \rho^h}$.
\end{proof}

We now provide an adaptation of Lemma~\ref{lem:4} (actually based on adaptations of Lemmas~\ref{lem:1}
and~\ref{lem:2}), providing the same bound under local conditions that relax the
assumptions of Lemma~\ref{lem:4} to some extent.

\begin{lemma}
\label{lem:4-local}
Consider a depth $z \geq h_0$. Under Assumptions~\ref{ass:shrinking} and~\ref{ass:weaklip}',
the algorithm $z$--HOO satisfies that
for all $n \geq 1$ and all suboptimal nodes $(h,i)$ with $\Delta_{h,i} > \nu_1 \rho^h$ and $h \geq z$,
\[
\E \bigl[ T_{h,i}(n) \bigr] \leq \frac{8\, \ln n}{(\Delta_{h,i} - \nu_1 \rho^h)^2} + 4\,.
\]
\end{lemma}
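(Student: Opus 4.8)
The plan is to mirror the proof of Lemma~\ref{lem:4} step by step, re-establishing its three building blocks --- Lemmas~\ref{lem:1}, \ref{lem:2}, and~\ref{lem:3} --- in the local, $z$--HOO setting, and then combining them in exactly the same way. The crucial observation is that only one of these three ingredients actually invokes the weak Lipschitz property, and it does so solely through Lemma~\ref{lem:gooddomains}. Replacing the latter by its local counterpart Lemma~\ref{lem:gooddomains-local}, whose applicability requires $h \geq h_0$, is precisely what licenses the whole argument under the local Assumption~\ref{ass:weaklipv}; and this is why the statement is restricted to nodes of depth $h \geq z \geq h_0$. Lemma~\ref{lem:3}, being a pure concentration statement that uses only the boundedness of the rewards and the definition of $\Delta_{h,i}$ (and not Assumption~\ref{ass:weaklip} at all), carries over verbatim to every suboptimal node of depth $h \geq z$, with no change whatsoever.

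The first and most delicate step is the adaptation of Lemma~\ref{lem:1} to the fact that $z$--HOO never selects a node of depth smaller than $z$ and begins each round by comparing the $B$--values of the $2^z$ nodes at depth $z$. I would reuse the already-fixed optimal path $(0,1), (1,i^*_1), \ldots$ of the proof of Theorem~\ref{th-mainresult}, restricted to depths $\geq z$, so that $(z,i^*_z), (z+1,i^*_{z+1}), \ldots$ is a chain of optimal nodes (recall that since $\cP_{h,i} = \cP_{h+1,2i-1} \cup \cP_{h+1,2i}$, optimality propagates to at least one child). Fix a suboptimal $(h,i)$ with $h \geq z$, and let $(z,j)$ be its ancestor at depth $z$. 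If $j = i^*_z$, the path-domination argument of Lemma~\ref{lem:1} applies verbatim from depth $z$ downward, with branch level $z \leq k \leq h-1$. If $j \neq i^*_z$, then the initial top-level comparison selected $(z,j)$ over the optimal $(z,i^*_z)$, so $B_{z,i^*_z}(t) \leq B_{z,j}(t) \leq B_{h,i}(t) \leq U_{h,i}(t)$; this is exactly the role played by the branch in Lemma~\ref{lem:1} with a ``virtual'' level $k = z-1$. In either case one iterates the $B$--value recursion and obtains, for all $u \geq 0$,
\[
\E \bigl[ T_{h,i}(n) \bigr] \leq u + \sum_{t=u+1}^{n} \P\Bigl\{ \bigl[ U_{s,i^*_s}(t) \leq f^* \text{ for some } s \in \{k+1,\ldots,t-1\} \bigr] \text{ or } \bigl[ T_{h,i}(t) > u \text{ and } U_{h,i}(t) > f^* \bigr] \Bigr\},
\]
with $z-1 \leq k \leq h-1$, so that the relevant optimal-path indices satisfy $s \in \{k+1,\ldots,t-1\} \subseteq \{z,\ldots,t-1\}$.

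Next I would adapt Lemma~\ref{lem:2}. Its only use of Assumption~\ref{ass:weaklip} is the appeal to Lemma~\ref{lem:gooddomains} with $c = 0$ to obtain $f^*-f(x) \leq \nu_1 \rho^h$ for all $x \in \cP_{h,i}$. For an optimal node with $h \geq z \geq h_0$, the $c=0$ case of Lemma~\ref{lem:gooddomains-local} gives exactly $\cP_{h,i} \subset \cX_{\nu_1 \rho^h}$, i.e.\ the same pointwise bound; the remainder of the proof (optional skipping, the Hoeffding--Azuma inequality, and the union bound) is then unchanged, yielding $\P\{U_{h,i}(n) \leq f^*\} \leq n^{-3}$ for every optimal $(h,i)$ with $h \geq z$. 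Since the indices $s$ appearing in the adapted Lemma~\ref{lem:1} all satisfy $s \geq z$, this bound applies to every term in the sum. Finally I would combine the three adapted lemmas exactly as in the proof of Lemma~\ref{lem:4}: choose $u = \lceil (8 \ln n)/(\Delta_{h,i} - \nu_1 \rho^h)^2 \rceil$, substitute the adapted Lemma~\ref{lem:1}, bound the two probability sums by the adapted Lemma~\ref{lem:2} and by Lemma~\ref{lem:3}, and apply the same crude estimate $1 + \sum_t (n^{-3} + t^{-2}) \leq 2 + \pi^2/6 \leq 4$ to reach the claimed additive constant $4$. The main obstacle is not the concentration analysis, which transfers mechanically, but the bookkeeping in the adaptation of Lemma~\ref{lem:1}: one must correctly treat the top-level selection among the depth-$z$ nodes as a branch at the virtual level $k=z-1$, and verify that confining attention to depths $h \geq z \geq h_0$ is precisely what keeps Lemma~\ref{lem:gooddomains-local} applicable, so that the threshold $\epsilon_0$ in Assumption~\ref{ass:weaklipv} never interferes.
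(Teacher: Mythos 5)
Your proposal is correct and follows essentially the same route as the paper's own proof: the same two-case adaptation of Lemma~\ref{lem:1} (depending on whether $(h,i)$ descends from the optimal depth-$z$ node or not, the latter case handled via the top-level comparison $B_{z,i^*_z}(t) \leq B_{z,i_h}(t) \leq B_{h,i}(t) \leq U_{h,i}(t)$), the same observation that Lemma~\ref{lem:3} requires no assumptions while Lemma~\ref{lem:2} only needs Lemma~\ref{lem:gooddomains} replaced by Lemma~\ref{lem:gooddomains-local} with $c=0$ (applicable precisely because $h \geq z \geq h_0$), and the same concluding computation as in Lemma~\ref{lem:4}.
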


\begin{proof}
We consider some path $(z,i^*_z), \, (z+1,i^*_{z+1}), \, \ldots$ of optimal nodes,
starting at depth $z$.
We distinguish two cases, depending on whether there exists $z \leq k' \leq h-1$ such that $(h,i) \in \cC(k',i^*_{k'})$
or not.

In the first case, we denote $k'$ the largest such $k$.
The argument of Lemma~\ref{lem:1} can be used without any change and
shows that for all integers $u \geq 0$,
\begin{multline}
\nonumber
\E \bigl[ T_{h,i}(n) \bigr] \leq u
+ \sum_{t = u+1}^{n} \P \Bigl\{ \bigl[ U_{s,i^*_s}(t) \leq f^* \ \mbox{\rm for some} \ s \in \{ k+1,\ldots,t-1 \} \bigr] \\
\mbox{\rm or} \ \ \,\, \bigl[ T_{h,i}(t) > u \ \,\, \mbox{\rm and} \,\, \ U_{h,i}(t) > f^* \bigr] \Bigr\}\,.
\end{multline}

In the second case, we denote by $(z,i_h)$ the ancestor of $(h,i)$ located at depth $z$.
By definition of $z$--HOO, $(H_t,I_t) \in \cC(h,i)$ at some round $t \geq 1$ only if $B_{z,i^*_z}(t) \leq B_{z,i_h}(t)$
and since $B$--values can only increase on a chosen path,
$(H_t,I_t) \in \cC(h,i)$ can only happen if $B_{z,i^*_z}(t) \leq B_{h,i}(t)$. Repeating again the argument of
Lemma~\ref{lem:1}, we get that for all integers $u \geq 0$,
\begin{multline}
\nonumber
\E \bigl[ T_{h,i}(n) \bigr] \leq u
+ \sum_{t = u+1}^{n} \P \Bigl\{ \bigl[ U_{s,i^*_s}(t) \leq f^* \ \mbox{\rm for some} \ s \in \{ z,\ldots,t-1 \} \bigr] \\
\mbox{\rm or} \ \ \,\, \bigl[ T_{h,i}(t) > u \ \,\, \mbox{\rm and} \,\, \ U_{h,i}(t) > f^* \bigr] \Bigr\}\,.
\end{multline}

Now, notice that Lemma~\ref{lem:3} is valid without any assumption.
On the other hand, with the modified assumptions, Lemma~\ref{lem:2} is still true but only for optimal nodes $(h,i)$ with $h \geq h_0$.
Indeed, the only point in its proof where the assumptions were used was in the fourth line, when applying
Lemma~\ref{lem:gooddomains}; here, Lemma~\ref{lem:gooddomains-local} with $c = 0$ provides the needed guarantee.

The proof is concluded with the same computations as in the proof of Lemma~\ref{lem:4}.
\end{proof}

\begin{proof} \textbf{(of Theorem~\ref{th:zHOO})}
We follow the four steps in the
proof of Theorem~\ref{th-mainresult} with some slight adjustments.
In particular, for $h \geq z$, we use the sets of nodes $\cI_h$ and $\cJ_h$ defined therein.

\textbf{First step.}
Lemma~\ref{lem:4-local} bounds the expected
number of times each node $(h,i) \in \cJ_h$ is visited.
Since for these nodes $\Delta_{h,i} > 2\nu_1 \rho^h$, we get
\[
\E \bigl[ T_{h,i}(n) \bigr] \leq \frac{8 \, \ln n}{\nu_1^2\rho^{2h}} + 4\,.
\]

\textbf{Second step.}
We bound here the cardinality $|\cI_h|$.
By Lemma~\ref{lem:gooddomains-local} with $c = 2$,
when $(h,i) \in \cI_h$ and $h \geq z$, one has $\cP_{h,i} \subset \cX_{4 L \nu_1 \rho^h}$.

Now, by Assumption~\ref{ass:shrinking} and by using the same
argument as in the second step of the proof of Theorem \ref{th-mainresult},
\[
|\cI_h| \leq \cN \bigl( \cX_{(4L\nu_1/\nu_2)\,\nu_2\rho^h}, \, \ell, \, \nu_2\rho^h \bigr)\,.
\]
Assumption~\ref{ass:a3} can be applied since $\nu_2\rho^h \leq 2 \nu_1 \rho^h \leq 2 \nu_1 \rho^{h_0} \leq \epsilon_0$
and yields the inequality $|\cI_h| \leq C \bigl( \nu_2 \rho^h \bigr)^{-d}$.

\textbf{Third step.}
We consider some integer $H \geq z$ to be defined by the analysis in the fourth step.
We define a partition of the nodes located at a depth equal to or larger than $z$;
more precisely,
\begin{itemize}
\item $\cT^1$ contains the nodes of $\cI_H$ and their descendants,
\item $\cT^2 = \displaystyle{\bigcup_{z \leq h \leq H-1} \cI_h}$,
\item $\cT^3$ contains the nodes $\displaystyle{\bigcup_{z+1 \leq h \leq H} \cJ_h}$ and their descendants,
\item $\cT^4$ is formed by the nodes $(z,i)$ located at depth $z$ not belonging to $\cI_{z}$, i.e.,
such that $\Delta_{z,i} > 2 \nu_1 \rho^z$, and their descendants.
\end{itemize}
As in the proof of Theorem~\ref{th-mainresult} we denote by $R_{n,i}$ the regret resulting from
the selection of nodes in $\cT^i$, for $i \in \{ 1,2,3,4 \}$.

Lemma~\ref{lem:gooddomains-local} with $c = 2$ yields the bound $\E \bigl[ R_{n,1} \bigr]
\leq 4 L \nu_1 \rho^H n$,
where we crudely bounded by $n$ the number of times that nodes in $\cT^1$ were played.
Using that by definition each node of $\cT^2$ can be played only once,
we get
\[
\E \bigl[ R_{n,2} \bigr] \leq \sum_{h=z}^{H-1} \bigl( 4 L \nu_1 \rho^h \bigr) \, |\cI_{h}| \leq
4 C L \nu_1\nu_2^{-d}\,\sum_{h=z}^{H-1} \rho^{h(1-d)}\,.
\]
As for $R_{n,3}$, we also use here that nodes in $\cT^3$ belong to some $\cJ_h$, with $z+1 \leq h \leq H$;
in particular, they are the child of some element of $\cI_{h-1}$ and as such, firstly, they are
$4 L \nu_1 \rho^{h-1}$--optimal (by Lemma~\ref{lem:gooddomains-local})
and secondly, their number is bounded by $|\cJ_h| \leq 2 |\cI_{h-1}| \leq 2 C \bigl( \nu_2 \rho^{h-1} \bigr)^{-d}$.
Thus,
\[
\E \bigl[ R_{n,3} \bigr]
\leq \sum_{h=z+1}^H \bigl( 4 L \nu_1 \rho^{h-1} \bigr) \!\! \sum_{i : (h,i) \in \cJ_h} \!\! \E \bigl[ T_{h,i}(n) \bigr]
\leq 8 C L \nu_1 \nu_2^{-d} \sum_{h=z+1}^H \rho^{(h-1)(1-d)}\,\left(
\frac{8 \, \ln n}{\nu_1^2\rho^{2h}} + 4  \right)\,,
\]
where we used the bound of Lemma~\ref{lem:4-local}.
Finally, for $\cT^4$, we use that it contains at most $2^z - 1$ nodes, each of them being associated
with a regret controlled by Lemma~\ref{lem:4-local}; therefore,
\[
\E \bigl[ R_{n,4} \bigr] \leq \bigl( 2^{z} -1 \bigr) \left(\frac{8\, \ln n}{\nu_1^2 \rho^{2 z}} + 4\right)\,.
\]

\textbf{Fourth step.}
Putting things together, we have proved that
\[
\E \bigl[ R_n \bigr] \leq 4 L \nu_1 \rho^H n + \E \bigl[ R_{n,2} \bigr] + \E \bigl[ R_{n,3} \bigr]
+ \bigl( 2^{z} -1 \bigr) \left(\frac{8\, \ln n}{\nu_1^2 \rho^{2 z}} + 4\right)\,,
\]
where (using that $\rho < 1$ in the second inequality)
\begin{eqnarray*}
\lefteqn{ \E \bigl[ R_{n,2} \bigr] + \E \bigl[ R_{n,3} \bigr] } \\
& \leq & 4 C L \nu_1\nu_2^{-d}\,\sum_{h=z}^{H-1} \rho^{h(1-d)} +
8 C L \nu_1 \nu_2^{-d} \sum_{h=z+1}^H \rho^{(h-1)(1-d)}\,\left(
\frac{8 \, \ln n}{\nu_1^2\rho^{2h}} + 4  \right) \\
& = & 4 C L \nu_1\nu_2^{-d}\,\sum_{h=z}^{H-1} \rho^{h(1-d)} +
8 C L \nu_1 \nu_2^{-d} \sum_{h=z}^{H-1} \rho^{h(1-d)}\,\left(
\frac{8 \, \ln n}{\nu_1^2\rho^2\rho^{2h}} + 4  \right) \\
& \leq & 4 C L \nu_1\nu_2^{-d}\,\sum_{h=z}^{H-1} \rho^{h(1-d)} \, \frac{1}{\rho^{2h}} +
8 C L \nu_1 \nu_2^{-d} \sum_{h=z}^{H-1} \rho^{h(1-d)}\,\left(
\frac{8 \, \ln n}{\nu_1^2\rho^2\rho^{2h}} + \frac{4}{\rho^{2h}} \right) \\
& = & C L \nu_1 \nu_2^{-d} \left( \sum_{h=z}^{H-1} \rho^{-h(1+d)} \right)
\left( 36 + \frac{64}{\nu_1^2\rho^2} \, \ln n \right)\,.
\end{eqnarray*}
Denoting
\[
\gamma = \frac{4 \, C L \nu_1 \nu_2^{-d}}{(1/\rho)^{d+1} \, - 1} \left( \frac{16}{\nu_1^2\rho^2} + 9 \right)\,,
\]
it follows that for $n\ge 2$
\[
\E \bigl[ R_{n,2} \bigr] + \E \bigl[ R_{n,3} \bigr] \leq \gamma \, \rho^{-H(d+1)} \, \ln n\,.
\]

It remains to define the parameter $H \geq z$.
In particular, we propose to choose it such that the terms
\[
4 L \nu_1 \rho^H n \qquad \mbox{and} \qquad \rho^{-H(d+1)} \, \ln n
\]
are balanced. To this end, let $H$ be the smallest integer $k$ such that
$4 L \nu_1 \rho^k n \leq \gamma \rho^{-k(d+1)} \ln n$; in particular,
\[
\rho^H \leq \left( \frac{\gamma \ln n}{4 L \nu_1 n} \right)^{1/(d+2)}
\]
and
\[
4 L \nu_1 \rho^{H-1} n > \gamma \rho^{-(H-1)(d+1)} \ln n\,, \qquad
\mbox{implying} \qquad \gamma \, \rho^{-H(d+1)} \, \ln n \leq
4 L \nu_1 \rho^{H} n \,\, \rho^{-(d+2)}\,.
\]
Note from the inequality that this $H$ is such that
\[
H \geq \frac{1}{d+2} \frac{\ln(4 L \nu_1 n) - \ln(\gamma \ln n)}{\ln(1/\rho)}
\]
and thus this $H$ satisfies $H\ge z$ in view of the assumption of the theorem
indicating that $n$ is large enough.
The final bound on the regret is then
\begin{eqnarray*}
\E \bigl[ R_{n} \bigr] & \leq & 4 L \nu_1 \rho^H n + \gamma \, \rho^{-H(d+1)} \, \ln n +
\bigl( 2^{z} -1 \bigr) \left(\frac{8\, \ln n}{\nu_1^2 \rho^{2 z}} + 4\right) \\
& \leq & \left( 1 + \frac{1}{\rho^{d+2}} \right) 4 L \nu_1 \rho^H n
+ \bigl( 2^{z} -1 \bigr) \left(\frac{8\, \ln n}{\nu_1^2 \rho^{2 z}} + 4\right) \\
& \leq & \left( 1 + \frac{1}{\rho^{d+2}} \right) 4 L \nu_1 n
\left( \frac{\gamma \ln n}{4 L \nu_1 n} \right)^{1/(d+2)}
+ \bigl( 2^{z} -1 \bigr) \left(\frac{8\, \ln n}{\nu_1^2 \rho^{2 z}} + 4\right) \\
& = & \left( 1 + \frac{1}{\rho^{d+2}} \right) \bigl( 4 L \nu_1 n \bigr)^{(d+1)/(d+2)}
( \gamma \ln n)^{1/(d+2)}
+ \bigl( 2^{z} -1 \bigr) \left(\frac{8\, \ln n}{\nu_1^2 \rho^{2 z}} + 4\right)\,.
\end{eqnarray*}
This concludes the proof.
\end{proof}

\subsection{Proof of Theorem~\ref{th:localHOO} (regret bound for local-HOO)}
\label{sec:thlocalHOO}

\begin{proof}
We use the notation of the proof of Theorem~\ref{th:zHOO}.
Let $r_0$ be a positive integer such that for $r \geq r_0$, one has
\[
z_r \eqdef \lceil \log_2 r \rceil \geq h_0 \qquad \mbox{and} \qquad
z_r \leq \frac{1}{d+2} \frac{\ln(4 L \nu_1 2^r) - \ln( \gamma \ln 2^r)}{\ln(1/\rho)}~;
\]
we can therefore apply the result of Theorem \ref{th:zHOO} in regimes indexed by $r \geq r_0$.
For previous regimes, we simply upper bound the regret by the number of rounds,
that is, $2^{r_0} - 2 \leq 2^{r_0}$.
For round $n$, we denote by $r_n$ the index of the regime where
$n$ lies in (regime $r_n = \lfloor \log_2 (n+1) \rfloor$).
Since regime $r_n$ terminates at round $2^{r_n+1}-2$,
we have
\begin{eqnarray*}
\lefteqn{ \E \bigl[ R_n \bigr] \ \leq \ \E \bigl[ R_{2^{r_n+1}-2} \bigr] } \\
& \leq & 2^{r_0} + \sum_{r = r_0}^{r_n} \Biggl( \left( 1 + \frac{1}{\rho^{d+2}} \right) \bigl( 4 L \nu_1 2^r \bigr)^{(d+1)/(d+2)}
( \gamma \ln 2^r)^{1/(d+2)}
+ \bigl( 2^{z_r} -1 \bigr) \left(\frac{8\, \ln 2^r}{\nu_1^2 \rho^{2 z_r}} + 4\right) \Biggr) \\
& \leq & 2^{r_0} + C_1 \, (\ln n) \, \sum_{r = r_0}^{r_n} \biggl( \Bigl( 2^{(d+1)/(d+2)} \Bigr)^r
+ \bigl( 2 / \rho^2 \bigr)^{z_r} \biggr)  \\
& \leq & 2^{r_0} + C_2 \, (\ln n) \, \biggl( \Bigl( 2^{(d+1)/(d+2)} \Bigr)^{r_n}
+ r_n \, \bigl( 2/\rho^2 \bigr)^{z_{r_n}} \biggr) \ \ = (\ln n) \,\, O \bigl( n^{(d+1)/(d+2)} \bigr)\,,
\end{eqnarray*}
where $C_1,C_2>0$ denote some constants depending only on the parameters but not on $n$.
Note that for the last equality we used that the first term
in the sum of the two terms that depend on $n$ dominates the second term.
\end{proof}

\subsection{Proof of Theorem~\ref{th:minimaxUB} (uniform upper bound on
the regret of HOO against the class of all weak Lipschitz environments)}
\label{sec:thminimaxUB}

Equations~(\ref{eq:WL1}) and~(\ref{eq:WL2}), which follow from Assumption~\ref{ass:weaklip},
show that Assumption~\ref{ass:weaklipv} is satisfied for $L=2$ and all $\epsilon_0 > 0$. We take,
for instance, $\eps_0 = 3 \nu_1$.
Moreover, since $\cX$ has a packing dimension of $D$, all environments have a near-optimality dimension less than $D$.
In particular, for all $D'>D$ (as shown in the second step of the proof of
Theorem~\ref{th-mainresult} in Section~\ref{sec:proof1}),
there exists a constant $C$ (depending only on $\ell$, $\cX$, $\eps_0 = 3\nu_1$, $\nu_2$, and $D'$)
such that Assumption~\ref{ass:a3} is satisfied.
We can therefore take $h_0 = 0$ and apply Theorem~\ref{th:zHOO} with $z=0$ and $M \in \cF_{\cX,\ell}$;
the fact that all the quantities involved in the bound depend only
on $\cX$, $\ell$, $\nu_2$, $D'$,
and the parameters of HOO, but not on a particular environment in $\cF$, concludes the proof.

\subsection{Proof of Theorem \ref{th:minimaxLB} (minimax lower bound in metric spaces)}

Let $K\geq 2$ an integer to be defined later. We provide first an overview of the proof.
Here, we exhibit a set $\cA$ of environments for the $\{1,\hdots,K+1\}$--armed bandit problem
and a subset $\cF' \subset \cF_{\cX,\ell}$ which satisfy the following properties.
\begin{description}
\item[(i)] The set $\cA$ contains ``difficult'' environments for the $\{1,\hdots,K+1\}$--armed bandit problem.
\item[(ii)] For any strategy $\cXstrat$ suited to the $\cX$--armed bandit problem,
one can construct a strategy $\Kstrat$ for the $\{1,\hdots,K+1\}$--armed bandit problem such that
\[
\forall \,\, M \in \cF', \ \ \exists \,\, \nu \in \cA, \qquad \E_M \bigl[ R_n(\cXstrat) \bigr] = \E_{\nu} \bigl[
R_n(\Kstrat) \bigr]\,.
\]
\end{description}
We now provide the details. \\

\begin{proof}
We only deal with the case of deterministic strategies. The extension to randomized strategies
can be done using Fubini's theorem (by integrating also \wrt\  the auxiliary randomizations
used).

\textbf{First step.}
Let $\eta \in (0,1/2)$ be a real number and $K \geq 2$ be an integer, both to be defined during the course of the analysis.
The set $\cA$ only contains $K$ elements, denoted by $\nu^1,\ldots,\nu^K$ and given by product distributions.
For $1 \leq j \leq K$, the distribution $\nu^j$ is obtained as the product of the $\nu^j_i$
when $i \in \{1, \ldots, K+1 \}$ and where
\[
\nu_i^j =
\begin{cases}
\Ber(1/2), & \mbox{if} \ i \ne j; \\
\Ber(1/2 + \eta), & \mbox{if} \ i = j.
\end{cases}
\]
One can extract the following result
from the proof of the lower bound of~\cite[Section~6.9]{CL06}.
\begin{lemma} \label{lemma-multiarmedLB1}
For all strategies $\Kstrat$ for the $\{1,\hdots,K+1\}$--armed bandit (where
$K \geq 2$), one has
\[
\max_{j = 1, \hdots, K}  \,\, \E_{\nu^j} \bigl[ R_n(\Kstrat) \bigr]
\geq n \eta \left(1 - \frac{1}{K} - \eta \sqrt{4 \ln (4/3)} \sqrt{\frac{n}{K}} \right)\,.
\]
\end{lemma}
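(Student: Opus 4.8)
The plan is to prove Lemma~\ref{lemma-multiarmedLB1} by the classical information-theoretic route: show that no strategy can keep the number of pulls of the good arm large \emph{simultaneously} under all $K$ environments, because telling $\nu^j$ apart from a neutral reference environment requires many samples of arm $j$, and these sampling budgets compete. Denote by $I_t \in \{1,\dots,K+1\}$ the arm played at round $t$ and by $T_j(n) = \sum_{t=1}^n \one{I_t = j}$ the number of pulls of arm $j$. Under $\nu^j$ (with $1 \le j \le K$) the unique optimal arm is $j$, so $f^* = 1/2+\eta$ and every other pull costs exactly $\eta$; hence the pseudo-regret is $R_n(\Kstrat) = \eta\bigl(n - T_j(n)\bigr)$ and $\E_{\nu^j}\bigl[R_n(\Kstrat)\bigr] = \eta\bigl(n - \E_{\nu^j}[T_j(n)]\bigr)$. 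Since the maximum over $j$ dominates the average, it suffices to prove the \emph{upper} bound
\[
\frac{1}{K}\sum_{j=1}^K \E_{\nu^j}\bigl[T_j(n)\bigr] \leq \frac{n}{K} + n\,\eta\,\sqrt{4\ln(4/3)}\,\sqrt{\frac{n}{K}}\,,
\]
from which the stated lower bound follows by $\max_j \geq \frac1K\sum_j$ and the displayed regret identity.

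First I would introduce the reference environment $\nu^0$ in which \emph{every} arm pays $\Ber(1/2)$, and compare the law $\P_{\nu^j}$ of the interaction history $(I_1,Y_1,\dots,I_n,Y_n)$ to $\P_{\nu^0}$. Because $0 \le T_j(n) \le n$, for each $j$ one has $\E_{\nu^j}[T_j(n)] - \E_{\nu^0}[T_j(n)] \le n\,\|\P_{\nu^j} - \P_{\nu^0}\|_{\mathrm{TV}}$, and Pinsker's inequality gives $\|\P_{\nu^j} - \P_{\nu^0}\|_{\mathrm{TV}} \le \sqrt{\tfrac12\,\mathrm{KL}(\P_{\nu^0}\,\|\,\P_{\nu^j})}$. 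Since the strategy is deterministic and $\nu^0$, $\nu^j$ differ only in the reward law of arm $j$, the chain rule for relative entropy (divergence decomposition) yields the key identity $\mathrm{KL}(\P_{\nu^0}\,\|\,\P_{\nu^j}) = \E_{\nu^0}[T_j(n)]\,\mathrm{KL}\bigl(\Ber(1/2)\,\|\,\Ber(1/2+\eta)\bigr)$; only the rounds in which arm $j$ is actually drawn contribute, and the expected count is taken under $\nu^0$, which is exactly the law that makes the budgets additive.

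It then remains to bound the per-pull term $\mathrm{KL}\bigl(\Ber(1/2)\,\|\,\Ber(1/2+\eta)\bigr) = -\tfrac12\ln(1-4\eta^2)$. Using that $u \mapsto -\ln(1-u)/u$ is increasing and evaluating at $u = 4\eta^2 \le 1/4$ (the range in which the lemma is applied) gives $-\tfrac12\ln(1-4\eta^2) \le 8\eta^2\ln(4/3)$, so that $\sqrt{\tfrac12\cdot 8\eta^2\ln(4/3)} = \eta\sqrt{4\ln(4/3)}$ and therefore $\E_{\nu^j}[T_j(n)] \le \E_{\nu^0}[T_j(n)] + n\,\eta\sqrt{4\ln(4/3)}\,\sqrt{\E_{\nu^0}[T_j(n)]}$. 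Averaging over $j$, applying Jensen's inequality in the form $\frac1K\sum_j\sqrt{\E_{\nu^0}[T_j(n)]} \le \sqrt{\frac1K\sum_j\E_{\nu^0}[T_j(n)]}$, and using $\sum_{j=1}^K \E_{\nu^0}[T_j(n)] \le n$, produces the displayed upper bound and hence the lemma.

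The main obstacle is the information-theoretic core rather than the algebra: one must set up the history space carefully so that the divergence-decomposition identity holds (this is where determinism of $\Kstrat$ is used, and where one must verify that the events $\{I_t=j\}$ are predictable), and then transport the bound on pull counts across environments via Pinsker in the right direction of the relative entropy. Pinning down the numerical constant $\sqrt{4\ln(4/3)}$ also requires the monotonicity argument for the Bernoulli divergence in the relevant regime of $\eta$; the averaging step, Jensen, and the $\sum_j T_j(n) \le n$ budget constraint are then routine.
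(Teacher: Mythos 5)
The paper itself contains no proof of Lemma~\ref{lemma-multiarmedLB1}: it is stated as a result ``extracted from the proof of the lower bound of [CL06, Section~6.9].'' Your argument is precisely the argument behind that citation, and it is correct: the identity $\E_{\nu^j}[R_n(\Kstrat)] = \eta\bigl(n-\E_{\nu^j}[T_j(n)]\bigr)$, the reduction of the maximum to the average over $j$, the comparison to the all-$\Ber(1/2)$ reference measure $\nu^0$ via $\E_{\nu^j}[T_j(n)]-\E_{\nu^0}[T_j(n)]\le n\,\mathrm{TV}$ and Pinsker, the divergence decomposition $\mathrm{KL}(\P_{\nu^0}\Vert\P_{\nu^j})=\E_{\nu^0}[T_j(n)]\,\mathrm{KL}\bigl(\Ber(1/2)\Vert\Ber(1/2+\eta)\bigr)$ (valid for deterministic strategies, which is all the paper needs, since its proof of Theorem~\ref{th:minimaxLB} reduces to that case by Fubini), and finally the budget constraint $\sum_j\E_{\nu^0}[T_j(n)]\le n$ combined with Jensen all check out, and they reproduce the stated constant $\sqrt{4\ln(4/3)}$ exactly.

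One caveat deserves emphasis. Your bound $-\ln(1-u)\le 4\ln(4/3)\,u$ is valid only for $u=4\eta^2\le 1/4$, i.e.\ for $\eta\le 1/4$, and your parenthetical claim that this is ``the range in which the lemma is applied'' does not quite match the paper: the surrounding proof introduces $\eta\in(0,1/2)$ and, in its fifth step, only enforces $\eta\le 1/2$ when defining $N(c,D)$. So, strictly speaking, your proof (like the source in [CL06], which imposes the same restriction) establishes the lemma only for $\eta\le 1/4$; for $\eta\in(1/4,1/2)$ the per-pull divergence $-\tfrac12\ln(1-4\eta^2)$ exceeds $8\ln(4/3)\eta^2$ and the stated constant is not recovered by this route. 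This is harmless for Theorem~\ref{th:minimaxLB} --- the choice $\eta=\Theta\bigl(n^{-1/(D+2)}\bigr)$ falls below $1/4$ once $n$ exceeds a constant depending on $c$ and $D$, so one only needs to enlarge $N(c,D)$ accordingly --- but if your proof is to stand on its own, the hypothesis $\eta\le 1/4$ should be stated explicitly in the lemma.
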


\textbf{Second step.}
We now need to construct $\cF'$ such that item (ii) is satisfied.
We assume that $K$ is such that $\cX$ contains $K$ disjoint
balls with radius $\eta$. (We shall quantify later in this proof a
suitable value of $K$.) Denoting by $x_1,\ldots,x_K$
the corresponding centers, these disjoint balls are then $\cB(x_1,\eta), \, \ldots, \,
\cB(x_K,\eta)$.

With each of these balls we now associate a bandit environment over $\cX$, in
the following way. For all $x^* \in \cX$, we introduce a mapping
$g_{x^*,\eta}$ on $\cX$ defined by
\[
g_{x^*,\eta}(x) = \max \bigl\{ 0, \,\, \eta - \ell(x,x^*) \bigr\}
\]
for all $x \in \cX$. This mapping is used to define an environment
$M_{x^*,\eta}$ over $\cX$, as follows. For all $x \in \cX$,
\[
M_{x^*,\eta}(x) = \Ber \left( \frac{1}{2} + g_{x^*,\eta}(x) \right)\,.
\]
Let $f_{x^*,\eta}$ be the corresponding mean-payoff function; its values equal
\[
f_{x^*,\eta}(x) = \frac{1}{2} + \max \bigl\{ 0, \,\, \eta - \ell(x,x^*) \bigr\}
\]
for all $x \in \cX$.
Note that the mean payoff is maximized at $x = x^*$ (with value $1/2+\eta$)
and is minimal for all points lying outside $\cB(x^*,\eta)$, with value $1/2$.
In addition, that $\ell$ is a metric entails
that these mean-payoff functions are $1$--Lipschitz
and thus are also weakly Lipschitz. (This is the only point
in the proof where we use that $\ell$ is a metric.)
In conclusion, we consider
\[
\cF' = \bigl\{ M_{x_1,\eta}, \, \ldots, \, M_{x_K,\eta} \bigr\} \,\, \subset \cF_{\cX,\ell}\,.
\]

\textbf{Third step.}
We describe how to associate with each (deterministic) strategy $\cXstrat$ on $\cX$
a (random) strategy $\Kstrat$ on the finite set of arms $\{ 1, \ldots, K+1 \}$.
Each of these strategies is indeed given by a sequence of mappings,
\[
\cXstrat_1,\cXstrat_2,\ldots \qquad \mbox{and} \qquad
\Kstrat_1,\Kstrat_2,\ldots
\]
where for $t \geq 1$, the mappings $\cXstrat_t$ and $\Kstrat_t$ should only depend on
the past up to the beginning of round $t$. Since the strategy $\cXstrat$ is deterministic,
the mapping $\cXstrat_t$ takes only into account the past rewards $Y_1,\ldots,Y_{t-1}$ and is therefore
a mapping $[0,1]^{t-1} \to \cX$. (In particular, $\cXstrat_1$ equals a constant.)

We use the notations $I'_t$ and $Y'_t$ for, respectively, the
arms pulled and the rewards obtained by the strategy $\Kstrat$ at each round $t$.
The arms $I'_t$ are drawn at random according to the distributions
\[
\Kstrat_t \bigl( I'_1,\ldots,I'_{t-1}, \, Y'_1,\ldots,Y'_{t-1} \bigr)\,,
\]
which we now define. (Actually, they will depend on the
obtained payoffs $Y'_1,\ldots,Y'_{t-1}$ only.)
To do that, we need yet another mapping $T$ that links
elements in $\cX$ to probability distributions over $\{ 1,\ldots,K+1\}$.
Denoting by $\delta_k$ the Dirac probability on $k \in \{ 1,\ldots,K+1\}$,
the mapping $T$ is defined as
\[
T(x) =
\begin{cases}
\delta_{K+1}\,, & \text{if } \ \displaystyle{x \not\in \bigcup_{j=1,\ldots,K} \cB(x_j,\eta)}; \\
\displaystyle{\left( 1- \frac{\ell(x,x_j)}{\eta} \right) \, \delta_{j} +
\frac{\ell(x,x_j)}{\eta} \, \delta_{K+1}}\,, & \text{if } \ x \in \cB(x_j,\eta) \ \mbox{for some} \
j \in \{ 1,\ldots,K \},
\end{cases}
\]
for all $x \in \cX$.
Note that this definition is legitimate because the balls $\cB(x_j,\eta)$
are disjoint when $j$ varies between $1$ and $K$.

Finally, $\Kstrat$ is defined as follows. For all $t \geq 1$,
\[
\Kstrat_t \bigl( I'_1,\ldots,I'_{t-1}, \, Y'_1,\ldots,Y'_{t-1} \bigr)
= \Kstrat_t \bigl( Y'_1,\ldots,Y'_{t-1} \bigr)
= T \Bigl( \cXstrat_t \bigl( Y'_1,\ldots,Y'_{t-1} \bigr) \Bigr)\,.
\]

Before we proceed, we study the distribution of the reward $Y'$ obtained
under $\nu^i$ (for $i \in \{ 1,\ldots,K\}$)
by the choice of a random arm $I'$ drawn according to $T(x)$, for some $x \in \cX$.
Since $Y'$ can only take the values 0 or 1,
its distribution is a Bernoulli distribution
whose parameter $\mu_i(x)$ we compute now.
The computation is based on the fact that under $\nu^i$,
the Bernoulli distribution corresponding to arm $j$
has $1/2$ as an expectation, except if $j = i$, in which case it is $1/2+\eta$. Thus,
for all $x \in \cX$,
\[
\mu_i(x) =
\begin{cases}
1/2\,, & \mbox{if} \ x \not\in \cB(x_i,\eta); \vspace{.15cm} \\
\displaystyle{\left( 1- \frac{\ell(x,x_i)}{\eta} \right) \, \left( \frac{1}{2} + \eta \right) +
\frac{\ell(x,x_i)}{\eta} \, \frac{1}{2}
= \frac{1}{2} + \eta - \ell(x,x_i)}\,, & \mbox{if} \ x \in \cB(x_i,\eta).
\end{cases}
\]
That is, $\mu_i = f_{x_i,\eta}$ on $\cX$.

\textbf{Fourth step.}
We now prove that the distributions of the regrets of $\cXstrat$ under $M_{x_j, \eta}$
and of $\Kstrat$ under $\nu^j$ are equal for all $j = 1,\ldots,K$.
On the one hand, the expectations of rewards associated with the best arms equal $1/2+\eta$ under
the two environments.
On the other hand, one can prove by induction that the sequences $Y_1,Y_2,\ldots$ and $Y'_1,Y'_2,\ldots$
have the same distribution. (In the argument below, conditioning
by empty sequences means no conditioning. This will be the case only for $t=1$.)

For all $t \geq 1$, we denote
\[
X'_t = \cXstrat_t \bigl( Y'_1,\ldots,Y'_{t-1} \bigr)\,.
\]
Under $\nu^j$ and given $Y'_1,\ldots,Y'_{t-1}$, the distribution of $Y'_t$
is obtained by definition as the two-step random draw of $I'_t \sim T(X'_t)$
and then, conditionally on this first draw,
$Y'_t \sim \nu^j_{I'_t}$. By the above results, the distribution of $Y'_t$
is thus a Bernoulli distribution with parameter $\mu_j(X'_t)$.

At the same time,
under $M_{x_j, \eta}$ and given $Y_1,\ldots,Y_{t-1}$,
the choice of
\[
X_t = \cXstrat_t \bigl( Y_1,\ldots,Y_{t-1} \bigr)
\]
yields a reward $Y_t$ distributed according to
$M_{x_j, \eta}(X_t)$, that is, by definition and with the notations above,
a Bernoulli distribution with parameter
$f_{x_j,\eta}(X_t) = \mu_j(X_t)$.

The argument is concluded by induction and by
using the fact that rewards are drawn independently in each round.

\textbf{Fifth step.}
We summarize what we proved so far.
For $\eta \in (0,1/2)$, provided that there exist $K \geq 2$ disjoint balls $\cB(x_j,\eta)$ in $\cX$,
we could construct, for all strategies $\cXstrat$ for the $\cX$--armed bandit problem,
a strategy $\Kstrat$ for the $\{1,\ldots,K+1\}$--armed bandit problem such that, for all
$j = 1,\ldots,K$ and all $n \geq 1$,
\[
\E_{M_{x_j,\eta}} \bigl[ R_n(\cXstrat) \bigr] = \E_{\nu^j} \bigl[ R_n(\Kstrat) \bigr]\,.
\]

But by the assumption on the packing dimension, there exists $c > 0$ such that for all $\eta < 1/2$,
the choice of $K_\eta = \lceil c\,\eta^{-D} \rceil \geq 2$ guarantees the existence
of such $K_\eta$ disjoint balls.
Substituting this value, and using the results of the first and fourth steps of the proof,
we get
\[
\max_{j=1,\ldots,K_\eta} \,\, \E_{M_{x_j,\eta}} \bigl[ R_n(\cXstrat) \bigr] =
\max_{j=1,\ldots,K_\eta} \,\, \E_{\nu^j} \bigl[ R_n(\Kstrat) \bigr]
\geq n \eta \left(1 - \frac{1}{K_\eta} - \eta \sqrt{4 \ln (4/3)} \sqrt{\frac{n}{K_\eta}} \right)\,.
\]
The proof is concluded by noting that
\begin{itemize}
\item the left-hand side is smaller than the maximal regret \wrt\ all weak Lipschitz
environments;
\item the right-hand side can be lower bounded and then
optimized over $\eta < 1/2$ in the following way.
\end{itemize}
By definition of $K_\eta$ and the fact that it is larger than 2, one has
\begin{multline}
\nonumber
n \eta \left(1 - \frac{1}{K_\eta} - \eta \sqrt{4 \ln (4/3)} \sqrt{\frac{n}{K_\eta}} \right) \\
\geq n \eta \left(1 - \frac{1}{2} - \eta \sqrt{4 \ln (4/3)} \sqrt{\frac{n}{c \eta^{-D}}} \right)
= n \eta \left(\frac{1}{2} - C \, \eta^{1+D/2} \sqrt{n} \right)
\end{multline}
where $C = \sqrt{\bigl(4 \ln (4/3) \bigr) \, \big/ \, c}$. We can optimize the final lower bound over $\eta \in [0,\,1/2]$.

To that end, we choose, for instance, $\eta$ such that $C \, \eta^{1+D/2} \sqrt{n} = 1/4$, that is,
\[
\eta = \left( \frac{1}{4 C \sqrt{n}} \right)^{1/(1+D/2)} =
\left( \frac{1}{4 C} \right)^{1/(1+D/2)} \, n^{-1/(D+2)}\,.
\]
This gives the lower bound
\[
\frac{1}{4} \, \left( \frac{1}{4 C} \right)^{1/(1+D/2)} \, n^{1-1/(D+2)}
= \underbrace{\frac{1}{4} \, \left( \frac{1}{4 C} \right)^{1/(1+D/2)}}_{= \,\, \gamma(c,D)} \,\, n^{(D+1)/(D+2)}\,.
\]
To ensure that this choice of $\eta$ is valid we need to show that $\eta \leq 1/2$.
Since the latter requirement is equivalent to
\[
n \geq \left( 2 \left( \frac{1}{4 C} \right)^{1/(1+D/2)} \right)^{D+2}\,,
\]
it suffices to choose the right-hand side to be $N(c,D)$;
we then get that $\eta\leq 1/2$ indeed holds for all $n \geq N(c,D)$, thus concluding the proof of the theorem.
\end{proof}

\bibliography{biblio}
\end{document}